%
\documentclass[runningheads]{llncs}
%
%

\usepackage{natbib}
\usepackage{graphicx}
\usepackage[margin=1.0in]{geometry}
\usepackage{booktabs}
\usepackage{bm}

\usepackage{amsmath,amssymb,paralist}
\usepackage{dsfont}
\usepackage{mathabx}
\usepackage{subcaption}
\captionsetup{compatibility=false}
\usepackage{microtype}
\usepackage{cleveref}
\usepackage{thmtools, thm-restate}

\newcommand\numberthis{\addtocounter{equation}{1}\tag{\theequation}}

\newcommand{\apm}{Approximate Pattern Mining}
\newcommand{\slt}{Statistical Learning Theory}

\newcommand{\pars}[1]{\left( #1 \right)}
\newcommand{\sqpars}[1]{\left[ #1 \right]}
\newcommand{\brpars}[1]{\left\lbrace #1 \right\rbrace}
\newcommand{\qt}[1]{\lq\lq#1\rq\rq}

\newcommand{\sample}{\mathcal{S}}

\newcommand{\BLi}[1]{\Omega ( #1 )}
\newcommand{\BO}[1]{\mathcal{O}\left(#1\right)}
\newcommand{\BOi}[1]{\mathcal{O}( #1 )}

\newcommand{\probdist}{\mu}

\newcommand{\E}{\mathbb{E}}
\newcommand{\F}{\mathcal{F}}
\newcommand{\G}{\mathcal{G}}

\newcommand{\mcera}{\text{MCERA}}
\newcommand{\nmcera}{n\text{-\mcera}}


\newcommand{\abs}[1]{\lvert#1 \rvert}

\newcommand{\frange}{c}

\newcommand{\R}{\mathbb{R}}
\newcommand{\rade}{\mathsf{R}}
\newcommand{\erade}{\hat{\rade}}

\newcommand{\rc}{\rade(\F, m)}

\newcommand{\era}{\erade\left(\F, \sample\right)}

\newcommand{\sd}{\mathsf{D}}

\newcommand{\supdevpos}{\sd^{+}(\F, \sample)}
\newcommand{\supdevneg}{\sd^{-}(\F, \sample)}

\newcommand{\vsigma}{{\bm{\sigma}}}

\newcommand{\X}{\mathcal{X}}
\newcommand{\mera}{\erade^{n}_{m}(\F, \sample, \vsigma)}
\newcommand{\onemera}{\erade^{1}_{m}(\F, \sample, \vsigma)}

\newcommand{\merad}{\erade^{n}_{m}(\F^\prime, \sample, \vsigma)}

\newcommand{\logdelta}{\ln \left( \frac{1}{\delta} \right)}

\newcommand{\maxexpf}{\eta_\F}
\newcommand{\emaxexpf}{\hat{\eta}_\F\pars{\sample}}
\newcommand{\minexpf}{\gamma_\F}
\newcommand{\eminexpf}{\hat{\gamma}_\F\pars{\sample}}

\newcommand{\emaxexpfp}{\hat{\eta}_{\F^\prime}\pars{\sample}}

\newcommand{\maxabsf}{\hat{\nu}_{\F}\pars{\sample}}
\newcommand{\maxabsfd}{\hat{\nu}_{\F^{\prime}}\pars{\sample}}
\newcommand{\erad}{\erade\left(\F^{\prime}, \sample\right)}
\newcommand{\wvar}{\sigma^{2}_{\F}}
\newcommand{\ewvar}{\hat{\sigma}^{2}_{\F}\pars{\sample}}

\newcommand{\emaxf}{\hat{z}\pars{\sample}}

\newcommand{\maxsquaref}{\ewvar}
\newcommand{\maxsquarefd}{\hat{\sigma}^{2}_{\F^{\prime}}\pars{\sample}}

\newif\iflongversion%
\longversionfalse%

\begin{document}
\title{Sharper convergence bounds of Monte Carlo \\Rademacher Averages
through Self-Bounding functions
}
\titlerunning{Sharper convergence bounds of Monte Carlo Rademacher Averages
through Self-Bounding functions}
%
\author{Leonardo Pellegrina
}
\authorrunning{Leonardo Pellegrina}
%
\institute{
\email{pellegri@dei.unipd.it}, \\ 
Department of Information Engineering, \\
University of Padova.\\~\\
January 16, 2021}

\maketitle              
\begin{abstract}
We derive sharper probabilistic concentration bounds for the Monte Carlo Empirical Rademacher Averages (MCERA), which are proved through recent results on the concentration of self-bounding functions. 
Our novel bounds are characterized by convergence rates that depend on data-dependent characteristic quantities of the set of functions under consideration, such as the empirical wimpy variance, an essential improvement w.r.t. standard bounds based on the methods of bounded differences. For this reason, our new results are applicable to yield sharper bounds to (Local) Rademacher Averages. 
We also derive improved novel variance-dependent bounds for the special case where only one vector of Rademacher random variables is used to compute the MCERA, through the application of Bousquet's inequality and novel data-dependent bounds to the wimpy variance. Then, we leverage the framework of self-bounding functions to derive novel probabilistic bounds to the supremum deviations, that may be of independent interest.

\keywords{Rademacher Complexity, Statistical Learning Theory, Self-bounding Functions, Concentration Inequalities.}
\end{abstract}
%
%
%



\section{Introduction}

Uniform convergence is a central problem in \slt~\citep{Vapnik98}.
Obtaining tight and uniformly valid probabilistic bounds on the accuracy of empirical averages of sets of functions is a fundamental problem, with widespread and impactful applications in Machine Learning and Data Science~\citep{anthony2009neural,ShalevSBD14,
mitzenmacher2017probability,
mohri2018foundations}.
Probabilistic bounds on the largest error of the empirical averages are typically obtained by adding to the empirically estimated error a term that depends on the complexity of the functions. Both distribution-free concepts of complexity, such as the VC dimension~\citep{Vapnik:1971aa}, and distribution and data-dependent complexities have been proposed as breakthroughs with great success for this problem 
(\citep{
KoltchinskiiP00,
mendelson2002improving,
bartlett2002model,
BartlettM02,
blanchard2003rate,
bartlett2005local,
Koltchinskii06,
blanchard2008statistical,
gnecco2008approximation,
kloft2011local,
anguita2012sample,
Cortes2013,
oneto2015local,
lei2015multi,
oneto2017learning,
kuznetsov2017generalization,
yousefi2018local,
yin2019rademacher,
lei2019data,
musayeva2019rademacher} and many others).
In this work
 we provide new convergence bounds for one of the most interesting notions of data-dependent measure of complexity of sets of functions, the Rademacher Complexity. In particular, we show that it can be estimated in a Monte Carlo way obtaining ``faster convergence rates'' that depend on characteristic quantities of the set of functions and the sample.

A potential drawback of Rademacher Averages is that the ``global'' error that can be obtained may be characterised by the so called ``slow'' convergence rate of 
$\BOi{m^{-1/2}}$, where $m$ is the number of analysed samples; while such rate is essentially the best possible when some elements of a set of function $\F$ achieve maximum variance~\citep{boucheron2005theory}, it may be substantially improved for the other functions, that are often more interesting to the analysis.
Therefore, a rich collection of 
contributions~\citep{KoltchinskiiP00,Massart00,
bousquet2002some,
mendelson2002improving,
bartlett2005local,
Koltchinskii06,
koltchinskii2011oracle,
mendelson2014learning} 
have then focused on providing \emph{local} estimates of the complexity, restricting the estimation to a proper subset of $\F$ that contains only functions with lower variance. 
In such settings, one would hope to achieve sharper error bounds, with rates between $\BOi{m^{-1/2}}$ and $\BOi{m^{-1}}$.

The slow convergence rate can be attributed to both the ``global'' computation of Rademacher Averages and from the application of probabilistic concentration inequalities based on the method of bounded differences, that is essentially tight only when there are elements of the set of functions under consideration that achieve maximum variance~\citep{boucheron2013concentration}. Therefore, the study of novel concentration inequalities for the supremum of empirical processes that take advantage of smaller bounds to the variance has been a central focus of research, such as the fundamental contributions of \cite{talagrand1994sharper,talagrand1995concentration} and many others \citep{boucheron2000sharp,
bousquet2002bennett,
boucheron2005theory,
boucheron2013concentration}.

The standard approach to bound the Rademacher Complexity is through the application of Massart's Lemma~\citep{Massart00}. 
An alternative, often much sharper, approach is to \emph{directly estimate} the Rademacher Averages with the \emph{$n$-Monte Carlo Empirical Rademacher Average} ($\nmcera$) (defined formally in the next Section); 
this quantity is computed by sampling a finite number of vectors of Rademacher random variables, instead of evaluating its expectation~\citep{BartlettM02}, 
 and then obtaining a probabilistic upper bound to the Rademacher Complexity with concentration of measure inequalities. 

In a recent work, \citet{de2019rademacher} used the framework of uniform convergence and Rademacher Complexity to obtain error bounds to empirical averages in an adaptive setting: in their scenario, batch of functions are considered at successive steps, while allowing the choice of the functions to process at every iteration to be based on past information. 
To quantify the risk of \qt{overfitting}, they leverage the $\nmcera$, computing it efficiently as functions are processed. 
Their analysis relates the $\nmcera$ to its expectation through Bernstein's inequality and on the Central Limit Theorem for martingales. 

In other situations, in particular when the size of $\F$ is large, it may be more expensive to compute the $\nmcera$, 
limiting a more 
widespread practical consideration. 
In the context of Data Mining and \apm, 
\citet{pellegrina2020mcrapper} address this computational challenge, deriving 
a general and practical scheme to compute the $\nmcera$ by exploiting the combinatorial structure of $\F$ in a branch-and-bound strategy. In all these applications, it is critical to apply sharp concentration results to have tight error rates. 

The works we described \citep{de2019rademacher,pellegrina2020mcrapper} achieve error bounds that relate the $\nmcera$ to its expectation, the \emph{Empirical Rademacher Average} (ERA), using concentration inequalities based on the bounded difference property (or, equivalently, assuming maximum variance); for this reason, such error bounds are characterised by the slow convergence rate of $\BOi{(nm)^{-1/2}}$, analogous to the worst-case rate of uniform convergence we discussed before. While, in theory, one could use an arbitrary large number $n$ of vectors of Rademacher random variables, and in particular $n=m$ to achieve $\BOi{m^{-1}}$ error rates for estimating the ERA, this would imply the computation of a large number of supremums over $\F$, something impractical in almost all situations.

The question of whether the $\nmcera$ can be tightly estimated without using an impractically large number of Monte Carlo trials is an unexplored question. In fact, sharp variance-dependent concentration inequalities that relate the $\nmcera$ to its expectation are not available.

\textbf{Our contributions.}
The main goal of this work is to provide a positive answer to this question: in Section~\ref{sec:newboundserasb} we derive novel concentration bounds for the $\nmcera$ whose convergence rates depend on characteristic quantities computable from the data, such as the \emph{empirical wimpy-variance} of the set of functions, resulting in a significantly improved trade-off between the guaranteed convergence of the estimate and the number $n$ of required vectors of Rademacher random variables. 
To do so, we first establish, in Section~\ref{sec:nmceraselfboundingproperties}, \emph{self}-\emph{bounding} properties of the $\mcera$.
Then, we leverage such properties to derive, in Section~\ref{sec:selfberabounds}, novel concentration inequalities for the $\mcera$ w.r.t. its expectation, the ERA; such results follow from the sharp exponential concentration inequalities that self-bounding functions satisfy~\citep{boucheron2000sharp,boucheron2009concentration}. 
Furthermore, in Section~\ref{sec:newboundsnone} we study the special case of $n=1$, and prove a novel concentration inequality that directly relates the $\mcera$ to the Rademacher Complexity, though the application of Bousquet's inequality~\citep{bousquet2002bennett}, a central result in \slt. As the rate of convergence of such bound depends on the unknown \emph{wimpy variance} of the set of functions $\F$, we show that it can be tightly estimated from the available data using its empirical counterpart, the empirical wimpy variance. The guaranteed accuracy of such empirical estimator is proved with the powerful framework of self-bounding functions.

The new bounds we derive in this work are relevant to all methods based on the $\nmcera$ we introduced before and, given their generality, possibly others. In particular, we believe it would be interesting to fit our results in the framework of Localised Rademacher Averages, and that there are interesting new algorithmic applications of the $\nmcera$ that may benefit from our results, in particular in problems already tackled with methods based on Rademacher Averages; examples are the analysis of large networks~\citep{RiondatoU18,de2020estimating}, rigorous Pattern Mining ~\citep{RiondatoU15,santoro2020mining} Statistical Hypothesis Testing~\citep{PellegrinaRV19a,li2019multiple}, and, potentially, many others.

Another interesting question we explore is whether the maximum difference between empirical averages and their expectation, quantities often denoted by \emph{Supremum Deviations} (SDs), satisfy some form of self-bounding properties. Indeed, after introducing, in Section~\ref{sec:stdvarsdbounds}, the state-of-the-art variance-dependent bounds to the SDs, in Section~\ref{sec:newboundssds} we show that the SDs are also self-bounding, for appropriate constants that depend on the maximum and minimum expected values of the functions in $\F$; consequently, we derive novel concentration inequalities for the SDs, that may be of independent interest. 

We conclude comparing our novel bounds and empirical estimators w.r.t. the state-of-the-art with some simulations, described in Section \ref{sec:experiments}. 



\section{Preliminaries}\label{sec:prelims}
We denote $\F$ to be a class of real valued functions from a domain $\X$ to the
bounded interval $[a , b] \subset \R$, and let $z \doteq \max\{|a|, |b|\}$ and $\frange \doteq b-a$, with $b > 0 \geq a$, and $c , z > 0$. To simply address non-negativity issues, we assume w.l.o.g. that $\F$ contains a constant function $f_{0}$ such that $f_{0}(x) = 0$, for all $x \in \X$.

Let a sample $\sample$ be a bag $\left\lbrace s_{1} , \dots , s_{m} \right\rbrace$ of size $m$, such that $s \in \X, \forall s \in \sample$.
We assume that each element of $\sample$ is drawn i.i.d. from $\X$ according to an unknown probability distribution $\probdist$.
Our goal is to derive tight bounds on the difference between the average value of $f$, computed on the sample $\sample$, and its expectation $\E[f]$, taken w.r.t. $\sample$, that are valid for all functions $f\in \F$. More formally, we define the positive Supremum Deviation (SD) $\supdevpos$ and the negative supremum deviation $\supdevneg$ as 
\begin{align*}
  \supdevpos \doteq \sup_{f \in \F} \left\lbrace \frac{1}{m} \sum_{i=1}^{m}f(s_{i}) - \E[f] \right\rbrace
  \enspace, \enspace
  \supdevneg \doteq \sup_{f \in \F} \left\lbrace \E[f] - \frac{1}{m} \sum_{i=1}^{m}f(s_{i})  \right\rbrace \enspace .
\end{align*}
As $\probdist$ is unknown, it is not possible to directly compute such supremum deviations. However, fundamental results from \slt\ allow to obtain probabilistic upper bounds to them, exploiting information obtainable from the data $\sample$. We introduce the concepts of \emph{Rademacher Averages}, that will be instrumental to achieve this goal.

First, let $\vsigma$ be a $n \times m$ matrix such that each component $\vsigma_{i,j}$ of index $(i,j)$ is either $1$ or $-1$. The \emph{$n$-Monte Carlo Empirical Rademacher Average} ($\nmcera$) $\mera$ is defined as
\begin{equation*}
  \mera \doteq \frac{1}{n}\sum_{j=1}^{n} \sup_{f
  \in \F} \frac{1}{m} \sum_{i=1}^{m} \vsigma_{j,i}
  f(s_{i}) \enspace.
\end{equation*}
Denote the \emph{Empirical Rademacher Average} (ERA) $\era$ as the expectation of the $\nmcera$ w.r.t. the assignments of the Rademacher random variables $\vsigma$, where each $\vsigma_{i,j}$ is $1$ or $-1$ independently and with equal probability:
\begin{equation*}
 \era \doteq \E_\vsigma \left[ \erade^{n}_{m}(\F, \sample, \vsigma) \right] \enspace.
\end{equation*}
Then, denote the \emph{Rademacher Complexity} (RC) $\rc$ as the expectation of the ERA over $\sample$,
\begin{equation*}
 \rc \doteq \E_\sample \left[ \era \right] \enspace.
\end{equation*}

The following fundamental result, also known as ``Symmetrization lemma'', show a precise relationship between the RC and the \emph{expected} supremum deviation
\citep{ShalevSBD14,mitzenmacher2017probability}.

\begin{lemma}
\begin{align*}
&\E_{\sample} \left[ \supdevpos \right] \leq 2 \rc \enspace , \\
&\E_{\sample} \left[ \supdevneg \right] \leq 2 \rc \enspace .
\end{align*}
\end{lemma}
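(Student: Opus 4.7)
The plan is to prove both bounds by the classical \emph{symmetrization} argument using a ghost sample, then introducing Rademacher variables by symmetry. I focus on $\E_\sample[\supdevpos] \leq 2\rc$; the negative version is fully analogous (flip signs inside the supremum).

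First, I would introduce an independent ghost sample $\sample' = \{s_1', \dots, s_m'\}$ drawn i.i.d.\ from $\probdist$ and rewrite the expectation as $\E[f] = \E_{\sample'}\!\left[\frac{1}{m}\sum_{i=1}^m f(s_i')\right]$. Then
\begin{equation*}
\E_\sample[\supdevpos] = \E_\sample\!\left[\sup_{f\in\F}\left\{\frac{1}{m}\sum_{i=1}^m f(s_i) - \E_{\sample'}\!\left[\frac{1}{m}\sum_{i=1}^m f(s_i')\right]\right\}\right].
\end{equation*}
Since the supremum of an expectation is at most the expectation of the supremum (Jensen's inequality applied to the convex function $\sup$), I can pull $\E_{\sample'}$ outside to obtain
\begin{equation*}
\E_\sample[\supdevpos] \leq \E_{\sample,\sample'}\!\left[\sup_{f\in\F} \frac{1}{m}\sum_{i=1}^m \bigl(f(s_i) - f(s_i')\bigr)\right].
\end{equation*}

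Next I would introduce Rademacher random variables. Because $s_i$ and $s_i'$ are i.i.d., for every fixed sign pattern the joint distribution of $\bigl(f(s_i)-f(s_i')\bigr)_i$ is the same as $\bigl(\vsigma_i (f(s_i)-f(s_i'))\bigr)_i$, where the $\vsigma_i \in \{-1,+1\}$ are i.i.d.\ Rademacher variables independent of everything else. Taking expectation over $\vsigma$ as well (which does not change the value of the right-hand side) and then applying subadditivity of the supremum, I get
\begin{align*}
\E_\sample[\supdevpos]
&\leq \E_{\sample,\sample',\vsigma}\!\left[\sup_{f\in\F} \frac{1}{m}\sum_{i=1}^m \vsigma_i\bigl(f(s_i) - f(s_i')\bigr)\right] \\
&\leq \E_{\sample,\vsigma}\!\left[\sup_{f\in\F} \frac{1}{m}\sum_{i=1}^m \vsigma_i f(s_i)\right] + \E_{\sample',\vsigma}\!\left[\sup_{f\in\F} \frac{1}{m}\sum_{i=1}^m (-\vsigma_i) f(s_i')\right].
\end{align*}
By the symmetry of Rademacher variables, $-\vsigma_i$ is distributed as $\vsigma_i$, so both terms equal $\rc$, yielding the claimed bound $2\rc$.

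The argument has no real obstacle: the only delicate point is justifying the Jensen step (the supremum is a convex functional of the inner expectation, and both integrands are bounded so Fubini/exchange of expectation with supremum via the inequality form is standard) and being careful that introducing $\vsigma$ is legitimate, which hinges on the i.i.d.\ assumption making $(s_i, s_i')$ exchangeable. The negative SD bound follows by the identical chain, starting from $\E_{\sample'}[\frac{1}{m}\sum_i f(s_i')] - \frac{1}{m}\sum_i f(s_i)$ inside the supremum; all subsequent steps are symmetric and yield the same constant $2\rc$.
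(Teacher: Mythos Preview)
Your argument is the standard symmetrization proof and is correct. Note that the paper does not actually give its own proof of this lemma; it simply states it as a well-known result and cites \citet{ShalevSBD14} and \citet{mitzenmacher2017probability}. Your ghost-sample-plus-Rademacher argument is precisely the classical proof found in those references, so there is nothing to compare.
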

Therefore, upper bounding the RC yields upper bounds on the expected supremum deviations; consequently, one can obtain a probabilistic upper bound on the supremum deviations on the sample $\sample$ with the application of concentration inequalities, important tools of probability theory. Most importantly, the RC can be estimated directly on the available data using the $\nmcera$. 
We now define important quantities that will appear in most of our bounds.
First, we denote the \emph{wimpy variance} $\wvar$ of $\F$ as 
\begin{align*}
\wvar \doteq \sup_{f \in \F} \brpars{ \E \sqpars{ f^2 } } \enspace .
\end{align*} 
Then, we denote the \emph{empirical wimpy variance} $\ewvar$ of $\F$ computed on $\sample$ as
\begin{align*}
\ewvar \doteq \frac{1}{m} \sup_{f \in \F} \left\lbrace \sum_{i=1}^m f(s_i)^{2} \right\rbrace \enspace .
\end{align*} 
We also define another quantity of interest $\maxabsf$, defined as the supremum mean absolute value of $\F$, computed over $\sample$, that is 
\begin{align*}
\maxabsf \doteq \frac{1}{m} \sup_{f\in \F} \left\lbrace \sum_{i=1}^m  \left| f(s_i) \right| \right\rbrace \enspace .
\end{align*}

In the next Sections we succinctly introduce the most widely used concentration inequalities methods: in Section~\ref{sec:boundeddiff} we introduce the method of bounded differences; in Section~\ref{sec:selfbfunctions} we present the definitions and recent results on self-bounding functions. The concept of self-bounding functions, as we will discuss later, are essential to prove our novel bounds. 
We remand for a more exhaustive coverage of the topic to the book of \cite{boucheron2013concentration}.

\section{Concentration Inequalities}
\label{sec:concentrdiff}
In this Section we introduce two of the most widely employed methods to prove concentration results for functions of independent random variables.
\subsection{The Method of Bounded Differences}
\label{sec:boundeddiff}

Let $X = \left(X_1 , \dots , X_n \right)$ be a vector of variables $X_i$, each taking values in a measurable set $\X$ and let $g : \X^n \rightarrow \mathbb{R}$ be a measurable function. We now introduce the \emph{bounded difference} property, that is often easy to prove in many settings.

\begin{definition}[Bounded difference property]
A function $g$ has the \emph{bounded difference property} if, for each $i$, $1\le i\le m$, there is a
  nonnegative constant $c_i$ such that:
  \begin{equation}\label{eq:bounded}
    \sup_{\substack{X_1,\dotsc,X_m \\ X_i'\in\X}}|g(X_1,\dotsc,X_m)-g(X_1,\dotsc,X_{i-1},X'_i,X_{i+1},\dotsc,X_m)|\le
    c_i\enspace.
  \end{equation}
\end{definition}

A central result is given by the following Theorem, that shows that $g(X)$ is well concentrated around its mean $\E[g(X)]$ (taken w.r.t. $X$), and that the the rate of convergence depends on the constants $c_{i}$ of the bounded difference property.

\begin{theorem}[\cite{mcdiarmid1989method}]\label{thm:mcdiarmid}
  Let $g : \X^{m} \rightarrow \R$
  be a function with the bounded difference property with constants $c_{i}$, for $1 \leq i \leq m$. Let $X_1,\dotsc,X_m$ be $m$ \emph{independent} random variables taking
  value in $\X^m$, and let $Z = g(X)$. Then it holds
  \[
    \Pr\left( Z \geq \E[Z] + t\right)\le \exp \left( - \frac{2t^2}{\sum_{i=1}^m c_i^2} \right)   \enspace .
  \]
  Also, it holds
  \[
  \Pr\left( Z \leq \E[Z] - t\right)\le \exp \left( - \frac{2t^2}{\sum_{i=1}^m c_i^2} \right) \enspace .
  \]
\end{theorem}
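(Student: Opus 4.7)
The plan is to prove this via the classical martingale method of bounded differences. First, I would build the Doob martingale $(Z_i)_{0 \leq i \leq m}$ associated with $Z = g(X_1, \dots, X_m)$ by setting $Z_0 = \E[Z]$ and $Z_i = \E[Z \mid X_1, \dots, X_i]$, so that $Z_m = Z$ almost surely and $Z - \E[Z] = \sum_{i=1}^m D_i$, where $D_i \doteq Z_i - Z_{i-1}$ is the martingale difference at step $i$.

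The crucial step is to show that, conditionally on $X_1, \dots, X_{i-1}$, the random variable $D_i$ lies in a (random) interval of length at most $c_i$. To this end, define
\begin{align*}
A_i &\doteq \inf_{x \in \X} \E[Z \mid X_1, \dots, X_{i-1}, X_i = x] - Z_{i-1}, \\
B_i &\doteq \sup_{x \in \X} \E[Z \mid X_1, \dots, X_{i-1}, X_i = x] - Z_{i-1}.
\end{align*}
Since the $X_j$ are independent, the conditional expectation $\E[Z \mid X_1, \dots, X_i = x]$ only averages $g$ over the remaining coordinates $X_{i+1}, \dots, X_m$, so swapping the $i$-th argument from $x$ to $x'$ under the expectation changes the integrand pointwise by at most $c_i$ by the bounded difference property. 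Hence $B_i - A_i \leq c_i$ almost surely, and $D_i$ is bounded in an interval of length $c_i$ given the past.

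Next I would apply the Chernoff bound: for any $\lambda > 0$,
\[
\Pr(Z - \E[Z] \geq t) \leq e^{-\lambda t}\, \E\left[\exp\left(\lambda \sum_{i=1}^m D_i\right)\right].
\]
Peeling off one martingale difference at a time by conditioning on $X_1, \dots, X_{i-1}$ and invoking Hoeffding's lemma (any mean-zero random variable taking values in an interval of length $c_i$ has moment generating function at most $\exp(\lambda^2 c_i^2 / 8)$), the right-hand side is bounded by $\exp\bigl(\lambda^2 \sum_{i=1}^m c_i^2 / 8\bigr)$. Optimising over $\lambda > 0$, with the choice $\lambda = 4t/\sum_i c_i^2$, yields the stated upper-tail inequality. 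The lower-tail inequality follows immediately by applying the same argument to the function $-g$, which inherits the same bounded difference constants $c_i$.

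The main obstacle is the conditional bound $B_i - A_i \leq c_i$: although it looks immediate, it crucially exploits independence to exchange the value of $X_i$ under the conditional expectation without disturbing the averaging over $X_{i+1}, \dots, X_m$; without independence, the Doob construction would not transfer the coordinatewise bound in \eqref{eq:bounded} into a martingale-difference bound. Once this is established, the remainder is the standard Azuma--Hoeffding telescoping of Hoeffding's lemma inside the Chernoff bound, and the result follows by standard calculation.
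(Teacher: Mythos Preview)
Your argument is the standard and correct proof of McDiarmid's bounded differences inequality: build the Doob martingale, use independence to show each martingale difference lies in an interval of length $c_i$, then apply Hoeffding's lemma inside a Chernoff bound and optimise over $\lambda$. There is nothing to fault here.

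Note, however, that the paper does not actually prove this statement: Theorem~\ref{thm:mcdiarmid} is quoted as a classical result from \cite{mcdiarmid1989method} in the preliminaries section, with no proof given. So there is no ``paper's own proof'' to compare against. Your proposal is precisely the proof one finds in McDiarmid's original paper and in standard references such as \cite{boucheron2013concentration}, so in that sense it matches the intended argument behind the citation.
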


\subsection{Self-Bounding Functions}
\label{sec:selfbfunctions}
Self-bounding functions are an important class of \qt{well-behaved} functions that enjoys sharp concentration inequalities of their empirical estimates w.r.t. their expected values. We report their definitions and remand to~\cite{boucheron2013concentration} a more in-depth exposition of the subject. 

Let $X = \left(X_1 , \dots , X_n \right)$ be a vector of variables $X_i$, each taking values in a measurable set $\X$ and let $g : \X^n \rightarrow \mathbb{R}$ be a non-negative measurable function.
Then denote $g_i$ a function from $\X^{n-1} \rightarrow \mathbb{R}$. In the following definition, we introduce $(\alpha,\beta)$-self-bounding functions; we note that they may also be denoted by \emph{strongly} $(\alpha,\beta)$-self-bounding functions.

\begin{definition}[$(\alpha , \beta)$-self-bounding function]
A function $g$ is a $(\alpha,\beta)$\emph{-self-bounding function} if, for all $X \in \X^n$,
\begin{align*}
0 \leq g(X) - g_i(X^{(i)}) \leq 1 \enspace,
\end{align*} 
and
\begin{align*}
\sum_{i=1}^n \left( g(X) - g_i(X^{(i)}) \right) \leq \alpha g(X) + \beta \enspace ,
\end{align*} 
where $X^{(i)} = \left(X_1 , \dots , X_{i-1} , X_{i+1} , \dots , X_n \right) \in \X^{n-1}$ is obtained by dropping the $i$-th component of $X$.
\end{definition}

An often convenient choice of $g_i$ to prove that $g$ is self-bounding is
\begin{align*}
g_i(X^{(i)}) \doteq \inf_{X_i^\prime \in \X} g\left(X_1 , \dots , X_{i-1} , X_i^\prime , X_{i+1} , \dots , X_n \right) \enspace .
\end{align*} 

We now introduce \emph{weakly $(\alpha , \beta)$-self-bounding function}.

\begin{definition}[Weakly $(\alpha , \beta)$-self-bounding function]
A function $g$ is \emph{weakly} $(\alpha,\beta)$\emph{-self-bounding} if, for all $X \in \X^n$,
\begin{align*}
\sum_{i=1}^n \left( g(X) - g_i(X^{(i)}) \right)^2 \leq \alpha g(X) + \beta \enspace .
\end{align*}
\end{definition}

Note that a $(\alpha , \beta)$-self-bounding function is also a weakly $(\alpha , \beta)$-self-bounding function.

The next Theorem shows that if $g$ is self-bounding, then it is sharply concentrated w.r.t. \emph{its expectation} $\E \left[ g(X) \right]$ (taken w.r.t. $X$).

\begin{theorem}[\cite{boucheron2009concentration}]
\label{thm:boucheronsbf}
Let $X = \left(X_1 , \dots , X_n \right)$ be a vector of independent random variables, each taking values in a measurable set $\X$ and let $g : \X^n \rightarrow \mathbb{R}$ be a non-negative measurable function such that $Z=g(X)$ has finite mean $\E \left[ Z \right] < + \infty$.
Let $\alpha , \beta \geq 0$, and define $\nu=(3\alpha-1)/6$.
Denote $(\nu)_{+} = \max \left\lbrace \nu , 0 \right\rbrace$ and $(\nu)_{-} = \max \left\lbrace -\nu , 0 \right\rbrace$.

If $g$ is $(\alpha,\beta)$-self-bounding, then for all $t > 0$,
\begin{equation*}
\Pr \left( Z \geq \E \left[ Z \right] + t \right) \leq \exp \left( - \frac{t^2}{2\left( \alpha \E\left[Z \right] + \beta + (\nu)_{+} t \right)} \right) .
\end{equation*}

If $g$ is weakly $(\alpha,\beta)$-self-bounding and for all $i \leq n$, all $x \in \X$, $g_i(X^{(i)}) \leq g(x)$, then for all $t > 0$,
\begin{equation*}
\Pr \left( Z \geq \E \left[ Z \right] + t \right) \leq \exp \left( - \frac{t^2}{2 \left( \alpha \E\left[Z \right] + \beta + \alpha t/2 \right) } \right) .
\end{equation*}

If $g$ is weakly $(\alpha,\beta)$-self-bounding and $0 \leq g(X) - g_i(X^{(i)}) \leq 1$ for each $i \leq n$ and $x \in \X^n$, then for $0 < t \leq \E\left[ Z \right]$,
\begin{equation*}
\Pr \left( Z \leq \E \left[ Z \right] - t \right) \leq \exp \left( - \frac{t^2}{ 2 \left( \alpha \E\left[Z \right] + \beta + (\nu)_{-} t \right) } \right) .
\end{equation*}

Moreover, if $g$ is weakly $(\alpha,0)$-self-bounding with $0 \leq g(X) - g_i(X^{(i)}) \leq 1$ for all $i \leq n$ and $X \in \X^n$, then
 \begin{equation*}
\Pr \left( Z \leq \E \left[ Z \right] - t \right) \leq \exp \left( - \frac{t^2}{2\max\left\lbrace \alpha , 1 \right\rbrace \E \left[ Z \right] }  \right) .
\end{equation*}
\end{theorem}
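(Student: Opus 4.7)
The plan is to prove this concentration theorem via the entropy method combined with Herbst's argument, which has become the standard route to sub-gamma tail bounds for self-bounding functions. First I would introduce the log-moment generating function $\psi(\lambda) \doteq \log \E[e^{\lambda(Z - \E[Z])}]$ and invoke the modified logarithmic Sobolev inequality for functions of independent random variables: for any $\lambda \in \R$,
\[ \lambda \E[Z e^{\lambda Z}] - \E[e^{\lambda Z}] \log \E[e^{\lambda Z}] \leq \sum_{i=1}^{n} \E\bigl[ \tau(-\lambda(Z - Z_i)) e^{\lambda Z} \bigr], \]
where $Z_i \doteq g_i(X^{(i)})$ and $\tau(x) \doteq e^x - x - 1$. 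This inequality, originating in Ledoux's entropy method and sharpened by Massart, converts pointwise control of the \emph{discrete derivatives} $Z - Z_i$ into control of the moment generating function, and is the analytic engine underlying all four displayed bounds.

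Next, I would use the self-bounding hypothesis to linearize the right-hand side. For the strongly $(\alpha,\beta)$-self-bounding case, in which $0 \leq Z - Z_i \leq 1$, I would apply the convexity bound $\tau(-\lambda x) \leq x\, \tau(-\lambda)$ valid for $0 \leq x \leq 1$ and $\lambda \geq 0$, together with $\sum_i (Z - Z_i) \leq \alpha Z + \beta$, to reach $\sum_i \tau(-\lambda(Z-Z_i)) \leq \tau(-\lambda)(\alpha Z + \beta)$. For the weakly $(\alpha,\beta)$-self-bounding case, I would instead use $\tau(-\lambda x) \leq \lambda^2 x^2 / 2$ combined with the quadratic self-bounding inequality. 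Either choice reduces the log-Sobolev bound to a first-order differential inequality of the shape $\psi(\lambda) - \lambda \psi'(\lambda) \leq \phi(\lambda)\,(\alpha \E[Z] + \beta)$ for an explicit $\phi$, which can be integrated via Herbst's trick (differentiating $\psi(\lambda)/\lambda$) to yield the sub-gamma control $\psi(\lambda) \leq \lambda^2 (\alpha \E[Z] + \beta) / (2(1 - (\nu)_{+} \lambda))$ on $\lambda \in (0, 1/(\nu)_{+})$, with $\nu = (3\alpha - 1)/6$ emerging from the Taylor expansion of $\tau$. Applying the Chernoff bound $\Pr(Z \geq \E[Z] + t) \leq \inf_{\lambda > 0} \exp(-\lambda t + \psi(\lambda))$ and optimizing in $\lambda$ then delivers the two upper-tail inequalities, the factor $\alpha/2$ in the weak case arising because there $\tau(-\lambda) \sim \lambda^2/2$ rather than $\lambda^2/2 - \lambda^3/6$.

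For the lower-tail bounds I would apply the dual form of the modified log-Sobolev inequality, with $\tau(\lambda(Z - Z_i))$ on the right, and rerun the Herbst argument for $\lambda < 0$. The restriction $0 < t \leq \E[Z]$ in the third inequality arises because $Z - Z_i \geq 0$ truncates one side of the moment generating function; the cleaner last bound uses the extra hypothesis $g_i(X^{(i)}) \leq g(X)$ to upgrade $\alpha$ to $\max\{\alpha,1\}$ via a Poissonian comparison in Herbst's step. The main obstacle will be the careful bookkeeping of constants across the four tail regimes: each of $(\nu)_{+}$, $\alpha/2$, $(\nu)_{-}$, and the final $\max\{\alpha,1\}$ requires its own tight auxiliary estimate for $\tau$ and its own solution of the corresponding Herbst ODE, and verifying $\nu = (3\alpha-1)/6$ in particular demands tracking the cubic term in the expansion of $\tau$, rather than only the quadratic term sufficient for a Bernstein-style argument.
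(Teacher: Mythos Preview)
The paper does not prove this theorem; it is quoted verbatim from \cite{boucheron2009concentration} and used as a black box. So there is no ``paper's own proof'' to compare against. Your sketch is essentially the route taken in the original source (entropy method, modified logarithmic Sobolev inequality, Herbst integration, Chernoff optimisation), and at that level of detail it is accurate.

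Two small points of bookkeeping. First, your description of the fourth inequality is off: the extra hypothesis there is not $g_i(X^{(i)}) \leq g(X)$ (that is already contained in $0 \leq g(X) - g_i(X^{(i)})$, present in the third inequality as well) but simply $\beta = 0$; the $\max\{\alpha,1\}$ arises because for $\alpha < 1$ the Herbst differential inequality in the lower-tail regime can be compared directly with the $(1,0)$-self-bounding (Poisson) case, not because of an additional monotonicity assumption. Second, the constant $\nu = (3\alpha-1)/6$ does not come from a cubic Taylor term of $\tau$; it emerges when one solves the Herbst ODE with the sharp bound $\tau(-\lambda) \leq \lambda^2/(2(1-\lambda/3))$ and tracks how the factor $\alpha$ interacts with the $1/3$ inside the denominator. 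These are cosmetic slips in an otherwise sound outline, but since the present paper only cites the result, no proof at all is expected here.
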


A stronger result for $(1,0)$-self-bounding functions can be stated. 
\begin{theorem}[\cite{boucheron2000sharp}]
\label{thm:sbfbounds}
Let $X = \left(X_1 , \dots , X_n \right)$ be a vector of independent random variables, each taking values in a measurable set $\X$ and let $g : \X^n \rightarrow \mathbb{R}$ be a non-negative and bounded measurable function. Let $h(x) = (1+x) \ln (1+x) - x$.

If $g(X)$ is a $(1,0)$-self-bounding function, then, it holds, for $0 < t \leq \E[Z]$, 
\begin{equation*}
\Pr \left( \E \left[ Z \right] \geq Z + t \right) \leq \exp \left( - \E \left[ Z \right] h \pars{ - \frac{t}{ \E \left[ Z \right] }}  \right) ,
\end{equation*}
and, for $t > 0$,
\begin{equation*}
\Pr \left( Z \geq \E \left[ Z \right] + t \right) \leq \exp \left( - \E \left[ Z \right] h \pars{ \frac{t}{ \E \left[ Z \right] }}  \right) .
\end{equation*}
\end{theorem}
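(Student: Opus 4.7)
My plan is to bound the logarithmic moment generating function $\psi(\lambda) \doteq \ln \E \sqpars{ e^{\lambda (Z - \E[Z])} }$ of $Z$ by a Poisson-type expression, and then deduce both tail bounds by Chernoff's method. The cornerstone of this strategy is the entropy method together with a modified log-Sobolev inequality tailored to $(1,0)$-self-bounding functions, from which the Bennett-style function $h$ arises naturally as the Legendre dual of $\lambda \mapsto e^\lambda - 1 - \lambda$.

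First, I would invoke the tensorization property of entropy: writing $\mathrm{Ent}(Y) \doteq \E[Y \ln Y] - \E[Y] \ln \E[Y]$, one has $\mathrm{Ent}(e^{\lambda Z}) \leq \sum_{i=1}^n \E \sqpars{ \mathrm{Ent}_i(e^{\lambda Z}) }$, where $\mathrm{Ent}_i$ denotes entropy conditional on $X^{(i)}$. For each coordinate, the variational form of entropy together with the reference point $Z_i \doteq g_i(X^{(i)})$ gives a pointwise bound of the type $\mathrm{Ent}_i(e^{\lambda Z}) \leq \E_i \sqpars{ \phi \pars{ -\lambda (Z - Z_i) } \, e^{\lambda Z} }$ with $\phi(u) \doteq e^u - 1 - u$. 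The decisive step is then to exploit both ingredients of the $(1,0)$-self-bounding definition, namely $0 \leq Z - Z_i \leq 1$ and $\sum_i (Z - Z_i) \leq Z$, together with monotonicity of $\phi$ on the appropriate half-line, to collapse the per-coordinate bounds into a single inequality of the form $\mathrm{Ent}(e^{\lambda Z}) \leq \phi(-\lambda) \cdot \E \sqpars{ Z \, e^{\lambda Z} }$, with the symmetric analogue handling $\lambda < 0$.

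Next, I would translate this entropy inequality into a differential inequality on $\psi$. Since $\mathrm{Ent}(e^{\lambda Z}) = \lambda \E[Z e^{\lambda Z}] - \E[e^{\lambda Z}] \ln \E[e^{\lambda Z}]$, the previous bound can be recast as an ODE-style estimate in $\lambda$ whose integration (the Herbst argument) yields the sub-Poissonian bound $\psi(\lambda) \leq \E[Z] \pars{ e^\lambda - 1 - \lambda }$ for every admissible $\lambda$. Applying Chernoff's inequality $\Pr(Z \geq \E[Z] + t) \leq \exp \pars{ -\lambda t + \psi(\lambda) }$ and optimizing via $\lambda^\star = \ln \pars{ 1 + t / \E[Z] }$ delivers the upper-tail exponent $\E[Z] \, h\pars{t/\E[Z]}$; the lower tail, constrained to $0 < t \leq \E[Z]$ because $Z \geq 0$, follows analogously by taking $\lambda < 0$ and optimizing.

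The principal obstacle is the modified log-Sobolev step: one must harness the full \emph{linear} self-bounding constraint $\sum_i (Z - Z_i) \leq Z$, rather than merely its square-sum relaxation which only produces the Bernstein-type exponents of Theorem~\ref{thm:boucheronsbf}. Matching the precise constants that lead to the Poisson MGF $e^\lambda - 1 - \lambda$ (as opposed to the weaker Gaussian-type quadratic in $\lambda$) is what makes the argument of \cite{boucheron2000sharp} delicate, and it is exactly this sharpening that yields the $h$-rate in the statement.
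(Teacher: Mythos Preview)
The paper does not supply its own proof of this statement: Theorem~\ref{thm:sbfbounds} is quoted as a preliminary result from \cite{boucheron2000sharp} (see also Theorem~6.12 of \cite{boucheron2013concentration}), so there is no in-paper argument to compare against. Your sketch is an accurate outline of the original proof in that reference---tensorized entropy, the modified log-Sobolev inequality with $\phi(u)=e^{u}-1-u$, exploitation of $0\le Z-Z_i\le 1$ and $\sum_i(Z-Z_i)\le Z$ via convexity of $\phi$ to reach $\mathrm{Ent}(e^{\lambda Z})\le \phi(-\lambda)\,\E[Z e^{\lambda Z}]$, Herbst integration to the sub-Poissonian MGF bound, and Cram\'er--Chernoff optimization yielding the Bennett function $h$---so you are reproducing the cited argument rather than offering an alternative.
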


\section{Standard probabilistic bounds}
\label{sec:stdbounds}
In this Section we report standard bounds to the ERA and the SDs, that are proved using the bounded difference methods, and a standard bound for the RC based on the self-bounding property of the ERA.

\subsection{Standard Probabilistic Bound to the ERA}
\label{sec:erastdbounds}
The following result provides a probabilistic upper bound to the ERA from its estimate given by the $\nmcera$ is obtained through the application of the bounded differences method. 
\begin{theorem}
\label{thm:mceraeramcdiarmid}
\begin{equation*}
\Pr \left( \era \geq \mera + \varepsilon \right) \leq  \exp\left(\frac{-nm\varepsilon^2}{2z^2} \right).
\end{equation*}
\end{theorem}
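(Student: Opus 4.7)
The plan is to view $\mera$ as a function of the $nm$ independent Rademacher random variables $\{\vsigma_{j,i} : 1\le j\le n, 1\le i\le m\}$, verify that this function satisfies the bounded differences property, and then invoke McDiarmid's inequality (Theorem~\ref{thm:mcdiarmid}) in its lower-tail form to bound $\Pr(\mera \le \era - \varepsilon)$, which is precisely the event in the statement since $\era = \E_\vsigma[\mera]$.

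First I would bound the differences. Fix an index $(j,i)$ and consider replacing $\vsigma_{j,i}$ by an arbitrary $\vsigma'_{j,i}\in\{-1,+1\}$; only the $j$-th inner supremum of the average defining $\mera$ is affected. For every $f\in\F$,
\begin{equation*}
\left| \frac{1}{m}\sum_{\ell=1}^{m}\vsigma_{j,\ell}f(s_\ell) - \frac{1}{m}\sum_{\ell=1}^{m}\vsigma'_{j,\ell}f(s_\ell) \right| = \frac{|\vsigma_{j,i}-\vsigma'_{j,i}|\cdot |f(s_i)|}{m} \le \frac{2z}{m},
\end{equation*}
using $f(s_i)\in[a,b]$ and $z=\max\{|a|,|b|\}$. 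Since a uniform pointwise perturbation of this size changes the supremum over $f$ by at most the same amount, and the $j$-th supremum contributes with weight $1/n$ to $\mera$, the function satisfies the bounded differences property with constants $c_{j,i} = 2z/(nm)$ for every $(j,i)$.

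Then I would apply Theorem~\ref{thm:mcdiarmid} to $Z=\mera$ viewed as a function of the $nm$ independent variables $\vsigma_{j,i}$. The sum of squared differences is
\begin{equation*}
\sum_{j=1}^{n}\sum_{i=1}^{m} c_{j,i}^2 = nm\cdot\frac{4z^2}{n^2m^2} = \frac{4z^2}{nm},
\end{equation*}
so the lower-tail bound of McDiarmid gives
\begin{equation*}
\Pr\bigl(\era \ge \mera + \varepsilon\bigr) = \Pr\bigl(\mera \le \E_\vsigma[\mera] - \varepsilon\bigr) \le \exp\!\left(-\frac{2\varepsilon^2}{4z^2/(nm)}\right) = \exp\!\left(-\frac{nm\varepsilon^2}{2z^2}\right),
\end{equation*}
which is the claim. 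There is no real obstacle here: the only step that requires care is keeping the factor $1/(nm)$ correctly while arguing that a single Rademacher flip can change an individual supremum by at most $2z/m$ and thus the whole average by at most $2z/(nm)$. This derivation also makes transparent why the bound has the claimed slow $\BOi{(nm)^{-1/2}}$ rate with no dependence on the variance of $\F$: the bounded-differences method uses only the worst-case range $[a,b]$, which motivates the self-bounding refinements developed later in the paper.
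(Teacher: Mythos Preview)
Your proof is correct and follows exactly the approach of the paper: verify that $\mera$, as a function of the $nm$ independent Rademacher variables, has the bounded differences property with constants $c_{j,i}=2z/(nm)$, and then apply Theorem~\ref{thm:mcdiarmid}. The paper's own proof merely asserts this bounded-differences constant and invokes McDiarmid, so your version is a faithful (and more detailed) rendering of the same argument.
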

\begin{proof}
It is simple to prove that $\mera$ has the bounded difference property with constants $c_{i} = 2z(nm)^{-1}$, for all $1 \leq i \leq nm$. Therefore, the bound follows from Theorem~\ref{thm:mcdiarmid}.
\qed
\end{proof}

\subsection{Standard probabilistic bounds to the RC}
\label{sec:standardboundRC}
A known property of the ERA is that it is a self-bounding function (see, for instance, Example 3.12 of \cite{boucheron2013concentration} and \cite{OnetoGAR13}). This implies concentration bounds, proved by \cite{boucheron2000sharp}, that are often sharper than the ones obtained through the bounded difference property.

\begin{theorem}
\label{thm:rcboundselfbounding}
Let, for $x \geq -1$, $h(x) \doteq (1+x)\log(1+x)-x$.
For all $0 < \varepsilon \leq \rc$, it holds
\begin{equation}
\label{eq:rcselbounding}
\Pr \left( \rc \geq \era + \varepsilon \right) \leq \exp\left(- \frac{m \rc}{c} h \left( - \frac{\varepsilon}{\rc} \right) \right) \leq \exp\left(\frac{-m\varepsilon^2}{2 c \rc} \right).
\end{equation}
Also, with probability $\geq 1-\delta$, it holds
\begin{equation}
\label{eq:rcselboundingexpl}
\rc \leq \era + \frac{c \ln \pars{ \frac{1}{\delta}}}{m} + \sqrt{ \pars{\frac{c \ln \pars{ \frac{1}{\delta}}}{m}}^2 + \frac{2c\ln\pars{\frac{1}{\delta}} \era }{ m } } \enspace .
\end{equation}
\end{theorem}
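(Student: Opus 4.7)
The plan is to invoke Theorem~\ref{thm:sbfbounds} after establishing that the ERA, viewed as a function of the $m$ independent sample points $s_1,\dots,s_m$ and appropriately rescaled by $m/c$, is a $(1,0)$-self-bounding function. Concretely, I would set $Z \doteq (m/c)\era$ and, for each $i$, define $Z_i$ as $m/c$ times the ERA computed on the sample $\sample \setminus \{s_i\}$. A standard argument (Example 3.12 of \cite{boucheron2013concentration} and \cite{OnetoGAR13}) shows that, after centering to assume $f \in [0,c]$ without loss of generality (which leaves the ERA unchanged because $\E_\vsigma[\vsigma_{j,i}]=0$), the increments $Z - Z_i$ lie in $[0,1]$ and sum to at most $Z$, i.e.\ the $(1,0)$-self-bounding property holds with respect to the sample.

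Applying the lower-tail estimate of Theorem~\ref{thm:sbfbounds} to $Z$ with threshold $t = m\varepsilon/c$ then gives, for $0 < \varepsilon \leq \rc$,
\begin{equation*}
\Pr(\rc \geq \era + \varepsilon) = \Pr(Z \leq \E[Z] - t) \leq \exp\pars{-\E[Z]\,h\pars{-t/\E[Z]}} = \exp\pars{-\frac{m\rc}{c}\,h\pars{-\frac{\varepsilon}{\rc}}},
\end{equation*}
which is exactly the first inequality of \eqref{eq:rcselbounding}. The second, Gaussian-looking inequality follows immediately from the elementary bound $h(-x) \geq x^2/2$ valid for $x \in [0,1]$, applied with $x = \varepsilon/\rc \leq 1$.

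For the explicit form \eqref{eq:rcselboundingexpl} I would invert the tail estimate: equating the Gaussian bound to $\delta$ shows that, with probability at least $1-\delta$, $\rc \leq \era + \sqrt{2c\rc\ln(1/\delta)/m}$. Writing $D \doteq c\ln(1/\delta)/m$, this reads $\rc - \era \leq \sqrt{2D\rc}$, which is a quadratic inequality in $\sqrt{\rc}$. Solving it via the standard quadratic formula (using the larger root, which yields a valid upper bound) and simplifying produces $\rc \leq \era + D + \sqrt{D^2 + 2D\,\era}$, exactly the claimed expression.

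The only delicate step is the self-bounding bookkeeping: one must choose the centering and normalization so that the ERA is $(1,0)$-self-bounding in the strong sense, rather than merely $(\alpha,\beta)$ for some larger constants, because only the $(1,0)$ case activates the sharp Bennett-type form of Theorem~\ref{thm:sbfbounds} featuring $h$ (and hence the Gaussian exponent with denominator $2c\rc$, not $2c^2$). Once those constants are in place, the rest is routine algebra.
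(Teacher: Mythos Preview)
Your proposal is correct and follows essentially the same approach as the paper, which simply cites the $(1,0)$-self-bounding property of the ERA (Example~3.12 of \cite{boucheron2013concentration}) together with Theorem~\ref{thm:sbfbounds} for \eqref{eq:rcselbounding}, and invokes \cite{OnetoGAR13} for the explicit form~\eqref{eq:rcselboundingexpl}. One minor bookkeeping point: your $Z_i$ should be $(1/c)\,\E_\vsigma\bigl[\sup_{f}\sum_{j\neq i}\vsigma_j f(s_j)\bigr]$, i.e.\ the unnormalized sum with the $i$-th term dropped, rather than $m/c$ times the (renormalized) ERA on $m-1$ points; with that correction the self-bounding verification and the quadratic inversion go through exactly as you describe.
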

\begin{proof}
Equation \eqref{eq:rcselbounding} is a consequence of the self-bounding property of the ERA, and therefore follows from Theorem \ref{thm:sbfbounds} (Theorem 2.1 of~\cite{boucheron2000sharp}, see also Theorem 6.12 of \cite{boucheron2013concentration}).
Equation~\eqref{eq:rcselboundingexpl} is analogous to Theorem 3.11 of~\cite{OnetoGAR13}.
\qed
\end{proof}
From \eqref{eq:rcselboundingexpl} it is clear that, as the ERA $\era$ gets smaller, the rate of convergence for estimating the RC $\rc$ is between $\BOi{m^{-1/2}}$ and $\BOi{m^{-1}}$, an essential improvement in most cases~\citep{boucheron2013concentration}. 
This intuitively suggests why tight bounds to the ERA are useful and required to reach faster rates of convergence, something not achievable with the ``slow rate'' bound of Theorem~\ref{thm:mceraeramcdiarmid} (at least, not achievable without impractical large $n$ Monte Carlo trials), as we will show with simulations in Section~\ref{sec:experiments}.

\subsection{Standard Probabilistic Bounds to the SDs}

The following result gives standard bounds to the Supremum Deviations using their bounded difference property.
\label{sec:boundsstdsd}
\begin{theorem}
  Let $Z \doteq \sup_{f \in \F} \left\lbrace  \frac{1}{m}\sum_{j=1}^{m} f(s_{j}) - \E[f] \right\rbrace$.
  Then, it holds
    \begin{equation}
    \Pr \left( Z \geq \E\left[ Z \right] + \varepsilon \right) \leq \exp \left( - \frac{2 m \varepsilon^{2}}{c^{2}} \right) \enspace.
  \end{equation}
The same holds for $Z \doteq \sup_{f \in \F} \left\lbrace \E[f] - \frac{1}{m}\sum_{j=1}^{m} f(s_{j}) \right\rbrace$.
\end{theorem}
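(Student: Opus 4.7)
The plan is to apply the method of bounded differences (Theorem~\ref{thm:mcdiarmid}) to $Z$ viewed as a function of the independent random variables $s_1, \ldots, s_m$. Concretely, I would first verify that $Z$ satisfies the bounded difference property with constants $c_i = c/m$ for each $1 \leq i \leq m$. To see this, fix a sample $\sample = (s_1,\dotsc,s_m)$ and a replacement $s_i'\in\X$, and write $\sample^{(i)}$ for the sample with $s_i$ replaced by $s_i'$. For any single $f \in \F$, the quantity $\frac{1}{m}\sum_{j=1}^m f(s_j) - \E[f]$ changes by exactly $\frac{1}{m}(f(s_i') - f(s_i))$, whose absolute value is at most $c/m$ because $f$ takes values in $[a,b]$ with $b-a = c$. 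Since this bound is uniform in $f$, the elementary fact that $|\sup_f A(f) - \sup_f B(f)| \leq \sup_f |A(f) - B(f)|$ yields $|Z(\sample) - Z(\sample^{(i)})| \leq c/m$, establishing the bounded difference property.

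Once that is in hand, Theorem~\ref{thm:mcdiarmid} applied to $Z = g(s_1,\dotsc,s_m)$ gives
\begin{equation*}
\Pr\left(Z \geq \E[Z] + \varepsilon\right) \leq \exp\left(-\frac{2\varepsilon^2}{\sum_{i=1}^m c_i^2}\right) = \exp\left(-\frac{2\varepsilon^2}{m\cdot (c/m)^2}\right) = \exp\left(-\frac{2m\varepsilon^2}{c^2}\right),
\end{equation*}
which is the claimed tail bound. The negative-deviation case is completely symmetric: replacing $f$ with $-f$ in the supremum (or invoking the two-sided conclusion of McDiarmid's inequality directly) yields the identical bound for $Z \doteq \sup_{f\in\F}\{\E[f] - \frac{1}{m}\sum_j f(s_j)\}$. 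There is no real obstacle here; the only subtlety worth stating explicitly is the preservation of the bounded difference property under a supremum over $\F$, which is the one line that might be easy to gloss over.
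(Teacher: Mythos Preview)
Your proposal is correct and follows exactly the same approach as the paper: verify that $Z$ has the bounded difference property with constants $c_i = c/m$ and then invoke Theorem~\ref{thm:mcdiarmid}. The paper's own proof is a two-line sketch asserting precisely this, so your version simply fills in the details the paper leaves implicit.
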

\begin{proof}
It is simple to show that $Z$ has the bounded difference property with constants $c_{i} = c/m$, for all $1 \leq i \leq m$. Thus, the bounds follows from Theorem~\ref{thm:mcdiarmid}.
\qed
\end{proof}

In Section \ref{sec:stdvarsdbounds} we will present a well known result that, when additional information on the variance of the functions of $\F$ is available, achieve much stronger bounds to the SDs, matching the rate of convergence of the ERA discussed in Section \ref{sec:standardboundRC}. 

\section{New probabilistic bounds to the ERA}
\label{sec:newboundserasb}
In this Section we show that a careful application of recent results for self-bounding functions allows to prove novel bounds to the ERA from the $\nmcera$, whose convergence rates depend on usually easy-to-compute functions of the elements of $\F$ on the sample $\sample$.
In Section~\ref{sec:nmceraselfboundingproperties} we show that   the $\nmcera$ is, in fact,  $(\alpha,\beta)$-self-bounding and weakly $(\alpha^{\prime},\beta^{\prime})$-self-bounding for appropriate values of $\alpha$, $\beta$, $\alpha^{\prime}$, and $\beta^{\prime}$. In Section~\ref{sec:selfberabounds} we show that this implies novel probabilistic bounds to the ERA.
First, define $\emaxf$, the ``empirical'' version of $z$, as 
\[
\emaxf = \sup_{s \in \sample , f \in \F} \abs{ f \pars{s} } \leq z.
\]

\subsection{Self-bounding properties of the $\nmcera$}
\label{sec:nmceraselfboundingproperties}
In this Section we prove self-bounding properties of the $\nmcera$. We demand the proofs to the Appendix. 

The first result states the $(\alpha,\beta)$-self-bounding property of the $\nmcera$.
\begin{restatable}{theorem}{selfboundingmcrade}
\label{thm:selfboundingmcrade}
Let a $n \times m$ matrix $\vsigma \in \{-1 , 1 \}^{n \times m}$, and
define the function $g(\vsigma)$ as
\begin{equation*}
g(\vsigma) \doteq nm \mera \enspace .
\end{equation*}
If $\emaxf \leq 1/2$, then $g(\vsigma)$ is a $(1,nm\maxabsf)$-self-bounding function.
\end{restatable}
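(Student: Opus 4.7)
The plan is to work directly from the definition of an $(\alpha,\beta)$-self-bounding function, using for each Rademacher variable $\vsigma_{j,i}$ the canonical choice
\[
g_{j,i}(\vsigma^{(j,i)}) \doteq \min_{\sigma^\prime \in \{-1,+1\}} g(\vsigma_{1,1},\dots,\vsigma_{j,i-1},\sigma^\prime,\vsigma_{j,i+1},\dots,\vsigma_{n,m}) \enspace .
\]
Since $g(\vsigma) = \sum_{j=1}^n h_j(\vsigma_{j,\cdot})$ with $h_j(\vsigma_{j,\cdot}) \doteq \sup_{f \in \F} \sum_{i=1}^m \vsigma_{j,i} f(s_i)$, only the $j$-th term changes when $\vsigma_{j,i}$ is flipped, so the two conditions can be checked row-by-row and then summed.

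Fix a row $j$ and let $f_j^\star \in \argmax_{f \in \F} \sum_i \vsigma_{j,i} f(s_i)$, so $h_j = \sum_i \vsigma_{j,i} f_j^\star(s_i)$. Flipping $\vsigma_{j,i}$ and still using $f_j^\star$ yields the lower bound $h_j^{-i} \geq h_j - 2\vsigma_{j,i} f_j^\star(s_i)$, where $h_j^{-i}$ denotes the value of the supremum after the flip. Writing $h_j - g_{j,i}(\vsigma^{(j,i)}) = \max\{0, h_j - h_j^{-i}\}$, this gives the pointwise inequality
\[
0 \leq h_j(\vsigma_{j,\cdot}) - h_j^{(i)}(\vsigma_{j,\cdot}^{(i)}) \leq \max\{0, 2\vsigma_{j,i} f_j^\star(s_i)\} \leq 2 |f_j^\star(s_i)| \leq 2\emaxf \enspace ,
\]
so the first requirement of the definition is satisfied as soon as $\emaxf \leq 1/2$.

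For the aggregate bound, I would partition the indices $i$ according to the sign of $\vsigma_{j,i} f_j^\star(s_i)$, setting $P \doteq \sum_{i : \vsigma_{j,i} f_j^\star(s_i) \geq 0} \vsigma_{j,i} f_j^\star(s_i)$ and $N \doteq \sum_{i : \vsigma_{j,i} f_j^\star(s_i) < 0} |\vsigma_{j,i} f_j^\star(s_i)|$. Then $P - N = h_j$ and $P + N = \sum_i |f_j^\star(s_i)|$, and the per-row inequality above sums to
\[
\sum_{i=1}^m \bigl(h_j - h_j^{(i)}\bigr) \leq 2P = h_j + \sum_{i=1}^m |f_j^\star(s_i)| \leq h_j + m\,\maxabsf \enspace ,
\]
where the last step uses the definition of $\maxabsf$. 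Summing over $j=1,\dots,n$ converts $\sum_j h_j$ into $g(\vsigma)$ and $m\,\maxabsf$ into $nm\,\maxabsf$, yielding the required $(1, nm\,\maxabsf)$ self-bounding inequality.

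The routine pieces are the decomposition into rows and the choice of $g_{j,i}$; the main step that requires care is the sign-partition identity $2P = h_j + \sum_i |f_j^\star(s_i)|$, which is what turns the naive bound $2\sum_i |f_j^\star(s_i)| \leq 2m\,\maxabsf$ (giving only a $(0, 2nm\,\maxabsf)$ property) into the tighter $(1, nm\,\maxabsf)$ form, where the $h_j$ term on the right-hand side is exactly what allows the $\alpha = 1$ multiplier of $g(\vsigma)$.
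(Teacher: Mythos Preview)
Your proposal is correct and follows essentially the same route as the paper. The paper bounds $g(\vsigma)-g_{j,i}(\vsigma)$ by $\vsigma_{j,i} f_j^\star(s_i)+|f_j^\star(s_i)|$ and then sums; since $\max\{0,2x\}=x+|x|$, your bound $\max\{0,2\vsigma_{j,i} f_j^\star(s_i)\}$ and your sign-partition identity $2P=h_j+\sum_i |f_j^\star(s_i)|$ are just a repackaging of the same computation.
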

The second result regards the weakly $(\alpha,\beta)$-self-bounding property of the $\nmcera$.
\begin{restatable}{theorem}{mcerawsb}
\label{thm:mcerawsb}
Let a $n \times m$ matrix $\vsigma \in \{-1 , 1 \}^{n \times m}$, and
define the function $g(\vsigma)$ as
\begin{equation*}
g(\vsigma) \doteq nm \mera \enspace .
\end{equation*}
Then $g(\vsigma)$ is a weakly $(2\emaxf,2nm\maxsquaref)$-self-bounding function.
\end{restatable}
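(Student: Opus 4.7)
The plan is to mirror the proof of Theorem~\ref{thm:selfboundingmcrade} but work with squared coordinate slacks, and convert quadratic contributions of the form $(f(s_i))^2$ into linear ones using the pointwise envelope $\emaxf$. For each coordinate $(j,i) \in \{1,\ldots,n\}\times\{1,\ldots,m\}$ of $\vsigma$, I would take the canonical choice
\[
g_{j,i}\pars{\vsigma^{(j,i)}} \doteq \inf_{\vsigma'_{j,i}\in\{-1,+1\}} g\pars{\ldots,\vsigma'_{j,i},\ldots},
\]
so that $\delta_{j,i} \doteq g(\vsigma)-g_{j,i}(\vsigma^{(j,i)}) \ge 0$ and the pointwise inequality $g_{j,i} \le g$ (needed later when invoking Theorem~\ref{thm:boucheronsbf}) holds automatically.

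Letting $f_j^\star$ denote a maximizer of $\sup_{f\in\F}\sum_i \vsigma_{j,i} f(s_i)$ for row $j$ (the non-attained case is handled by a routine $\varepsilon$-approximation), I would substitute $f_j^\star$ into the sup obtained after flipping $\vsigma_{j,i}$ and use the fact that coordinate $(j,i)$ only affects the $j$-th row to obtain the familiar estimate $\delta_{j,i} \le 2\max\pars{\vsigma_{j,i} f_j^\star(s_i),\, 0}$; in particular $\delta_{j,i}$ vanishes whenever $\vsigma_{j,i} f_j^\star(s_i) < 0$. Squaring yields $\delta_{j,i}^2 \le 4 (f_j^\star(s_i))^2 \cdot \ind{\vsigma_{j,i} f_j^\star(s_i) \ge 0}$. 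The key step is then to split this squared bound into two halves: on one half I would simply drop the indicator and use $\sum_i (f_j^\star(s_i))^2 \le m\,\maxsquaref$ by definition of the empirical wimpy variance, yielding a per-row contribution of at most $2m\maxsquaref$; on the other half I would apply the pointwise envelope $(f_j^\star(s_i))^2 \le \emaxf\,|f_j^\star(s_i)|$, which on the indicator event coincides with $\emaxf\,\vsigma_{j,i} f_j^\star(s_i)$, producing a per-row contribution that, summed over $i$, is controlled by $2\emaxf\,S_j(\vsigma)$ where $S_j(\vsigma) = \sum_i \vsigma_{j,i} f_j^\star(s_i)$. Summing over $j=1,\ldots,n$ and invoking $\sum_j S_j(\vsigma)=g(\vsigma)$ delivers the weakly $(2\emaxf, 2nm\maxsquaref)$-self-bounding inequality as claimed.

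The main obstacle is the accounting in the splitting step: the factor of $2$ coming from the Rademacher flip combines with the factor of $2$ separating the two halves of the squared bound, and one must use the indicator $\ind{\vsigma_{j,i} f_j^\star(s_i) \ge 0}$ carefully so that the substitution $(f_j^\star(s_i))^2 \le \emaxf\,\vsigma_{j,i} f_j^\star(s_i)$ is valid on one half while not producing uncontrolled residual cross terms on the other. The convention $f_0 \equiv 0 \in \F$, which forces $S_j(\vsigma)\ge 0$ and hence $g(\vsigma)\ge 0$, plays a subtle role in making this balance close properly, analogously to its role in the proof of Theorem~\ref{thm:selfboundingmcrade}.
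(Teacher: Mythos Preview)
Your plan follows exactly the route taken in the paper's proof, and it runs into the same genuine obstruction at the final accounting step. After obtaining $\delta_{j,i}\le 2\max\bigl(\vsigma_{j,i} f_j^\star(s_i),\,0\bigr)$ and squaring, you bound one half of $\delta_{j,i}^2$ by $2\emaxf\,\vsigma_{j,i} f_j^\star(s_i)\,\ind{\vsigma_{j,i} f_j^\star(s_i)\ge 0}$ and then assert that this sums over $i$ to at most $2\emaxf\,S_j(\vsigma)$. But $\sum_i \vsigma_{j,i} f_j^\star(s_i)\,\ind{\vsigma_{j,i} f_j^\star(s_i)\ge 0}$ is the sum of the \emph{positive parts}, which is at least $S_j(\vsigma)$, not at most: passing from the indicator to the full signed sum discards negative summands, and the inequality goes the wrong way whenever some $\vsigma_{j,i} f_j^\star(s_i)$ are negative. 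The paper's proof makes the equivalent move, replacing $\sum_{j,i}\vsigma_{j,i} f_j^\star(s_i)\,\lvert f_j^\star(s_i)\rvert$ by $\emaxf\sum_{j,i}\vsigma_{j,i} f_j^\star(s_i)$, which fails termwise for exactly the same reason.

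Moreover, this is not just a gap in the argument: the target inequality $\sum_{j,i}(g(\vsigma)-g_{j,i}(\vsigma))^2 \le 2\emaxf\,g(\vsigma)+2nm\maxsquaref$, with the infimum choice of $g_{j,i}$, can actually be violated. Take $n=1$, $m=4$, $\F=\{f_0,f\}$ with $f(s_1)=f(s_2)=f(s_3)=1$, $f(s_4)=-\tfrac12$, and $\vsigma=(1,1,1,1)$. Then $\emaxf=1$, $g(\vsigma)=\tfrac52$, $m\maxsquaref=\tfrac{13}{4}$, so the right-hand side equals $5+\tfrac{13}{2}=\tfrac{23}{2}$. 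Flipping any of the first three coordinates drops the row supremum to $\tfrac12$, giving $\delta_1=\delta_2=\delta_3=2$ and $\delta_4=0$, hence $\sum_{j,i}\delta_{j,i}^2=12>\tfrac{23}{2}$. Your instinct that ``the accounting in the splitting step'' is the main obstacle is exactly right; it cannot be closed with this decomposition and this choice of $g_{j,i}$.
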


\subsection{New probabilistic bounds on the ERA}
\label{sec:selfberabounds}
In this Section, we show that the self-bounding properties of the $\nmcera$ we proved yield sharp exponential concentration bounds that relate the $\nmcera$ to its expectation, the ERA, with significantly improved convergence rates w.r.t. the standard bound of Theorem~\ref{thm:mceraeramcdiarmid}. All the proofs can be found in the Appendix.

The first result is based on the self-bounding property of the $\nmcera$ we proved in Theorem~\ref{thm:selfboundingmcrade}.

\begin{restatable}{theorem}{newerabounds}
\label{thm:newerabounds}
Let $\vsigma \in \{-1 , 1 \}^{n \times m} $ be an $n \times m$ matrix of Rademacher random variables, such that $\vsigma_{j,i} \in \{-1 , 1 \}$ independently and with equal probability. 
Then, for all $0 < \varepsilon \leq \era$,
 \begin{equation}
\Pr \left( \era \geq \mera + \varepsilon \right) \leq \exp \left( - \frac{n m \varepsilon^2}{4 \emaxf \left( \era + \maxabsf \right) }  \right) \enspace .\label{eq:mceraerasbexpub}
\end{equation}
\end{restatable}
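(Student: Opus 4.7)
The plan is to combine the $(\alpha,\beta)$-self-bounding property of the $\nmcera$ established in Theorem~\ref{thm:selfboundingmcrade} with the lower-tail concentration inequality for self-bounding functions in Theorem~\ref{thm:boucheronsbf}. The key subtlety is that Theorem~\ref{thm:selfboundingmcrade} requires the hypothesis $\emaxf \le 1/2$, so I would first apply a simple rescaling to bring the function class into that regime.

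First, define the rescaled function class $\F' \doteq \{ f/(2\emaxf) : f \in \F \}$. By construction, every $f' \in \F'$ satisfies $\sup_{s \in \sample}|f'(s)| \le 1/2$, so the hypothesis of Theorem~\ref{thm:selfboundingmcrade} is satisfied by $\F'$. Moreover, the associated empirical max absolute mean for $\F'$ equals $\maxabsf/(2\emaxf)$. Theorem~\ref{thm:selfboundingmcrade} then gives that
\begin{equation*}
Z \doteq nm \cdot \erade^n_m(\F', \sample, \vsigma) = \frac{nm}{2\emaxf}\,\mera
\end{equation*}
is a $\bigl(1,\, nm\maxabsf/(2\emaxf)\bigr)$-self-bounding function of the Rademacher variables $\vsigma$. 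In particular, $\E_\vsigma[Z] = nm\,\era/(2\emaxf)$.

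Next, I would invoke the lower-tail bound of Theorem~\ref{thm:boucheronsbf} with $\alpha = 1$ and $\beta = nm\maxabsf/(2\emaxf)$. For $\alpha=1$ we have $\nu = (3\alpha-1)/6 = 1/3$ and $(\nu)_- = 0$, so the theorem yields, for $0 < t \le \E[Z]$,
\begin{equation*}
\Pr\bigl(Z \le \E[Z] - t\bigr) \le \exp\left( - \frac{t^2}{2(\E[Z] + \beta)} \right).
\end{equation*}
Setting $t = nm\varepsilon/(2\emaxf)$, the event $\{ Z \le \E[Z] - t \}$ is exactly $\{ \era \ge \mera + \varepsilon \}$, and the constraint $t \le \E[Z]$ translates to $\varepsilon \le \era$, matching the hypothesis of the theorem. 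Substituting and simplifying, the factors of $2\emaxf$ in the numerator and denominator combine to yield exactly $\exp\bigl(-nm\varepsilon^2 / (4\emaxf(\era + \maxabsf))\bigr)$, as claimed.

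The only delicate point is bookkeeping the rescaling: one must verify that all constants ($\alpha$, $\beta$, $t$, and $\E[Z]$) transform consistently under $f \mapsto f/(2\emaxf)$, and that the $\nu_-$ contribution vanishes for $\alpha=1$ so that the denominator of the exponent contains only $\era + \maxabsf$, with no residual linear-in-$\varepsilon$ term. No other step is computationally heavy, since Theorem~\ref{thm:selfboundingmcrade} already encapsulates the combinatorial work of bounding the drops $g(\vsigma) - g_i(\vsigma^{(i)})$.
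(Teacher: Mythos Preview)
Your proposal is correct and follows essentially the same route as the paper: rescale $\F$ by $1/(2\emaxf)$ to meet the hypothesis of Theorem~\ref{thm:selfboundingmcrade}, then feed the resulting $(1,\beta)$-self-bounding property into the lower-tail inequality of Theorem~\ref{thm:boucheronsbf}, noting that $(\nu)_-=0$ when $\alpha=1$. The only point the paper makes explicit that you leave implicit is the non-negativity of $g(\vsigma)=nm\,\merad$ (required by the self-bounding framework), which follows from the presence of the zero function $f_0\in\F$; otherwise the arguments coincide.
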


The second result is based on the weakly self-bounding  property of the $\nmcera$ we proved in Theorem~\ref{thm:mcerawsb}.

\begin{restatable}{theorem}{neweraboundsw}
\label{thm:neweraboundsw}
Let $\vsigma \in \{-1 , 1 \}^{n \times m} $ be an $n \times m$ matrix of Rademacher random variables, such that $\vsigma_{j,i} \in \{-1 , 1 \}$ independently and with equal probability. 
Then, for all $0 < \varepsilon \leq \era$,
 \begin{equation}
\Pr \left( \era \geq \mera + \varepsilon \right) \leq \exp \left( - \frac{n m \varepsilon^2}{4 \left( \emaxf \era + \maxsquaref \right) }  \right) \enspace .\label{eq:mceraerasbub}
\end{equation}
\end{restatable}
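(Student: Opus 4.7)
The plan is to derive the claimed lower-tail inequality on $\mera$---the event $\{\era \geq \mera + \varepsilon\}$ is precisely $\{Z \leq \E[Z] - nm\varepsilon\}$ for $Z \doteq nm \mera$---by combining Theorem~\ref{thm:mcerawsb} with the lower-tail portion of Theorem~\ref{thm:boucheronsbf} for weakly $(\alpha,\beta)$-self-bounding functions. By Theorem~\ref{thm:mcerawsb}, $Z$ is weakly $(2\emaxf, 2nm\maxsquaref)$-self-bounding, and from the same argument one gets the coordinate-wise increment bound $0 \leq Z - Z_i \leq 2\emaxf$. However, the lower-tail statement of Theorem~\ref{thm:boucheronsbf} requires in addition that $0 \leq Z - Z_i \leq 1$, so the bound cannot be applied to $Z$ directly when $\emaxf > 1/2$.

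The key step is a rescaling. I would set $c \doteq 2\emaxf$ and work with $\tilde{Z} \doteq Z/c$, whose increments automatically lie in $[0,1]$. Dividing the weakly self-bounding inequality through by $c^2$ and substituting $Z = c \tilde{Z}$ shows that $\tilde{Z}$ is weakly $\bigl(1,\, nm \maxsquaref/(2\emaxf^2)\bigr)$-self-bounding. Crucially, with $\alpha = 1$ I have $\nu = (3\alpha - 1)/6 = 1/3$, so $(\nu)_- = 0$, and the lower-tail statement of Theorem~\ref{thm:boucheronsbf} then yields an exponent whose denominator contains no additive linear-in-$t$ correction; this is exactly what lets us match the compact form of the target bound.

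To finish, I would instantiate the rescaled bound at $\tilde{t} \doteq nm\varepsilon/(2\emaxf)$. The hypothesis $0 < \tilde{t} \leq \E[\tilde{Z}] = nm\era/(2\emaxf)$ reduces to the assumption $0 < \varepsilon \leq \era$, so the bound applies. Substituting gives
\begin{align*}
\Pr\bigl(\tilde Z \leq \E[\tilde Z] - \tilde t\bigr) \leq \exp\left(-\frac{n^2 m^2 \varepsilon^2 / (4 \emaxf^2)}{2\bigl(nm \era/(2\emaxf) + nm \maxsquaref/(2\emaxf^2)\bigr)}\right),
\end{align*}
and the powers of $\emaxf$ collapse: factoring $1/\emaxf^2$ from the denominator cancels the $1/\emaxf^2$ in the numerator, leaving exactly $\exp\bigl(-nm\varepsilon^2 / [4(\emaxf \era + \maxsquaref)]\bigr)$ as required.

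The main obstacle I expect is the bookkeeping around the rescaling: one has to carefully track how the pair $(\alpha,\beta)$ transforms under $Z \mapsto Z/c$ (with $\alpha \to \alpha/c$ and $\beta \to \beta/c^2$) and then verify that the specific choice $c = 2\emaxf$ lands us in the $\alpha = 1$ regime where $(\nu)_- = 0$. Without this cancellation one would pick up an extra $\emaxf \varepsilon$ term in the denominator and lose the clean variance-dependent form of the bound.
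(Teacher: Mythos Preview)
Your proposal is correct and follows essentially the same route as the paper: the paper rescales the function class to $\F' \doteq \{f/(2\emaxf) : f \in \F\}$ and reapplies Theorem~\ref{thm:mcerawsb} to obtain a weakly $(1,\,nm\maxsquaref/(2\emaxf^{2}))$-self-bounding function, whereas you rescale $Z$ directly and track how $(\alpha,\beta)$ transform---but since $nm\,\erade^{n}_{m}(\F',\sample,\vsigma) = Z/(2\emaxf)$ these are the same object, and both land in the $\alpha=1$ regime where $(\nu)_{-}=0$ before invoking the lower-tail bound of Theorem~\ref{thm:boucheronsbf}. Your explicit check of the increment condition $0\leq \tilde Z-\tilde Z_i\leq 1$ is a point the paper leaves implicit.
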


We may observe that the denominators of the exponents of \eqref{eq:mceraerasbexpub} and \eqref{eq:mceraerasbub} are not known a priori, but depend on the ERA $\era$, the quantity we actually want to bound. We remark that plugging an upper bound to $\era$ is sufficient for the validity of the results. To this aim, we may simply observe that 
\begin{align*}
\era = \E_\vsigma \left[ \mera \right] \leq \E_{\vsigma} \left[ \maxabsf \right] = \maxabsf \enspace ,
\end{align*}
obtaining that the r.h.s. of \eqref{eq:mceraerasbexpub} and \eqref{eq:mceraerasbub} are upper bounded by, respectively, 
\begin{equation*}
\exp \left( - \frac{n m \varepsilon^2}{8 \emaxf \maxabsf  }  \right) \enspace , \text{ and } \enspace \exp \left( - \frac{n m \varepsilon^2}{4 \left( \emaxf \maxabsf + \maxsquaref \right) }  \right) \enspace .
\end{equation*}
We now present alternative bounds that only depend on empirical quantities, that are often sharper than plugging the above upper bound to the ERA.

\begin{restatable}{theorem}{boundsempquant}
\label{thm:boundsempquant}
With probability $\geq 1-\delta$ it holds
\begin{align}
\label{eq:explicitboundabs}
\era &\leq \mera + \frac{2 \emaxf \ln\left( \frac{1}{\delta}\right) }{n m} + \sqrt{ \left(  \frac{2 \emaxf \ln\left( \frac{1}{\delta}\right) }{n m} \right)^{2} +  \frac{4 \emaxf \left( \mera + \maxabsf \right) \ln\left( \frac{1}{\delta}\right) }{n m} } \enspace .
\end{align}
Also, with probability $\geq 1-\delta$, it holds
\begin{align}
\label{eq:explicitboundwvar}
\era & \leq \mera + \frac{2 \emaxf \ln\left( \frac{1}{\delta}\right) }{n m} + \sqrt{ \left(  \frac{2 \emaxf \ln\left( \frac{1}{\delta}\right) }{n m} \right)^{2} +  \frac{4 \left( \emaxf \mera + \maxsquaref \right) \ln\left( \frac{1}{\delta}\right) }{n m} } \enspace . 
\end{align}
\end{restatable}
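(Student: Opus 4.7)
My plan is to derive each of the two explicit bounds by inverting the corresponding tail inequality of Section~\ref{sec:selfberabounds} and then solving the resulting quadratic inequality in $\era$. I treat the sample $\sample$ as fixed, so that $\era$, $\maxabsf$, $\emaxf$, and $\maxsquaref$ are deterministic quantities and the only randomness is over $\vsigma$. Starting from Theorem~\ref{thm:newerabounds}, I would set the right-hand side of \eqref{eq:mceraerasbexpub} equal to $\delta$ and solve for the corresponding $\varepsilon$, obtaining the $\sample$-dependent value
\begin{equation*}
\varepsilon^\star = \sqrt{\frac{4\,\emaxf\,(\era + \maxabsf)\,\ln(1/\delta)}{nm}}\enspace.
\end{equation*}
Since $\varepsilon^\star$ does not depend on $\vsigma$, Theorem~\ref{thm:newerabounds} immediately gives that with probability at least $1-\delta$ over $\vsigma$ the (deterministic) inequality $\era - \mera \leq \varepsilon^\star$ holds.

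The next step is to invert this self-referential bound. Letting $B \doteq 4\,\emaxf\,\ln(1/\delta)/(nm)$, the inequality reads $\era - \mera \leq \sqrt{B(\era + \maxabsf)}$. If $\era \leq \mera$, bound~\eqref{eq:explicitboundabs} holds trivially since its right-hand side is non-negative; otherwise both sides are non-negative and I can square to get the quadratic inequality $(\era - \mera)^2 \leq B(\era + \maxabsf)$. Setting $y \doteq \era - \mera \geq 0$, this becomes $y^2 - By - B(\mera + \maxabsf) \leq 0$, and the quadratic formula (taking the positive root) yields
\begin{equation*}
\era \leq \mera + \frac{B}{2} + \sqrt{\left(\frac{B}{2}\right)^2 + B(\mera + \maxabsf)}\enspace,
\end{equation*}
which after substituting $B/2 = 2\,\emaxf\,\ln(1/\delta)/(nm)$ is precisely \eqref{eq:explicitboundabs}.

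For \eqref{eq:explicitboundwvar}, I would repeat the same procedure starting from Theorem~\ref{thm:neweraboundsw}: setting the right-hand side of \eqref{eq:mceraerasbub} equal to $\delta$ produces the $\sample$-dependent deviation $\varepsilon^\star = \sqrt{4(\emaxf\,\era + \maxsquaref)\ln(1/\delta)/(nm)}$, so with probability at least $1-\delta$ the inequality $\era - \mera \leq \varepsilon^\star$ holds. Keeping the same $B$ and defining $C \doteq 4\,\maxsquaref\,\ln(1/\delta)/(nm)$, the same squaring-and-quadratic-formula argument on $y^2 - By - (B\mera + C) \leq 0$ gives $\era \leq \mera + B/2 + \sqrt{(B/2)^2 + B\mera + C}$, which matches \eqref{eq:explicitboundwvar}. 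No real obstacle arises in this proof: the only subtlety is the squaring step, which is justified by handling the trivial case $\era < \mera$ separately and noting that $\varepsilon^\star \geq 0$ so that both sides are non-negative whenever the squaring is applied. The argument is essentially the standard inversion trick used to turn variance-dependent tail bounds into explicit deviation bounds, analogous to the derivation of \eqref{eq:rcselboundingexpl} from \eqref{eq:rcselbounding} in Theorem~\ref{thm:rcboundselfbounding}.
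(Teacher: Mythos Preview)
Your proposal is correct and follows essentially the same approach as the paper: the paper also starts from Theorems~\ref{thm:newerabounds} and~\ref{thm:neweraboundsw}, obtains the self-referential inequality $\era \leq \mera + \sqrt{4\emaxf(\maxabsf+\era)\ln(1/\delta)/(nm)}$ (resp.\ the $\maxsquaref$ version), and then solves for $\era$. The only cosmetic difference is that the paper packages the quadratic inversion as a ``fixed-point'' lemma (Lemma~\ref{lemma:fixedpoint}: the fixed point of $r(x)=u+\sqrt{v+yx}$ is $u+y/2+\sqrt{y^2/4+uy+v}$), whereas you carry out the squaring and quadratic formula explicitly; the two are the same computation.
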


We remark that appropriate lower bounds to the ERA $\era$ can be similarly derived from the self-bounding properties proved in Section~\ref{sec:nmceraselfboundingproperties} and the application of Theorem~\ref{thm:boucheronsbf}.

By directly comparing the bounds we derived by Theorems~\ref{thm:newerabounds}~and~\ref{thm:neweraboundsw} with the one given by Theorem~\ref{thm:mceraeramcdiarmid}, we can conclude that the former will be tighter when at least one of the following is satisfied:
\begin{align*}
\era + \maxabsf \leq \frac{\emaxf}{2} \enspace \enspace , \enspace \enspace 
2 \emaxf \era + 2 \maxsquaref \leq \emaxf^{2} \enspace .
\end{align*}
As discussed before, since $\era \leq \maxabsf$, a sufficient condition for our results to be sharper is given by
\begin{align*}
\maxabsf \leq \frac{\emaxf}{4} \enspace \enspace , \enspace  \enspace \enspace
2 \emaxf \maxabsf + 2 \maxsquaref \leq \emaxf^{2} \enspace . 
\end{align*}
In particular, our novel results allow to bound the ERA $\era$ below $\mera + \varepsilon$ with an $\varepsilon$ of the order of 
\[
\BO{ \sqrt{\pars{\mera+\maxabsf}/nm} } \enspace , \text{ or } \enspace  \BO{ \sqrt{\pars{\mera+\maxsquaref}/nm} } \enspace , 
\]
matching the rate of convergence of the ERA to the RC given by Theorem~\ref{thm:rcboundselfbounding}, 
instead of the $\BOi{ \sqrt{1/nm} }$ slow rate bound.
We also remark that, when we bound $\era$, both $\maxabsf$ and $\maxsquaref$ are deterministic quantities since the sample $\sample$ is fixed; thus, they can be used to select the probabilistic result to apply, as they do not depend on the realisation of any random variable.

\subsection{New special bounds for $n=1$}
\label{sec:newboundsnone}
An interesting case in applications is when only $n=1$ vector of Rademacher random variable is used to compute the $\nmcera$. In addition of being faster to compute than $n>1$, \cite{pellegrina2020mcrapper} show that in this case one may obtain a sharper bound to the SDs with only one and direct application of the bounded difference method, considering pairs composed by Rademacher random variables and samples of $\sample$ as i.i.d. random variables form an appropriate joint distribution. We now present a variant of their result, that upper bounds the RC instead of the SDs, that is useful to us to be compared with the novel result we prove with Theorem~\ref{thm:onedrawvaraware}.
\begin{theorem}[Theorem 4.6, \cite{pellegrina2020mcrapper}]
It holds
\begin{equation*}
    \Pr \left( \rc \geq \erade^{1}_{m}\left(\F, \sample, \vsigma\right) + \varepsilon \right) \leq \exp \left( -\frac{m\varepsilon^{2}}{2z^{2}} \right) \enspace ,
  \end{equation*}
  thus, with probability $\geq 1-\delta$, it holds
  \begin{equation*}
 \rc \leq \erade^{1}_{m}\left(\F, \sample, \vsigma\right) + z\sqrt{\frac{2 \ln\left(\frac{1}{\delta} \right) }{m}} \enspace .
\end{equation*}
\end{theorem}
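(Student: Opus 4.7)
The plan is to apply McDiarmid's inequality (Theorem~\ref{thm:mcdiarmid}) after bundling each sample point with its Rademacher multiplier into a single random object. Specifically, I would define $W_i \doteq (s_i, \sigma_i)$ for $i = 1, \dots, m$ and view the $W_i$ as $m$ independent random variables drawn from the product distribution $\probdist \otimes \mathrm{Rad}$, where $\mathrm{Rad}$ is uniform on $\{-1,+1\}$. Under this coupling, $\onemera$ is a deterministic function $g(W_1, \dots, W_m) = \sup_{f \in \F} \frac{1}{m}\sum_{i=1}^{m} \sigma_i f(s_i)$, and its expectation under the joint law equals $\rc$ by iterated expectation: averaging first over $\vsigma$ gives $\era$, and then averaging over $\sample$ gives $\rc$ by definition. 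This shift from the conditional viewpoint of Theorem~\ref{thm:mceraeramcdiarmid} (which concentrates $\onemera$ around $\era$ for fixed $\sample$) to a jointly-sampled viewpoint is what allows the bound to land on $\rc$ directly.

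Next, I would verify the bounded-difference property with constants $c_i = 2z/m$. If a single pair $W_j$ is swapped for an alternative $W_j'$, then for every $f \in \F$ the perturbation $|\sigma_j f(s_j) - \sigma_j' f(s_j')|$ is at most $2z$, since $\sigma f(s) \in [-z, z]$ uniformly in $f, s, \sigma$ (using $f(\X) \subseteq [a,b]$ and $z = \max\{|a|,|b|\}$). A standard sup-difference argument, choosing near-optimizers on each side, then transfers this coordinatewise bound to $g$, yielding $|g(W) - g(W')| \leq 2z/m$.

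With $c_i = 2z/m$ for each $i$, applying Theorem~\ref{thm:mcdiarmid} gives $\sum_{i=1}^m c_i^2 = 4z^2/m$, and hence
\begin{equation*}
\Pr\pars{\rc \geq \onemera + \varepsilon} \leq \exp\pars{-\frac{2\varepsilon^2}{4z^2/m}} = \exp\pars{-\frac{m\varepsilon^2}{2z^2}},
\end{equation*}
which is the first claim. Setting the right-hand side equal to $\delta$ and solving for $\varepsilon$ produces the explicit high-probability statement in the theorem.

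I expect no substantive obstacle: the proof is essentially routine once the joint-sampling perspective is adopted. The only point that deserves care is justifying that the $W_i$'s are genuinely independent, which follows because the $s_i$'s are i.i.d.\ from $\probdist$ and the $\sigma_i$'s are i.i.d.\ Rademacher variables drawn independently of $\sample$; this must be stated clearly so the hypothesis of Theorem~\ref{thm:mcdiarmid} is unambiguously satisfied.
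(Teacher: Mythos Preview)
Your proposal is correct and matches the approach that the paper explicitly attributes to \cite{pellegrina2020mcrapper}: bundling each sample with its Rademacher sign into a single i.i.d.\ pair $W_i=(s_i,\sigma_i)$ and applying the bounded-differences inequality (Theorem~\ref{thm:mcdiarmid}) directly to $g(W_1,\dots,W_m)=\onemera$, whose joint expectation is $\rc$. The paper does not spell out a proof but describes exactly this route in the paragraph preceding the theorem, so your argument is essentially the intended one.
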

They also remark that applying the result to the range centralised set of functions 
\[
\F^\oplus \doteq \left\lbrace g : g(x) \doteq f(x) - a - \frac{c}{2} , f \in \F , x \in \X \right\rbrace
\] 
is often convenient as it gives the sharpest constants in the bound (as $z$ for $\F^\oplus$ is equal to $c/2$).

We now derive an analogous but significantly sharper bound, whose convergence rate depends on the wimpy variance $\wvar$ of $\F$. Our proof, postponed to the Appendix, is based on the application of a left tail of Bousquet's inequality.
\begin{restatable}{theorem}{onedrawvaraware}
\label{thm:onedrawvaraware}
With probability $\geq 1-\delta$, it holds
\begin{align}
 \rc & \leq \onemera + \sqrt{\frac{2(2 z \rc + \wvar) \ln\left(\frac{1}{\delta} \right) }{m}} + \frac{z \ln\left(\frac{1}{\delta} \right)}{8m} \label{eq:nonevarianceawarebound} \\
& \leq \onemera + \sqrt{\frac{9}{8} \left( \frac{2z \logdelta}{m} \right)^{2} + \frac{2(2 z \onemera + \wvar) \logdelta }{m}} + \frac{17 z \logdelta }{8m} \enspace . \label{eq:nonevarianceawareboundexpl}
\end{align}
\end{restatable}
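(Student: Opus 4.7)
The plan is to view $\onemera$ as the supremum of a centred empirical process taken over the joint i.i.d.\ samples $(s_i,\sigma_i)$, and then apply a left-tail version of Bousquet's inequality.

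First, I would associate to each $f \in \F$ the function $h_f(s,\sigma) \doteq \sigma f(s)$, defined on $\X \times \{-1,+1\}$. The pairs $(s_i,\sigma_i)$, $i=1,\dots,m$, are i.i.d.\ under the product $\probdist \otimes \text{Rademacher}$. Since $\sigma$ is zero-mean and independent of $s$, we have $\E[h_f]=0$, $|h_f|\leq z$, and $\E[h_f^{2}]=\E[f^{2}]\leq \wvar$. Defining $Z \doteq m\,\onemera = \sup_{f \in \F}\sum_{i=1}^m h_f(s_i,\sigma_i)$, by the tower property $\E[Z] = m\,\rc$.

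Second, I would invoke the Bennett--Bousquet-type left-tail concentration inequality of~\cite{bousquet2002bennett} for the supremum of a centred empirical process indexed by a class of functions bounded by $b=z$ and with wimpy variance $\sigma^{2}\leq \wvar$. With the standard Bousquet parameter $v \doteq m\wvar + 2z\,\E[Z] = m(\wvar + 2z\,\rc)$, this inequality yields, for any $t>0$,
\[
  \Pr\!\left( \E[Z] - Z \geq \sqrt{2vt} + \tfrac{z t}{8}\right) \leq e^{-t}.
\]
Setting $t = \logdelta$ and dividing through by $m$ produces Equation~\eqref{eq:nonevarianceawarebound} exactly, since $\E[Z]-Z = m(\rc - \onemera)$.

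Finally, I would derive the \emph{empirical} form~\eqref{eq:nonevarianceawareboundexpl} by decoupling the variance term $2z\rc$ that still appears inside the square root of \eqref{eq:nonevarianceawarebound}. Writing $y \doteq \rc - \onemera$ and $T \doteq \logdelta/m$, and assuming $y > zT/8$ (otherwise the conclusion is trivial), squaring the rearrangement $y - zT/8 \leq \sqrt{2(2z(\onemera+y)+\wvar)T}$ gives a quadratic
\[
  y^{2} - \tfrac{17zT}{4}\,y + \tfrac{z^{2}T^{2}}{64} - 2(2z\onemera + \wvar)T \leq 0.
\]
The quadratic formula then yields the upper bound $y \leq \tfrac{17zT}{8} + \sqrt{\tfrac{9}{8}(2zT)^{2} + 2(2z\onemera + \wvar)T}$, where the coefficient $\tfrac{9}{8}$ emerges from simplifying $(17zT/4)^{2}/4 - z^{2}T^{2}/16 = \tfrac{9}{8}(2zT)^{2}/4$, matching~\eqref{eq:nonevarianceawareboundexpl}.

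The main obstacle is the second step: applying the correct left-tail Bousquet-type inequality with the constant $1/8$ in the linear term (most textbook statements carry the constant $1/3$), which requires using the refined form of~\cite{bousquet2002bennett}. The remaining steps are either a direct reduction (step one) or elementary algebra (step three).
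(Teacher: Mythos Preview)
Your proposal is correct and follows essentially the same route as the paper: define the centred class $g(x,\sigma)=\sigma f(x)$ on the product space, check $\E[g]=0$, $\|g\|_\infty\le z$, $\sup_g\Var(g)=\wvar$, and apply the left-tail version of Bousquet's inequality (the paper cites it as Corollary~12.2 of \cite{boucheron2013concentration}) with the $1/8$ linear-term constant to get~\eqref{eq:nonevarianceawarebound}. For~\eqref{eq:nonevarianceawareboundexpl} the paper simply invokes its fixed-point Lemma (for $r(x)=u+\sqrt{v+yx}$), which is algebraically equivalent to your quadratic-formula derivation; your parenthetical simplification has a small arithmetic slip (the constant term in the quadratic is $z^2T^2/64$, not $z^2T^2/16$), but your stated final bound is correct.
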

We may observe that 
 \eqref{eq:nonevarianceawareboundexpl} may be sharper than the combined application of \eqref{eq:mceraerasbub} and \eqref{eq:rcselbounding} when $n=1$, since the empirical wimpy variance $\ewvar$ appears in \eqref{eq:mceraerasbub} with a factor $4$, while the wimpy variance in \eqref{eq:nonevarianceawareboundexpl} has a factor $2$. On the other hand, one should have (or compute on the data) an upper bound to $\wvar$ to apply the result, while Theorem~\ref{thm:neweraboundsw} only requires to compute its empirical counterpart $\ewvar$.

Nevertheless, we show that the empirical wimpy variance $\ewvar$ yields a sharp upper bound to the wimpy variance $\wvar$. 
Our analysis again relies on the powerful framework of self-bounding functions.

\begin{restatable}{theorem}{wvarestimation}
\label{thm:wvarestimation}
For $\varepsilon \leq \wvar$, it holds 
\begin{align}
\Pr \pars{ \wvar \geq \ewvar + \varepsilon } \leq \exp \pars{ - \frac{m \wvar}{z^2} h\pars{ -\frac{\varepsilon}{ \wvar }} } \leq \exp\pars{ - \frac{ m \varepsilon^2}{2 z^2 \wvar } } \enspace . \label{eq:wvarewvar}
\end{align}
Furthermore, with probability $\geq 1 - \delta$, it holds
\begin{align}
\wvar \leq \ewvar + \frac{z^2 \ln\pars{ \frac{1}{\delta} } }{ m } + \sqrt{ \pars{\frac{z^2 \ln\pars{ \frac{1}{\delta} } }{ m }}^2 + \frac{2 z^2 \ewvar \ln \pars{ \frac{1}{\delta}} }{ m } } \enspace . \label{eq:wvarewvarexpl}
\end{align}
\end{restatable}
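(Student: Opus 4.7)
The plan is to exploit the $(1,0)$-self-bounding structure of the (unnormalized) empirical wimpy variance and then convert a lower-tail bound on $\ewvar$ into the desired upper-tail bound on $\wvar$ via the elementary observation that $\E[\ewvar] \geq \wvar$ (the expectation of a supremum dominates the supremum of expectations).

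First I would set $Z \doteq g(\sample)$ where
$g(\sample) \doteq (1/z^{2}) \sup_{f\in\F}\sum_{i=1}^{m} f(s_{i})^{2} = (m/z^{2})\ewvar$,
and verify that $Z$ is $(1,0)$-self-bounding. With the leave-one-out choice $g_{i}(\sample^{(i)}) \doteq (1/z^{2}) \sup_{f\in\F}\sum_{j\neq i} f(s_{j})^{2}$, the maximizer $f^{\star}$ of $g(\sample)$ witnesses both $0 \leq g(\sample) - g_{i}(\sample^{(i)}) \leq f^{\star}(s_{i})^{2}/z^{2} \leq 1$ and $\sum_{i=1}^{m}\!\left(g(\sample) - g_{i}(\sample^{(i)})\right) \leq (1/z^{2})\sum_{i=1}^{m} f^{\star}(s_{i})^{2} = g(\sample)$. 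Theorem~\ref{thm:sbfbounds} applied to the $(1,0)$-self-bounding lower tail then yields $\Pr(Z \leq \E[Z] - t) \leq \exp(-\E[Z]\,h(-t/\E[Z]))$ for every $0 < t \leq \E[Z]$.

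Next I would translate this lower tail on $\ewvar$ into an upper tail on $\wvar$. Writing $V \doteq m\wvar/z^{2}$ and using $\E[Z] \geq V$, the event $\{\wvar \geq \ewvar + \varepsilon\}$ coincides with $\{Z \leq V - m\varepsilon/z^{2}\} = \{Z \leq \E[Z] - t\}$ for $t \doteq \E[Z] - V + m\varepsilon/z^{2}$, and the hypothesis $\varepsilon \leq \wvar$ guarantees $t \in (0, \E[Z]]$. The main obstacle, and the only delicate step, is to replace the unknown $\E[Z]$ on the right-hand side by the cleaner $V$. For this I would prove the monotonicity lemma: for any fixed $0 \leq u \leq V$, the map
$\psi(W) \doteq W\, h(u/W - 1) = u\ln(u/W) + W - u$
is non-decreasing on $[u, \infty)$, since $\psi'(W) = 1 - u/W \geq 0$. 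Instantiating with $u = V - m\varepsilon/z^{2}$ gives $\psi(V) \leq \psi(\E[Z])$, so
$\exp(-\E[Z]\,h(-t/\E[Z])) = \exp(-\psi(\E[Z])) \leq \exp(-\psi(V)) = \exp\!\left(-(m\wvar/z^{2})\,h(-\varepsilon/\wvar)\right)$,
which is the first inequality of~\eqref{eq:wvarewvar}. The sub-Gaussian form on the right follows from the standard estimate $h(-x) \geq x^{2}/2$ for $x \in [0,1]$.

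Finally, for~\eqref{eq:wvarewvarexpl} I would invert the sub-Gaussian tail. Setting $\exp(-m\varepsilon^{2}/(2 z^{2} \wvar)) = \delta$ and rearranging yields the implicit inequality $\wvar \leq \ewvar + z\sqrt{2 \wvar \logdelta / m}$, which is a quadratic in $\sqrt{\wvar}$; retaining the non-negative root and squaring produces the stated closed-form bound after routine algebraic simplification.
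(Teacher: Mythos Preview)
Your proof is correct and follows essentially the same route as the paper: establish the $(1,0)$-self-bounding property of the (rescaled) empirical wimpy variance, invoke the lower-tail inequality from Theorem~\ref{thm:sbfbounds}, use $\wvar \le \E_\sample[\ewvar]$ together with the monotonicity of $W \mapsto W\,h(u/W-1)$ to replace $\E_\sample[\ewvar]$ by $\wvar$, and finally invert the sub-Gaussian tail via the quadratic/fixed-point argument. Your explicit $1/z^{2}$ scaling up front is in fact slightly cleaner than the paper's version, which proves $g-g_j\le z^2$ and only implicitly rescales when applying Theorem~\ref{thm:sbfbounds}; the monotonicity lemma you state is exactly the paper's ``monotonicity of $-x\,h(-\varepsilon/x)$ in $x$'' rephrased.
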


This result shows that the empirical wimpy variance $\ewvar$ is an accurate empirical estimator of the wimpy variance $\wvar$; we believe such observation may have interesting applications in establishing ``global'' fast rates of convergence of the SDs, as shown by~\cite{oneto2016global}.

\section{Variance-dependent probabilistic bounds to the Supremum Deviations}
\label{sec:stdvarsdbounds}
In this section we state a central result in \slt, due to \cite{bousquet2002bennett}, the sharpest refinement of a number of improvements of the
work of~\cite{talagrand1994sharper} on bounds on the deviation of the suprema of empirical processes.
This result can be applied to derive bounds on the supremum deviations that depend on the maximum variance $\tau \doteq \sup_{f\in \F} \left\lbrace Var(f) \right\rbrace$ of the functions of $\F$. These bounds can be dramatically sharper than the ones obtainable with the bounded differences method if $\tau$ is sufficiently smaller than its maximum possible value (equal to $c^{2}/4$ from \cite{popoviciu1935equations} inequality on variances).

We first report the result of Bousquet (in the version stated by Theorem A.1 of \cite{bartlett2005local}).
\begin{theorem}[Theorem 2.3, 
\cite{bousquet2002bennett}]
\label{thm:bousquetbound}
  Let $d>0$, $X_{i}$ be independent random variables distributed according to a probability distribution $P$, and let $\G$ be a set of functions from $\X$ to $\R$. Assume that all functions $g \in \G$ satisfy $\E[g] = 0$ and $\left\lVert g \right\rVert_{\infty} \leq d$. Let $\sigma^{2} \geq \sup_{g \in \G} Var \left( g(X_{i}) \right)$. Then, for any $x\geq0$,
\begin{equation*}
\Pr \left( Z \geq \E[Z] + x \right) \leq \exp \left( -v h \left( \frac{x}{cv} \right) \right) \enspace ,
\end{equation*}
where $Z = \sup_{g \in \F} \sum_{i=1}^{n} g(X_{i}) $, $h(x) = (1+x) \log (1+x) - x$ and $v = n \sigma^{2} + 2d\E[Z]$.\end{theorem}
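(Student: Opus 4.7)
The plan is to prove Bousquet's inequality via the \emph{entropy method}, following the approach of Ledoux and Massart that Bousquet refined to obtain the sharp constants. The central object is the log-Laplace transform $\psi(\lambda) \doteq \log \E\!\left[\exp(\lambda(Z - \E[Z]))\right]$, and the goal is to show a Bennett-type bound on it of the form $\psi(\lambda) \le (v/d^2)(e^{\lambda d} - \lambda d - 1)$ for every $\lambda \ge 0$, with $v = n\sigma^2 + 2d\E[Z]$. Once such a bound on the moment generating function is established, the tail bound with the function $h(x) = (1+x)\log(1+x) - x$ follows by the standard Chernoff-Cram\'er optimisation in $\lambda$.

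First, I would apply the \emph{tensorization inequality for entropy}: for any positive random variable $Y$ that is a measurable function of the independent coordinates $X_1, \dots, X_n$, one has $\mathrm{Ent}(Y) \le \sum_{i=1}^n \E[\mathrm{Ent}_i(Y)]$, where $\mathrm{Ent}_i$ denotes conditional entropy with respect to the $\sigma$-algebra generated by all coordinates except $X_i$. Applying this with $Y = e^{\lambda Z}$ reduces the task to controlling each conditional entropy term separately.

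Next, I would use a modified log-Sobolev inequality at the coordinate level. Writing $Z_i' = \sup_{g \in \G} \sum_{j \neq i} g(X_j)$ and $\phi(x) = e^x - x - 1$, a standard one-coordinate argument yields $\mathrm{Ent}_i(e^{\lambda Z}) \le \E_i\!\left[\phi\pars{-\lambda(Z - Z_i')} e^{\lambda Z}\right]$. The key estimate is then to bound $\sum_i \E_i\!\left[\phi(-\lambda(Z - Z_i')) e^{\lambda Z}\right]$. Letting $g^\star$ be a near-maximiser in the definition of $Z$, one has $0 \le Z - Z_i' \le g^\star(X_i) + d$, which allows one to relate $\phi(-\lambda(Z - Z_i'))$ to terms involving $g^\star(X_i)^2$ (producing the variance contribution $n\sigma^2$ after taking expectations) and boundary terms of the form $2d(Z - Z_i')$ (producing the $2d\,\E[Z]$ contribution). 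Assembling these pieces converts the entropy bound into a differential inequality of the form $\lambda \psi'(\lambda) - \psi(\lambda) \le (v/d^2)(e^{\lambda d} - 1)\lambda d$ (or an equivalent Herbst-style inequality), whose integration gives the target bound on $\psi(\lambda)$.

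The main obstacle is the last bounding step: producing the precise constant $v = n\sigma^2 + 2d\E[Z]$ rather than a weaker constant such as $n\sigma^2 + c\,d\,\E[Z]$ with $c>2$ (as in earlier Talagrand/Massart bounds). This requires a delicate decomposition of $\phi(-\lambda(Z - Z_i'))e^{\lambda Z}$ into a \emph{quadratic} part that can be controlled by the wimpy variance $\sigma^2$ uniformly over $\G$, and a \emph{linear} remainder whose sum telescopes into $\E[Z]$ via the identity $\sum_i \E[g^\star(X_i)] = \E[Z]$ when $g^\star$ is centred. This sharp splitting is exactly the technical core of Bousquet's refinement; once it is in place, tensorization, Herbst, and Chernoff combine routinely to yield the stated inequality.
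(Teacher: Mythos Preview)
The paper does not prove this theorem at all: it is stated as Theorem~2.3 of \cite{bousquet2002bennett} and used as a black box (see also Corollary~\ref{thm:leftbousquet}, again cited from \cite{boucheron2013concentration}). So there is no ``paper's own proof'' to compare against; the intended argument here is simply a citation.

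Your sketch is a reasonable high-level outline of Bousquet's original entropy-method proof: tensorization of entropy, a one-coordinate modified log-Sobolev step, a Herbst-type differential inequality for the log-Laplace transform, and finally the Chernoff--Cram\'er optimisation yielding the Bennett function $h$. That is indeed the route taken in \cite{bousquet2002bennett}. One small technical point to be careful with: your claim $0 \le Z - Z_i'$ does not hold for the drop-one-coordinate quantity $Z_i' = \sup_{g \in \G}\sum_{j\neq i} g(X_j)$, since $g^\star(X_i)$ can be negative; Bousquet's argument handles this by working with $\phi(-\lambda(Z-Z_i'))$ together with the fact that $\phi$ is convex and $\phi(0)=0$, rather than assuming nonnegativity of the increment. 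This does not derail the approach, but the bound $0 \le Z - Z_i' \le g^\star(X_i) + d$ as written needs adjustment. Since the paper treats the result as external, for the purposes of this manuscript a citation suffices.
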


Variance-dependent bounds to the SDs follow from Theorem \ref{thm:bousquetbound}.
\begin{theorem}\label{thm:sdbousquetbound}
  Let $Z \doteq \sup_{f \in \F} \left\lbrace  \frac{1}{m}\sum_{j=1}^{m} f(s_{j}) - \E[f] \right\rbrace$, and define $\tau \doteq \sup_{f\in \F} \left\lbrace Var(f) \right\rbrace$ and the function $h(x) \doteq (1+x) \ln(1+x) - x$.
  Then, it holds
    \begin{equation}\label{eq:bousquetimpl}
    \Pr \left( Z \geq \E\left[ Z \right] + \varepsilon \right) \leq \exp \left( - m \left( \tau + 2 c \E \left[ Z \right] \right) h\left( \frac{\varepsilon}{\tau + 2 c \E \left[ Z \right]} \right) \right) \enspace.
  \end{equation}
  Also, with probability at
  least $1 - \delta$, it holds
  \begin{equation}\label{eq:sdvarbound}
    Z \le \E\left[ Z \right] + \sqrt{\frac{2 \ln \left( \frac{1}{\delta} \right)
    \left( \tau + 2 c \E[ Z ] \right)}{m}}
        + \frac{ c \ln \left( \frac{1}{\delta} \right)}{3m} \enspace.
  \end{equation}
 The same results are valid for $Z \doteq \sup_{f \in \F} \left\lbrace \E[f] - \frac{1}{m}\sum_{j=1}^{m} f(s_{j}) \right\rbrace$.
\end{theorem}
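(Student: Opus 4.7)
The plan is to reduce the statement to a direct application of Bousquet's inequality (Theorem~\ref{thm:bousquetbound}) by centering the functions. For each $f \in \F$, define $g_f(x) \doteq f(x) - \E[f]$ and set $\G \doteq \{g_f : f \in \F\}$. By construction $\E[g_f] = 0$, $\|g_f\|_\infty \leq b - a = c$, and $\sup_{g \in \G} \Var(g(X_i)) = \sup_{f \in \F} \Var(f) = \tau$. The quantity $m Z = \sup_{g \in \G} \sum_{i=1}^m g(s_i)$ coincides exactly with the object Bousquet's inequality controls.

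Next, I would apply Theorem~\ref{thm:bousquetbound} to $\G$ with $n = m$, $\sigma^2 = \tau$, and $d = c$, so that its variance proxy becomes $v = m\tau + 2c \cdot \E[mZ] = m\bigl(\tau + 2c\,\E[Z]\bigr)$. Substituting the deviation parameter $x = m\varepsilon$ into the Bennett-type bound $\exp(-v\, h(x/v))$ and simplifying yields
\[
\Pr\bigl(Z \geq \E[Z] + \varepsilon\bigr) \leq \exp\!\left(-m(\tau + 2c\,\E[Z])\, h\!\left(\frac{\varepsilon}{\tau + 2c\,\E[Z]}\right)\right),
\]
which is exactly \eqref{eq:bousquetimpl}. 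For the high-probability form \eqref{eq:sdvarbound}, I would use the standard Bennett-to-Bernstein inversion $h(u) \geq u^2/(2 + 2u/3)$ valid for $u \geq 0$, which turns the tail condition into a quadratic inequality in $\varepsilon$. Solving the quadratic (and using $\sqrt{a+b} \leq \sqrt{a} + \sqrt{b}$ to split the discriminant) gives the explicit Bernstein-style bound with a $\sqrt{2v\log(1/\delta)/m^2}$ leading term and an additive $c\log(1/\delta)/(3m)$ residual, matching \eqref{eq:sdvarbound}.

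For the negative-deviation version, the same argument applies to the class $\G' \doteq \{-g_f : f \in \F\}$, which has the same centering, the same $\|\cdot\|_\infty$ bound $c$, and the same worst-case variance $\tau$; thus Bousquet's inequality yields the identical bound for $Z \doteq \sup_{f}\{\E[f] - \frac{1}{m}\sum_j f(s_j)\}$.

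The main obstacle, which is more bookkeeping than substance, is tracking the scaling factor $m$ consistently when translating between Bousquet's $\sum_{i=1}^n g(X_i)$ formulation and the normalized $\frac{1}{m}\sum_j f(s_j) - \E[f]$ appearing in our SDs, and verifying that both $v$ and the additive term survive the substitution with the correct constants. Once that translation is made carefully, the result is an immediate consequence of Theorem~\ref{thm:bousquetbound} combined with the standard inversion of Bennett's function.
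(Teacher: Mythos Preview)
Your proposal is correct and follows exactly the approach the paper intends: the paper does not give a detailed proof of this theorem, stating only that ``Variance-dependent bounds to the SDs follow from Theorem~\ref{thm:bousquetbound},'' and your centering $g_f = f - \E[f]$, identification $d=c$, $\sigma^2=\tau$, $v=m(\tau+2c\E[Z])$, and subsequent Bennett-to-Bernstein inversion is precisely the standard derivation this sentence points to. The treatment of the negative deviation via $\G' = \{-g_f\}$ is also the natural reduction.
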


\section{New probabilistic bounds to the Supremum Deviations}
\label{sec:newboundssds}
\cite{bousquet2003concentration} shows that Theorem~\ref{thm:bousquetbound} can be applied to analyze the concentration of the supremum of empirical processes for sets of functions satisfying a sub-additive property, a variant of the $(1,0)$-self-bounding property with relaxed requirements; in fact, the supremum deviation is sub-additive (see Section~6 and Lemma~C.1 of \citep{bousquet2003concentration}), but is not, in general, $(1,0)$-self-bounding. Still, in this Section we show that the supremum deviation is $(1,\beta)$-self-bounding, for appropriate values of $\beta$ that depend on the maximum and minimum expectations of the elements of $\F$. Consequently, we obtain novel bounds to the supremum deviation by applying concentration results for self-bounding functions, similarly to what we did for the $\nmcera$. 

We first prove self-bounding properties for the supremum deviations. Define $\maxexpf$ and $\minexpf$ as the gaps between the boundaries of the codomains of functions in $\F$ and their expectations, such that
\begin{align*}
\maxexpf = \sup_{f\in \F} \E[f] - a \enspace , \enspace
\minexpf = b - \inf_{f\in \F} \E[f] \enspace .
\end{align*}
\begin{restatable}{theorem}{sbsupmax}
\label{thm:sbsupmax}
Assume $c\leq1$.
Let $g(\sample)$ be 
\begin{align*}
g(\sample) \doteq m \supdevpos = \sup_{f\in \F} \left\lbrace \sum\limits_{\substack{j=1}}^{m} f(s_{j}) - m \E\left[f\right] \right\rbrace \enspace .
\end{align*}
Then, $g(\sample)$ is a $\left(1,m\maxexpf \right)$-self-bounding function.
\end{restatable}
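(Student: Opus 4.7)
The plan is to verify directly the two conditions in the definition of a $(1,m\maxexpf)$-self-bounding function, using the canonical choice of the auxiliary functions suggested right after the definition, namely
\[
g_i(\sample^{(i)}) \doteq \inf_{s_i^\prime \in \X} g\pars{s_1, \dots, s_{i-1}, s_i^\prime, s_{i+1}, \dots, s_m} \enspace.
\]
Let $f^{\star} \in \F$ be a function attaining the supremum in the definition of $g(\sample)$ (if the supremum is not attained, one works with a near-maximizer and passes to a limit; we omit this routine step).

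First I would establish the range property $0 \leq g(\sample) - g_i(\sample^{(i)}) \leq 1$. The lower bound is immediate: choosing $s_i^\prime = s_i$ inside the infimum gives $g_i(\sample^{(i)}) \leq g(\sample)$. For the upper bound, note that for every $s_i^\prime \in \X$ the function $f^\star$ is a feasible choice for the supremum that defines $g$ with $s_i$ replaced by $s_i^\prime$, so
\[
g\pars{s_1, \dots, s_i^\prime, \dots, s_m} \geq \sum_{j \neq i} f^\star(s_j) + f^\star(s_i^\prime) - m \E[f^\star] = g(\sample) - f^\star(s_i) + f^\star(s_i^\prime) \enspace .
\]
Taking the infimum over $s_i^\prime$ and using $f^\star(s_i^\prime) \geq a$ gives $g_i(\sample^{(i)}) \geq g(\sample) - f^\star(s_i) + a$, hence $g(\sample) - g_i(\sample^{(i)}) \leq f^\star(s_i) - a \leq b - a = c \leq 1$ by the hypothesis.

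Next I would verify the self-bounding sum inequality. The key observation is that the same maximizer $f^{\star}$ can be used across all indices $i$, so the pointwise bound from the previous step sums telescopically into
\[
\sum_{i=1}^{m} \pars{ g(\sample) - g_i(\sample^{(i)}) } \leq \sum_{i=1}^{m} \pars{ f^{\star}(s_i) - a } = \sum_{i=1}^{m} f^{\star}(s_i) - m a \enspace .
\]
Adding and subtracting $m \E[f^{\star}]$ and recognizing $g(\sample) = \sum_i f^\star(s_i) - m\E[f^\star]$ by the choice of $f^{\star}$, we get
\[
\sum_{i=1}^{m} \pars{ g(\sample) - g_i(\sample^{(i)}) } \leq g(\sample) + m \pars{ \E[f^{\star}] - a } \leq g(\sample) + m \maxexpf \enspace ,
\]
since $\E[f^{\star}] - a \leq \sup_{f \in \F} \E[f] - a = \maxexpf$. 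This is exactly the required bound with $\alpha = 1$ and $\beta = m \maxexpf$.

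The only real subtlety is bookkeeping: ensuring that one and the same optimizer $f^{\star}$ is used in both steps (which is legitimate because $f^{\star}$ depends on $\sample$, not on $i$), and handling non-attained suprema by a standard $\epsilon$-optimizer argument. No nontrivial concentration input is needed; the proof is purely structural, and the hypothesis $c \leq 1$ enters only to force the per-coordinate increment into $[0,1]$ as required by the definition.
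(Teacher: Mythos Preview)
Your proof is correct and follows essentially the same route as the paper's own argument: the same choice of $g_i$ via the infimum over a replacement sample, the same lower bound $g_i(\sample^{(i)}) \geq g(\sample) - f^{\star}(s_i) + a$ obtained by fixing the maximizer $f^{\star}$, and the same add-and-subtract of $m\E[f^{\star}]$ to recognize $g(\sample)$ in the sum. The only minor differences are cosmetic (you make the inequality $g_i \leq g$ explicit, and you mention the $\varepsilon$-optimizer workaround for non-attained suprema), so there is nothing to flag.
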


\begin{restatable}{theorem}{sbsupmin}
\label{thm:sbsupmin}
Assume $c\leq1$.
Let $g(\sample)$ be 
\begin{align*}
g(\sample) \doteq m \supdevneg = \sup_{f\in \F} \left\lbrace m \E\left[f\right] - \sum\limits_{\substack{j=1}}^{m} f(s_{j}) \right\rbrace \enspace .
\end{align*}
Then, $g(\sample)$ is a $\left(1,m\minexpf \right)$-self-bounding function.
\end{restatable}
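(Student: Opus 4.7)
The approach I would take is essentially symmetric to the one implicit in Theorem~\ref{thm:sbsupmax}, adjusting the sign and swapping the roles of the upper and lower bounds $a$, $b$ of the codomain of $\F$. I would use the canonical choice of $g_i$ suggested in the paper's definition, namely
\[
g_i(\sample^{(i)}) \doteq \inf_{s_i' \in \X} g(s_1, \dots, s_{i-1}, s_i', s_{i+1}, \dots, s_m) \enspace .
\]
By this definition it is immediate that $g(\sample) - g_i(\sample^{(i)}) \geq 0$, handling the lower side of the $(\alpha,\beta)$-self-bounding property. The main work is to establish the upper bound of $1$ on the per-coordinate gap and to bound the sum of gaps by $g(\sample) + m\minexpf$.

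First I would let $f^{\star}$ be a function attaining (or approaching arbitrarily closely) the supremum defining $g(\sample) = m\E[f^{\star}] - \sum_{j=1}^{m} f^{\star}(s_j)$. Using $f^{\star}$ as a (in general, suboptimal) choice inside the supremum appearing in $g_i(\sample^{(i)})$ gives
\[
g_i(\sample^{(i)}) \geq \inf_{s_i' \in \X}\Bigl\{ m\E[f^{\star}] - f^{\star}(s_i') - \sum_{j \neq i} f^{\star}(s_j) \Bigr\} \geq g(\sample) + f^{\star}(s_i) - b \enspace ,
\]
since $\sup_{s_i' \in \X} f^{\star}(s_i') \leq b$. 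Rearranging yields $g(\sample) - g_i(\sample^{(i)}) \leq b - f^{\star}(s_i) \leq c \leq 1$, which gives the per-coordinate bound required by the definition.

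Next I would sum these inequalities over $i = 1, \dots, m$, obtaining
\[
\sum_{i=1}^{m} \bigl( g(\sample) - g_i(\sample^{(i)}) \bigr) \leq mb - \sum_{i=1}^{m} f^{\star}(s_i) \enspace .
\]
Substituting the identity $\sum_{i=1}^{m} f^{\star}(s_i) = m\E[f^{\star}] - g(\sample)$, the right-hand side becomes $g(\sample) + m(b - \E[f^{\star}])$, and using $b - \E[f^{\star}] \leq b - \inf_{f \in \F} \E[f] = \minexpf$ shows that the sum is at most $g(\sample) + m\minexpf$. This confirms the $(1, m\minexpf)$-self-bounding property.

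The only subtle point is the first step: ensuring that the sup is attained or, if it is not, arguing through a limiting sequence of near-optimizers $f^{\star}_{\epsilon}$ and letting $\epsilon \to 0$, which in either case makes the argument rigorous and does not affect the constants. Beyond that, the derivation is a routine adaptation of the proof of Theorem~\ref{thm:sbsupmax}, with the role of $\maxexpf = \sup_{f} \E[f] - a$ replaced by $\minexpf = b - \inf_{f} \E[f]$, reflecting the symmetry between $\supdevpos$ and $\supdevneg$.
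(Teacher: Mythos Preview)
Your proof is correct. It is the direct, ``hands-on'' mirror of the proof of Theorem~\ref{thm:sbsupmax}: pick a maximizer $f^\star$, lower-bound $g_i$ by plugging $f^\star$ into the inner supremum, use $\sup_{s'_i} f^\star(s'_i)\le b$ to get the per-coordinate gap $g-g_i\le b-f^\star(s_i)\le c\le 1$, and then sum and add/subtract $m\E[f^\star]$ to land on $g(\sample)+m\minexpf$.

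The paper, however, does not repeat this computation. Its proof of Theorem~\ref{thm:sbsupmin} is a one-line reduction: it sets $\F'\doteq\{-f:f\in\F\}$, observes that $\supdevneg$ for $\F$ equals $\supdevpos$ for $\F'$ and that $\eta_{\F'}=\sup_{f'}\E[f']-(-b)=b-\inf_{f}\E[f]=\minexpf$, and then invokes Theorem~\ref{thm:sbsupmax} applied to $\F'$. So the difference is purely structural: you re-derive the estimates from scratch, while the paper exploits the symmetry $f\mapsto -f$ to avoid duplication. Your route is self-contained and makes the role of $b$ (versus $a$) explicit; the paper's route is shorter and highlights that the two theorems are literally the same statement up to a sign flip. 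Neither buys any improvement in constants or generality.
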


We now apply the concentration inequalities given by Theorem~\ref{thm:boucheronsbf} to obtain novel bounds on the supremum deviations. The first results regards the concentration of $\supdevpos$.
\begin{restatable}{theorem}{continequpper}
\label{thm:continequpper}
Let $Z$ be
\begin{align*}
Z \doteq \supdevpos = \sup_{f\in \F} \left\lbrace \frac{1}{m} \sum\limits_{\substack{j=1}}^{m} f(s_{j}) - \E\left[f\right] \right\rbrace \enspace .
\end{align*}
Then, it holds
\begin{align}
\Pr \left( Z \geq \E \left[ Z \right] + \varepsilon \right) \leq \exp \left( - \frac{m \varepsilon^2}{2 c \left( \E\left[Z \right] + \maxexpf + \varepsilon/3 \right)} \right) . \label{eq:upperfirststatement}
\end{align}
Consequently, with probability $\geq 1-\delta$,
\begin{align}
Z \leq \E \left[ Z \right] + \sqrt{ \left( \frac{c \ln\left(\frac{1}{\delta}\right)}{3m} \right)^{2} + \frac{2 c \ln \left( \frac{1}{\delta} \right) \left( \E\left[ Z \right] + \maxexpf \right) }{m} } + \frac{ c \ln\left(\frac{1}{\delta}\right)}{3m} \enspace . \label{eq:upperfirststatementexpl}
\end{align}
\end{restatable}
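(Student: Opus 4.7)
The plan is to combine the self-bounding property of $m\,\supdevpos$ established in Theorem~\ref{thm:sbsupmax} with the upper-tail concentration inequality for $(\alpha,\beta)$-self-bounding functions given by the first part of Theorem~\ref{thm:boucheronsbf}, and then to invert the resulting tail bound to obtain the explicit high-probability statement \eqref{eq:upperfirststatementexpl}.

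Since Theorem~\ref{thm:sbsupmax} requires $c \leq 1$, the first step is to rescale: apply the self-bounding property to the class $\F' \doteq \{f/c : f\in\F\}$, whose codomain has range exactly $1$. For this rescaled class, the positive supremum deviation is $Z/c$, the quantity $\maxexpf$ becomes $\maxexpf/c$, and Theorem~\ref{thm:sbsupmax} gives that $g(\sample) \doteq m Z / c$ is a $(1, m\maxexpf/c)$-self-bounding function with $\E[g(\sample)] = m\E[Z]/c$. Applying the first part of Theorem~\ref{thm:boucheronsbf} with $\alpha=1$, $\beta = m\maxexpf/c$, and $\nu = (3\alpha-1)/6 = 1/3$ (so $(\nu)_{+} = 1/3$) yields
\[
\Pr\left(g(\sample) \geq \E[g(\sample)] + t\right) \leq \exp\left(-\frac{t^2}{2\left(m\E[Z]/c + m\maxexpf/c + t/3\right)}\right).
\]
The substitution $t = m\varepsilon/c$ converts the event on the left-hand side into $Z \geq \E[Z] + \varepsilon$, and collecting a factor $c$ in the denominator of the exponent produces exactly \eqref{eq:upperfirststatement}.

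For the explicit confidence bound \eqref{eq:upperfirststatementexpl}, I would set the right-hand side of \eqref{eq:upperfirststatement} to $\delta$ and solve for $\varepsilon$. Rearranging gives the quadratic
\[
m\varepsilon^2 - \frac{2c\ln(1/\delta)}{3}\,\varepsilon - 2c\ln(1/\delta)\bigl(\E[Z]+\maxexpf\bigr) = 0,
\]
whose positive root, after simplification of the discriminant, matches the stated expression. The main obstacle is purely bookkeeping: one must carefully track the scaling by $c$ throughout the rescaling step so that the factors $c$ in front of $(\E[Z]+\maxexpf)$ and in the residual $\varepsilon/3$ term appear in the exponent of \eqref{eq:upperfirststatement} in precisely the right places, and then correctly take the positive root of the resulting quadratic to obtain \eqref{eq:upperfirststatementexpl}. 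Neither step is delicate, but aligning the final expression with the stated form requires care.
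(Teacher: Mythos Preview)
Your proposal is correct and follows essentially the same route as the paper: invoke Theorem~\ref{thm:sbsupmax} to get the $(1,\beta)$-self-bounding property, then apply the first inequality of Theorem~\ref{thm:boucheronsbf} with $\nu=1/3$, and finally invert the tail bound by solving the quadratic. Two minor remarks: you are actually more careful than the paper in handling the constraint $c\le 1$ via the explicit rescaling $\F'=\{f/c:f\in\F\}$, whereas the paper's terse proof leaves this implicit; on the other hand, the paper explicitly records that $g(\sample)=mZ$ is non-negative (using $f_0\in\F$), a hypothesis of Theorem~\ref{thm:boucheronsbf} that you do not mention but which carries over immediately to the rescaled class.
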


An analogous result is valid for $\supdevneg$.
\begin{restatable}{theorem}{contineqlower}
\label{thm:contineqlower}
Let $Z$ be
\begin{align*}
Z \doteq \supdevneg =  \sup_{f\in \F} \left\lbrace \E\left[f\right] - \frac{1}{m} \sum\limits_{\substack{j=1}}^{m} f(s_{j}) \right\rbrace \enspace .
\end{align*}
Then, it holds
\begin{align}
\Pr \left( Z \geq \E \left[ Z \right] + \varepsilon \right) \leq \exp \left( - \frac{m \varepsilon^2}{2c\left( \E\left[Z \right] + \minexpf + \varepsilon/3 \right)} \right) . \label{eq:lowerfirststatement}
\end{align}
Consequently, with probability $\geq 1-\delta$,
\begin{align}
Z \leq \E \left[ Z \right] + \sqrt{ \left( \frac{ c \ln\left(\frac{1}{\delta}\right)}{3m} \right)^{2} + \frac{2 c \ln \left( \frac{1}{\delta} \right) \left( \E\left[ Z \right] + \minexpf \right) }{m} } + \frac{ c \ln\left(\frac{1}{\delta}\right)}{3m} \enspace . \label{eq:lowerfirststatementexpl}
\end{align}
\end{restatable}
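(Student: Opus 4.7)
The proof will mirror the strategy used for the upper-tail version in Theorem~\ref{thm:continequpper}, but now built on the self-bounding structure for $\supdevneg$ established in Theorem~\ref{thm:sbsupmin} instead of that for $\supdevpos$. The key observation is that $m\supdevneg$ is $(1, m\minexpf)$-self-bounding (for $c\le 1$), so the concentration inequality for $(\alpha,\beta)$-self-bounding functions stated in the first part of Theorem~\ref{thm:boucheronsbf} immediately applies.

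First I would reduce to the case $c\le 1$ by passing to the rescaled class $\F/c \doteq \{f/c : f\in\F\}$, whose elements take values in an interval of width $1$. The rescaled supremum deviation equals $Z/c$, so $m(Z/c)$ is $(1, m\minexpf/c)$-self-bounding by Theorem~\ref{thm:sbsupmin}, and $\E[mZ/c]=m\E[Z]/c$. Plugging $\alpha=1$ and $\beta=m\minexpf/c$ into the first bound of Theorem~\ref{thm:boucheronsbf}, with $(\nu)_{+}=(3\alpha-1)/6=1/3$, yields, for $t>0$,
\begin{equation*}
\Pr\!\pars{\tfrac{m}{c}Z \geq \tfrac{m}{c}\E[Z] + t} \leq \exp\!\pars{-\frac{t^2}{2\left(\tfrac{m}{c}\E[Z] + \tfrac{m}{c}\minexpf + \tfrac{t}{3}\right)}} \enspace .
\end{equation*}
Substituting $t = m\varepsilon/c$, so that the left-hand event becomes $\{Z\ge \E[Z]+\varepsilon\}$, and simplifying the denominator by factoring out $m/c$, gives exactly \eqref{eq:lowerfirststatement}.

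For the high-probability statement \eqref{eq:lowerfirststatementexpl}, I would set the right-hand side of \eqref{eq:lowerfirststatement} equal to $\delta$, which produces the quadratic
\begin{equation*}
m\varepsilon^2 - \tfrac{2c\ln(1/\delta)}{3}\varepsilon - 2c\ln(1/\delta)\pars{\E[Z]+\minexpf} = 0 \enspace ,
\end{equation*}
and solve for the positive root; the standard quadratic formula yields the claimed expression. The only nontrivial step in the whole argument is the rescaling-and-substitution bookkeeping needed to propagate the assumption $c\le 1$ of Theorem~\ref{thm:sbsupmin} into a bound valid for arbitrary $c$, together with matching the $1/3$ coming from $(\nu)_{+}$ with the $\varepsilon/3$ term in \eqref{eq:lowerfirststatement} and the $c\ln(1/\delta)/(3m)$ term in \eqref{eq:lowerfirststatementexpl}; everything else is a direct consequence of Theorems~\ref{thm:sbsupmin} and~\ref{thm:boucheronsbf}, and is structurally identical to the proof of Theorem~\ref{thm:continequpper} with $\maxexpf$ replaced by $\minexpf$.
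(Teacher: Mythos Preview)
Your proposal is correct and follows essentially the same approach as the paper: invoke Theorem~\ref{thm:sbsupmin} to get that (a rescaling of) $mZ$ is $(1,m\minexpf/c)$-self-bounding, apply the first bound of Theorem~\ref{thm:boucheronsbf} with $(\nu)_+ = 1/3$, and solve the resulting quadratic. You are in fact more explicit than the paper about the $1/c$ rescaling needed to meet the hypothesis $c\le 1$ of Theorem~\ref{thm:sbsupmin}; the only small omission is that you do not note $Z\ge 0$ (which holds since $f_0\in\F$), a prerequisite for applying Theorem~\ref{thm:boucheronsbf}.
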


We may observe that the novel bounds we proved are less versatile than the result of Bousquet, as they may give faster convergence rates (w.r.t. the bounded difference method) for only one side of the deviation at a time (i.e., either for $\supdevpos$ or $\supdevneg$) instead of both simultaneously; 
this is because, for the same $\F$, $\maxexpf$ and $\minexpf$ cannot be both small. 
However, we observe that such results may be applicable to properly selected \emph{subsets} of $\F$, in a localized fashion. 
It is not trivial to directly compare these bounds with Bousquet's, in particular \eqref{eq:bousquetimpl} as it is implicit. However, we observed that our new bounds are slightly sharper than Bousquet's for some range of the values of the quantities involved in the equations, since some of the constants are more favourable. 
In particular, we can see that, when $c=1$, the dependence of the additive error term for $\E[Z]$ of \eqref{eq:upperfirststatementexpl} (and \eqref{eq:lowerfirststatementexpl}) on $\E[Z]$ is lower than the one in \eqref{eq:sdvarbound} by a factor $\sqrt{2}$; therefore, when the squared term dominates the error term, \eqref{eq:upperfirststatementexpl} (resp. \eqref{eq:lowerfirststatementexpl}) is smaller than \eqref{eq:sdvarbound} when $\maxexpf \leq \E[Z] + \tau$ (resp., $\minexpf \leq \E[Z] + \tau$), as we will discuss in Section~\ref{sec:experiments} with some simulations. 
Therefore, we conclude that the combination of Theorem~\ref{thm:sdbousquetbound} and our new results gives opportunities to obtain sharper bounds to the SDs of general families of functions.

These results depend on, respectively, the maximum or minimum expected values of elements of $\F$, while Bousquet's inequality requires an upper bound to their maximum variance; a problem in applications is how to handle the cases where these quantities are not known in advance: one intuitive solution is to estimate them from the data.

Regarding the maximum variance $\tau = \sup_{f\in \F} Var(f)$, we point out that the bounds $\minexpf$ and $\maxexpf$ to the expectations of $f \in \F$ may be sufficient to handle it; in fact, from \cite{bhatia2000better}, one has that, for all $f$,
\begin{equation}
\label{eq:varbdineq}
Var(f) \leq \left( b - \E[f] \right) \left( \E[f] - a \right) \enspace ,
\end{equation}
with equality when $f$ has binary codomain $\brpars{ a , b}$.
Consequently, we have that 
\[
\tau \leq \sup \left\lbrace (b-x)(x-a) : x \in \left[ \inf_{f\in\F}\E[f] ~,~ \sup_{f\in\F}\E[f] \right] \right\rbrace 
\leq \max \brpars{ \minexpf \pars{c - \minexpf} , \maxexpf \pars{c - \maxexpf} }
\leq \minexpf \maxexpf \enspace .
\]
Therefore, bounds to $\minexpf$ and $\maxexpf$ are of interest, as they suffice for the application of our results and Bousquet's inequality, and may give particularly good bounds for binary functions. Thus, in the following, we show that it is possible to sharply estimate both $\maxexpf$ and $\minexpf$ from the data, analogously to the empirical estimator for the wimpy variance we proved in Section~\ref{sec:newboundsnone}. The proofs are in the Appendix.

Define the empirical estimators $\emaxexpf$ and $\eminexpf$ of, respectively, $\maxexpf$ and $\minexpf$ as
\begin{align*}
\emaxexpf = \sup_{f\in\F} \brpars{  \frac{1}{m} \sum_{i=1}^{m} f\pars{s_i}} - a \enspace , \enspace \eminexpf = b - \sup_{f\in\F} \brpars{  \frac{1}{m} \sum_{i=1}^{m} f\pars{s_i}} \enspace .
\end{align*}
\begin{restatable}{theorem}{maxexpfestimation}
\label{thm:maxexpfestimation}
For $\varepsilon \leq \maxexpf$, it holds 
\begin{align}
\Pr \pars{ \maxexpf \geq \emaxexpf + \varepsilon } \leq \exp \pars{ - \frac{m \maxexpf}{c} h\pars{ -\frac{\varepsilon}{ \maxexpf }} } \leq \exp\pars{ - \frac{ m \varepsilon^2}{2 c \maxexpf } } \enspace . \label{eq:maxvarest}
\end{align}
Furthermore, with probability $\geq 1 - \delta$, it holds
\begin{align}
\maxexpf \leq \emaxexpf + \frac{c \ln\pars{ \frac{1}{\delta} } }{ m } + \sqrt{ \pars{\frac{c \ln\pars{ \frac{1}{\delta} } }{ m }}^2 + \frac{2 c \emaxexpf \ln \pars{ \frac{1}{\delta}} }{ m } } \enspace . \label{eq:maxvarestexpl}
\end{align}
\end{restatable}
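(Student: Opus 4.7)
The proof hinges on reducing the control of $\maxexpf - \emaxexpf$ to a lower tail estimate for the empirical mean of a \emph{single} function. Let $f^\star$ attain (or approximately attain, via a routine limiting argument) the supremum $\sup_{f \in \F} \E[f]$, so that $\E[f^\star] = \maxexpf + a$. Since $\emaxexpf$ is itself a supremum, $\emaxexpf \geq \frac{1}{m}\sum_{i=1}^m f^\star(s_i) - a$, and therefore
\[
\maxexpf - \emaxexpf \;\leq\; \E[f^\star] - \frac{1}{m}\sum_{i=1}^m f^\star(s_i) \enspace .
\]
In particular, the event $\{\maxexpf \geq \emaxexpf + \varepsilon\}$ is contained in the event $\brpars{\E[f^\star] - \frac{1}{m}\sum_{i=1}^m f^\star(s_i) \geq \varepsilon}$.

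Next, consider $g(\sample) \doteq \frac{1}{c}\sum_{i=1}^{m}(f^\star(s_i) - a)$, a non-negative function of $m$ i.i.d.\ variables taking values in $[0, m]$. Using the natural construction $g_i(\sample^{(i)}) \doteq \frac{1}{c}\sum_{j \neq i}(f^\star(s_j) - a)$, the increments satisfy $0 \leq g(\sample) - g_i(\sample^{(i)}) = (f^\star(s_i) - a)/c \leq 1$, and $\sum_{i=1}^m (g(\sample) - g_i(\sample^{(i)})) = g(\sample)$. Hence $g$ is a $(1,0)$-self-bounding function, and crucially $\E[g] = m\maxexpf/c$ \emph{exactly} (because the supremum has been replaced by a fixed function), so that the downstream concentration bound is not weakened by Jensen slack.

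Applying the left-tail inequality of Theorem~\ref{thm:sbfbounds} for $(1,0)$-self-bounding functions with $t = m\varepsilon/c$ (which satisfies $t \leq \E[g]$ thanks to the hypothesis $\varepsilon \leq \maxexpf$), and translating back, we obtain
\[
\Pr\pars{\E[f^\star] - \frac{1}{m}\sum_{i=1}^m f^\star(s_i) \geq \varepsilon} \;\leq\; \exp\pars{-\frac{m\maxexpf}{c}\, h\pars{-\frac{\varepsilon}{\maxexpf}}} \enspace .
\]
Combined with the set inclusion from the first paragraph, this yields the first inequality of \eqref{eq:maxvarest}; the second follows from the standard estimate $h(-x) \geq x^2/2$ on $[0,1)$.

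Finally, for \eqref{eq:maxvarestexpl}, setting the right-hand side of the sub-Gaussian form equal to $\delta$ gives, with probability at least $1-\delta$, the implicit inequality $\maxexpf \leq \emaxexpf + \sqrt{2c\maxexpf \ln(1/\delta)/m}$. Treating this as a quadratic inequality in $\maxexpf$ and solving via the quadratic formula produces the claimed closed form, entirely analogous to the passage from \eqref{eq:wvarewvar} to \eqref{eq:wvarewvarexpl} in Theorem~\ref{thm:wvarestimation}. The only conceptual hurdle in the argument is the first step: recognizing that by passing to the single optimizing $f^\star$ one can reduce a supremum estimation problem to a $(1,0)$-self-bounding concentration inequality whose expectation coincides exactly with $m\maxexpf/c$, avoiding any uniform deviation machinery; the remaining calculations are then routine.
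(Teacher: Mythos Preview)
Your argument is correct, and it takes a genuinely different route from the paper's. The paper proves that the empirical quantity $g(\sample)=m\,\emaxexpf$ itself is $(1,0)$-self-bounding (a supremum over $\F$), applies Theorem~\ref{thm:sbfbounds} to obtain concentration around $\E_\sample[\emaxexpf]$, and then passes to $\maxexpf$ in two steps: Jensen gives $\maxexpf\le\E_\sample[\emaxexpf]$, and monotonicity of $x\mapsto -x\,h(-\varepsilon/x)$ lets one replace $\E_\sample[\emaxexpf]$ by $\maxexpf$ in the exponent. By contrast, you fix the single maximizer $f^\star$ of $\E[f]$, reduce via the inclusion $\{\maxexpf\ge\emaxexpf+\varepsilon\}\subseteq\{\E[f^\star]-\tfrac1m\sum_i f^\star(s_i)\ge\varepsilon\}$, and apply the $(1,0)$-self-bounding bound to the rescaled sum $\tfrac1c\sum_i(f^\star(s_i)-a)$, whose expectation is exactly $m\maxexpf/c$. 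Your approach is the more elementary one: it is essentially Bennett's lower-tail inequality for a single bounded empirical mean, with no supremum structure, no Jensen step, and no monotonicity argument. The paper's approach, on the other hand, additionally establishes that $\emaxexpf$ is $(1,0)$-self-bounding as a function of $\sample$, which is a structural fact parallel to what is done for $\ewvar$ in Theorem~\ref{thm:wvarestimation} and yields two-sided concentration of $\emaxexpf$ around its own mean, not only an upper bound on $\maxexpf$.
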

\begin{restatable}{theorem}{minexpfestimation}
\label{thm:minexpfestimation}
For $\varepsilon \leq \minexpf$, it holds 
\begin{align}
\Pr \pars{ \minexpf \geq \eminexpf + \varepsilon } \leq \exp \pars{ - \frac{m \minexpf}{c} h\pars{ -\frac{\varepsilon}{ \minexpf }} } \leq \exp\pars{ - \frac{ m \varepsilon^2}{2 c \minexpf } } \enspace . \label{eq:minvarest}
\end{align}
Furthermore, with probability $\geq 1 - \delta$, it holds
\begin{align}
\minexpf \leq \eminexpf + \frac{c \ln\pars{ \frac{1}{\delta} } }{ m } + \sqrt{ \pars{\frac{c \ln\pars{ \frac{1}{\delta} } }{ m }}^2 + \frac{2 c \eminexpf \ln \pars{ \frac{1}{\delta}} }{ m } } \enspace . \label{eq:minvarestexpl}
\end{align}
\end{restatable}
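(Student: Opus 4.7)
The cleanest path is to reduce the claim directly to Theorem~\ref{thm:maxexpfestimation} through a reflection of the class of functions. Let $\F' \doteq \{ b - f : f \in \F \}$; each $g \in \F'$ satisfies $g(x) \in [0, c]$, so $\F'$ has range constant exactly $c$, and $\sup_{g \in \F'} \E[g] - 0 = b - \inf_{f \in \F} \E[f] = \minexpf$. The analogous identity $\sup_{g \in \F'} \frac{1}{m}\sum_{i=1}^{m} g(s_i) - 0 = b - \inf_{f \in \F} \frac{1}{m}\sum_{i=1}^{m} f(s_i) = \eminexpf$ holds for the empirical estimator. Thus the ``$\maxexpf$'' and ``$\emaxexpf$'' parameters of $\F'$ are exactly $\minexpf$ and $\eminexpf$ of $\F$, and applying Theorem~\ref{thm:maxexpfestimation} to $\F'$ with confidence $\delta$ delivers both \eqref{eq:minvarest} and \eqref{eq:minvarestexpl} verbatim.

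To spell out the self-bounding content used by Theorem~\ref{thm:maxexpfestimation}, I would fix $f^* \in \F$ achieving (or arbitrarily closely approaching) $\inf_{f \in \F}\E[f] = b - \minexpf$ and work with the \emph{single-function} sum $g(\sample) \doteq \sum_{i=1}^m (b - f^*(s_i))/c$, a nonnegative function with $\E[g] = m\minexpf/c$. Each summand lies in $[0,1]$, and setting $g_i(\sample^{(i)}) \doteq \inf_{s_i' \in \X} g(s_1,\ldots,s_i',\ldots,s_m)$ gives $g(\sample) - g_i(\sample^{(i)}) = (b - f^*(s_i))/c$, so $\sum_{i=1}^m (g - g_i) = g$, i.e., $g$ is $(1,0)$-self-bounding. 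Applying Theorem~\ref{thm:sbfbounds} (the $(1,0)$-self-bounding left tail) with $t = m\varepsilon/c$ yields, after rearrangement, $\Pr\pars{\frac{1}{m}\sum_{i=1}^m f^*(s_i) - \E[f^*] \geq \varepsilon} \leq \exp\pars{-\frac{m\minexpf}{c}\,h\pars{-\frac{\varepsilon}{\minexpf}}}$. Next, the event $\{\minexpf \geq \eminexpf + \varepsilon\}$ is contained in $\{\frac{1}{m}\sum_i f^*(s_i) - \E[f^*] \geq \varepsilon\}$: it is equivalent to $\inf_{f\in\F} \frac{1}{m}\sum_i f(s_i) \geq \inf_{f\in\F}\E[f] + \varepsilon$, which together with $\frac{1}{m}\sum_i f^*(s_i) \geq \inf_{f\in\F}\frac{1}{m}\sum_i f(s_i)$ and $\E[f^*] = \inf_{f\in\F}\E[f]$ forces $\frac{1}{m}\sum_i f^*(s_i) - \E[f^*] \geq \varepsilon$. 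The sub-Gaussian form in \eqref{eq:minvarest} follows from the elementary estimate $h(-u) \geq u^2/2$ on $u \in [0,1]$, and \eqref{eq:minvarestexpl} is obtained by solving the quadratic inequality $\minexpf - \eminexpf \leq \sqrt{2c\minexpf \ln(1/\delta)/m}$ in $\sqrt{\minexpf}$, exactly as done for Theorem~\ref{thm:rcboundselfbounding} and Theorem~\ref{thm:maxexpfestimation}.

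The main subtlety, and what rules out a direct self-bounding argument applied to $m\eminexpf/c$ itself, is that Theorem~\ref{thm:sbfbounds} would then produce an exponent in terms of $\E[\eminexpf]$ rather than $\minexpf$; since $\E[\eminexpf] \geq \minexpf$ (by Jensen applied to the supremum of empirical means) while the map $x \mapsto x\,h(-t/x)$ is monotonically decreasing in $x$, replacing $\E[\eminexpf]$ by the smaller $\minexpf$ in that exponent would weaken rather than tighten the bound. The reduction to the fixed $f^*$ sidesteps this obstacle because the expectation of the associated i.i.d.\ sum is exactly $m\minexpf/c$, the very quantity that must appear in the final exponent.
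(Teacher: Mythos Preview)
Your opening reduction is the paper's proof: the paper takes $\F' = \{-f : f\in\F\}$ with codomain $[-b,-a]$, notes $\eta_{\F'} = \minexpf$ and $\hat\eta_{\F'}(\sample) = \eminexpf$, and applies Theorem~\ref{thm:maxexpfestimation}. Your choice $\F' = \{b-f\}$ differs only by an additive constant and yields the same identification of parameters, so the first paragraph matches the paper exactly.

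The second and third paragraphs go beyond what the paper writes for this theorem and, in unpacking the self-bounding content, take a genuinely different route from the paper's own proof of Theorem~\ref{thm:maxexpfestimation}. There, the paper shows that the \emph{full-class} estimator $m\emaxexpf$ is $(1,0)$-self-bounding, applies Theorem~\ref{thm:sbfbounds} to get an exponent in $\E_\sample[\emaxexpf]$, and then replaces $\E_\sample[\emaxexpf]$ by $\maxexpf$ via the monotonicity of $x\mapsto xh(-\varepsilon/x)$. You instead fix a \emph{single} minimiser $f^\star$, so the self-bounding sum has expectation exactly $m\minexpf/c$ and no substitution in the exponent is needed; you then close with the event containment $\{\minexpf \geq \eminexpf+\varepsilon\} \subseteq \{\tfrac{1}{m}\sum_i f^\star(s_i)-\E[f^\star] \geq \varepsilon\}$. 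One wording slip in your last paragraph: since $x\mapsto xh(-t/x)$ is decreasing (as you correctly state), passing from the larger $\E[\eminexpf]$ to the smaller $\minexpf$ in the exponent would \emph{tighten} the probability bound, not weaken it --- which is precisely why that substitution is not justified by monotonicity alone, and why your single-function route is the safer argument. In fact the monotonicity step as written in the paper's proofs of Theorems~\ref{thm:wvarestimation} and~\ref{thm:maxexpfestimation} appears to go the wrong way for the inequality claimed there, so your detour is well motivated.
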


\section{Simulations}
\label{sec:experiments}

In this Section we perform some simulations to compare our new bounds with standard available bounds, presented in Section \ref{sec:stdbounds}.

In Section \ref{sec:simboundsera} we compare different upper bounds to the ERA, computed from the $\nmcera$.
In Section \ref{sec:simboundsrc} we compare different approaches to bound the RC from either direct bounds from the $\nmcera$, or with intermediate bounds to the ERA.
In Section \ref{sec:experimentsbooundsSDs} we compare the variance-dependent bound on the SDs, from Bousquet's inequality, with our novel bounds, presented in Section \ref{sec:newboundssds}.

\subsection{Bounds to the ERA}
\label{sec:simboundsera}
In this Section we compare the standard bound given by the bounded difference method presented in Section \ref{sec:erastdbounds} with the novel bounds presented in this work in Section \ref{sec:newboundserasb}.

To do so, we fix $m=10^6$, $z = c = \emaxf = 1$, and we simulate some values for the $\mcera$ $\mera$ as functions of $\emaxexpf$ in the interval $[1/m , 1]$. For all such values, we compute upper bounds to the ERA $\era$ using the standard bound of Theorem~\ref{thm:mceraeramcdiarmid} and our novel bound of Equation~\ref{eq:explicitboundabs}, fixing in all cases $\delta = 0.05$.
For a given value of $\emaxexpf$, we set $\mera$ to
\begin{align}
\label{eq:simulatedvaluemcera}
\mera = \min \brpars{ \sqrt{ \frac{ \emaxexpf \ln \pars{C} }{m} } , \emaxexpf } \enspace ,
\end{align}
with $C=10^6$. 
This quantity gives a bound to the $\nmcera$ that is comparable to one given by Massart's Lemma for a family of functions of size $C$, in order to simulate a realistic rate of decay of the ERA~\citep{boucheron2013concentration}. The results for $n \in \brpars{1 , 10 , 10^2 }$ are shown in Figures~\ref{fig:sub1}-\ref{fig:sub3}. 
We also consider the worst-case $\mera = \emaxexpf$, least favourable for our bounds, in Figure~\ref{fig:sub4}.


\begin{figure}
\centering
\begin{subfigure}{.49\textwidth}
  \centering
  \includegraphics[width=.95\linewidth]{./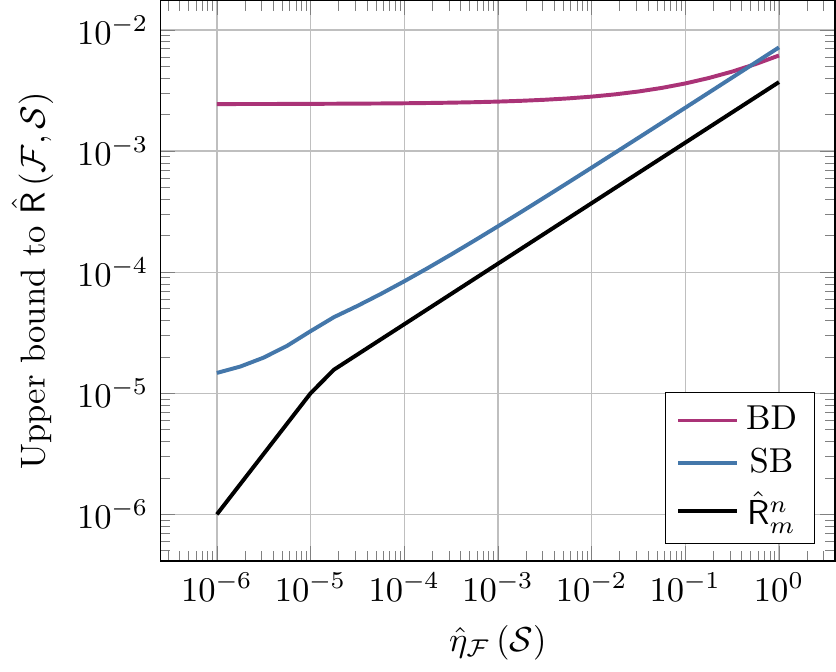}
  \caption{$n=1$.}
  \label{fig:sub1}
\end{subfigure}%
\begin{subfigure}{.49\textwidth}
  \centering
  \includegraphics[width=.95\linewidth]{./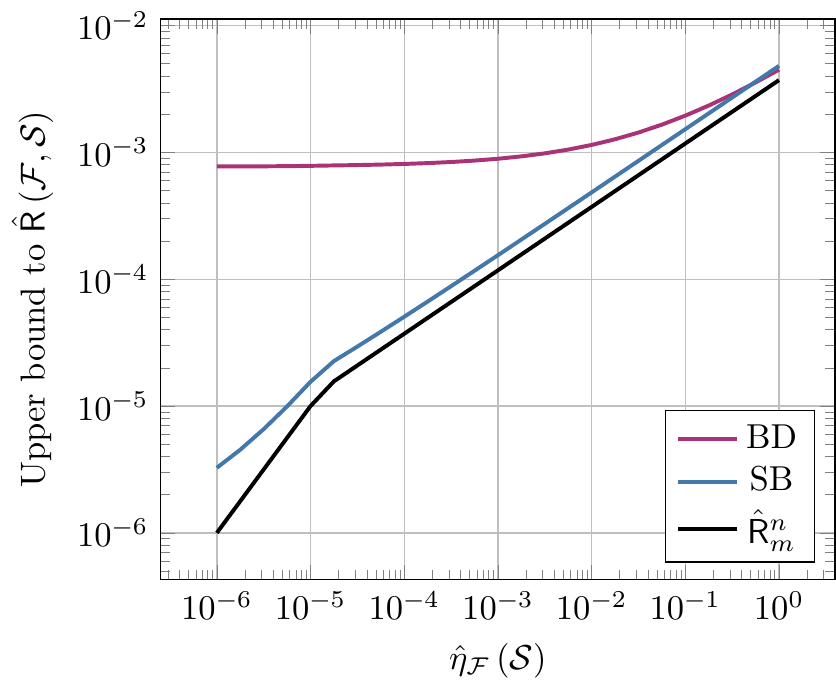}
  \caption{$n=10$.}
  \label{fig:sub2}
\end{subfigure}
\begin{subfigure}{.49\textwidth}
  \centering
  \includegraphics[width=.95\linewidth]{./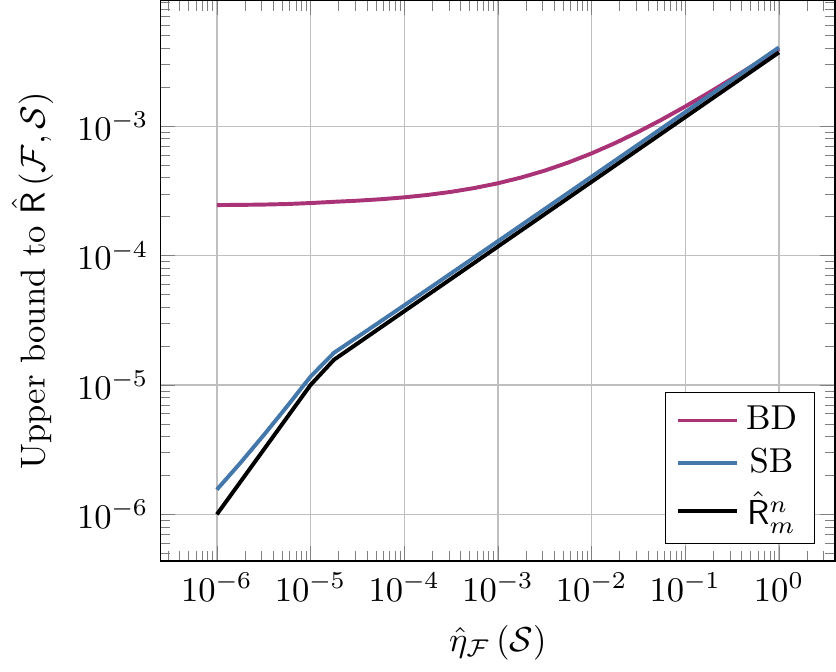}
  \caption{$n=10^2$.}
  \label{fig:sub3}
\end{subfigure}
\begin{subfigure}{.49\textwidth}
  \centering
  \includegraphics[width=.95\linewidth]{./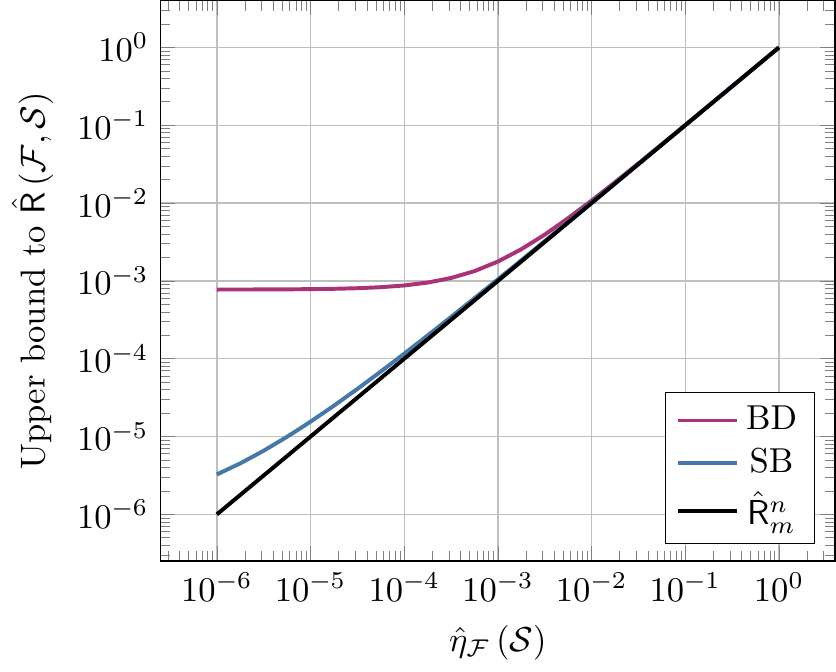}
  \caption{$n=10$, $\mera = \emaxexpf$.}
  \label{fig:sub4}
\end{subfigure}
\caption{Comparison between the upper bounds on the ERA $\era$ using the standard Bounded Difference method (in the plots with label BD, from Theorem \ref{thm:mceraeramcdiarmid}) and our novel bounds, analyzed with self-bounding functions (in the plots with label SB, Equation \ref{eq:explicitboundabs} from Theorem \ref{thm:boundsempquant}), computed from the the $\nmcera$ $\mera$ (in the plots with label $\erade^n_m$), as functions of $\emaxexpf$. We fixed $\delta = 0.05$, $m=10^6$, and fix $n$ to: (a) $n=1$, (b) $n=10$, (c) $n=10^2$, (d) $n=10$. The $\nmcera$ $\mera$ is simulated to be \eqref{eq:simulatedvaluemcera} in (a-c), and set to $\emaxexpf$ (its highest possible value) in (d).  }
\label{fig:boundsERA}
\end{figure}

Figure \ref{fig:boundsERA} clearly shows the slow rate $\BLi{\sqrt{1/(mn)}}$ given by the standard bound of Theorem \ref{thm:mceraeramcdiarmid}, as opposed to our much sharper result of Equation \ref{eq:explicitboundabs}, from Theorem \ref{thm:boundsempquant}, that instead scales with $\emaxexpf$. Analogous observations would hold for the bound of Equation \ref{eq:explicitboundwvar}, that instead depends on $\ewvar$. 

\subsection{Bounds to the RC}
\label{sec:simboundsrc}
In this Section we evaluate different upper bounds to the RC $\rc$. In particular, we compare the composition of standard bounds (Theorems \ref{thm:mceraeramcdiarmid} and \ref{thm:rcboundselfbounding}) and two alternative approaches based on our contributions: we consider the combination of the standard self-bounding bound to the RC from the ERA (Theorem \ref{thm:rcboundselfbounding}), and our novel bound of Equation \ref{eq:explicitboundabs} from the $\nmcera$ to the ERA. 
For $n=1$, we also evaluate the accuracy of the direct bound to the RC from the $\nmcera$ (Equation \ref{eq:nonevarianceawareboundexpl}), bounding the wimpy variance from the empirical wimpy variance, as showed by Theorem \ref{thm:wvarestimation} (Equation \ref{eq:wvarewvarexpl}). 
We use the same parameters of Section \ref{sec:simboundsera}, but we vary $\ewvar$ in the interval $[1/m , 1]$. 
To simulate values for $\mera$, we use again \eqref{eq:simulatedvaluemcera} replacing $\emaxexpf$ by $\ewvar$, and also consider the case $\mera = \ewvar$.
Results for different values of $n$ are shown in Figure \ref{fig:boundsRC}. 

\begin{figure}
\centering
\begin{subfigure}{.49\textwidth}
  \centering
  \includegraphics[width=.95\linewidth]{./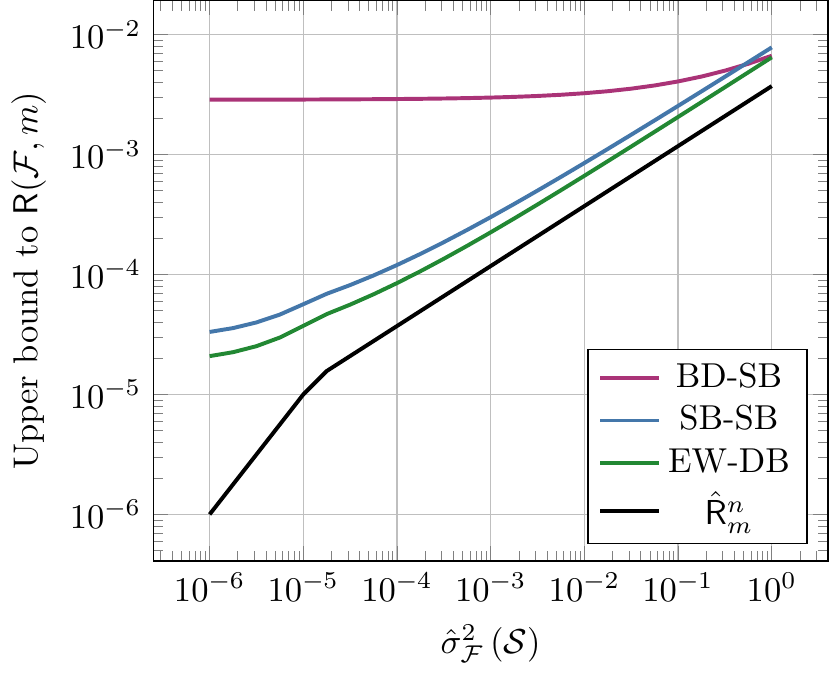}
  \caption{$n=1$.}
  \label{fig:rcsub1}
\end{subfigure}%
\begin{subfigure}{.49\textwidth}
  \centering
  \includegraphics[width=.95\linewidth]{./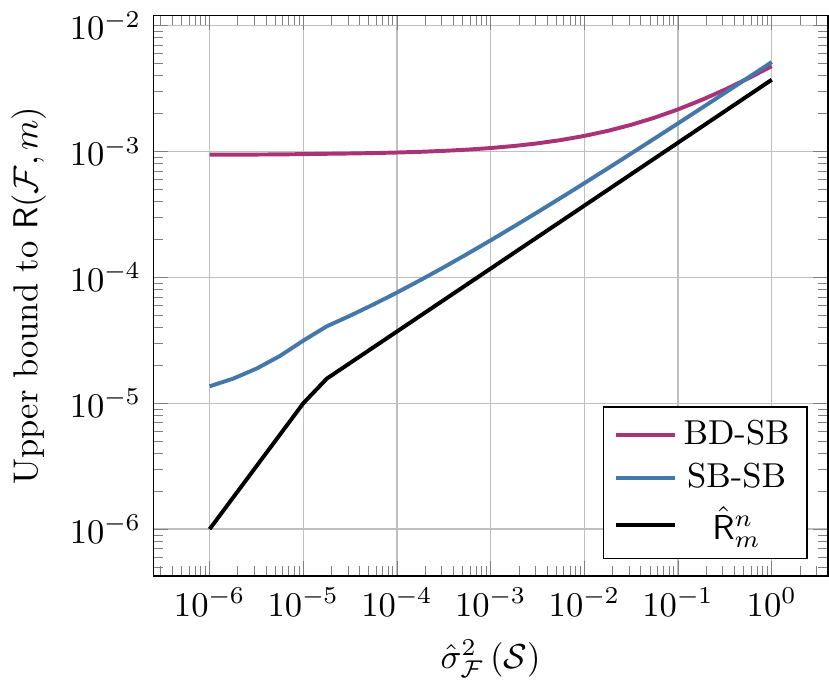}
  \caption{$n=10$.}
  \label{fig:rcsub2}
\end{subfigure}
\begin{subfigure}{.49\textwidth}
  \centering
  \includegraphics[width=.95\linewidth]{./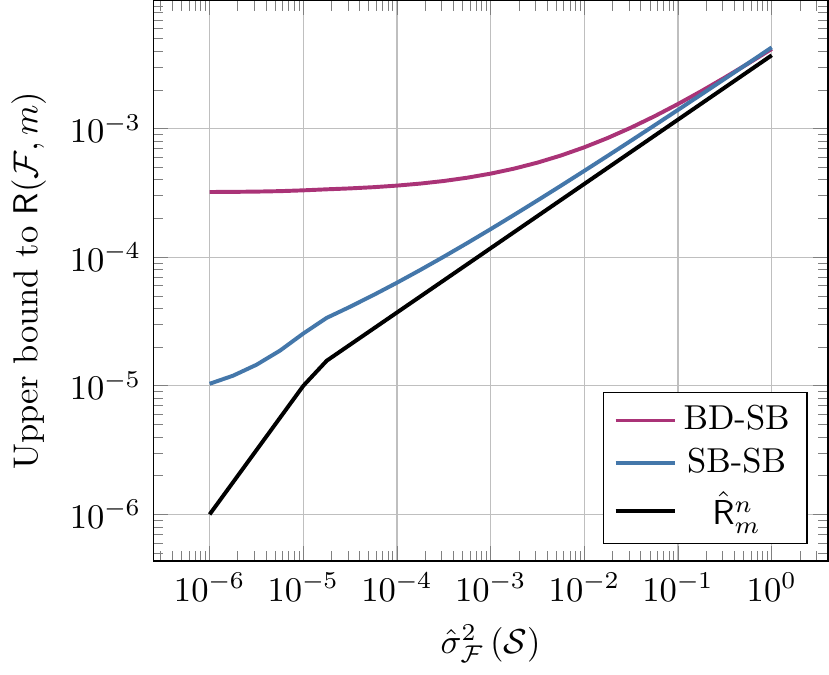}
  \caption{$n=10^2$.}
  \label{fig:rcsub3}
\end{subfigure}
\begin{subfigure}{.49\textwidth}
  \centering
  \includegraphics[width=.95\linewidth]{./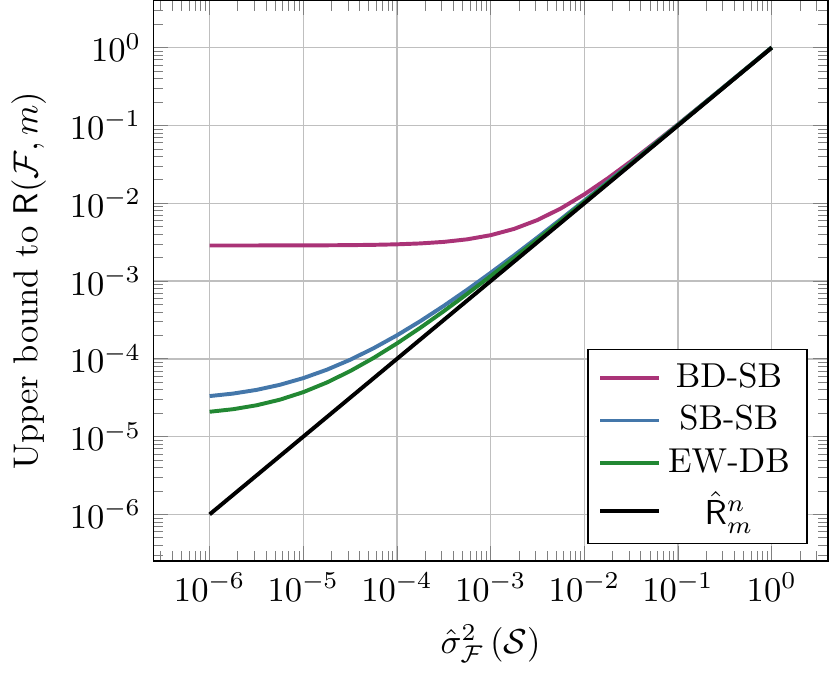}
  \caption{$n=1$, $\mera = \ewvar$.}
  \label{fig:rcsub4}
\end{subfigure}
\caption{Comparison between the upper bounds on the RC $\rc$ using standard bounds (in the plots with label BD-SB, from Theorems \ref{thm:mceraeramcdiarmid} and \ref{thm:rcboundselfbounding}), our novel bounds, analyzed with self-bounding functions (in the plots with label SB-SB, Equation \ref{eq:explicitboundabs} from Theorems \ref{thm:rcboundselfbounding} and \ref{thm:boundsempquant}), computed from the the $\nmcera$ $\mera$ (in the plots with label $\erade^n_m$), as functions of $\ewvar$. For $n=1$, we also show the direct bound of Equation \ref{eq:nonevarianceawareboundexpl}, where the wimpy variance is estimated using Equation \ref{eq:wvarewvarexpl} (in the plots with label EW-DB). We fixed $\delta = 0.05$, $m=10^6$, and fix $n$ to: (a) $n=1$, (b) $n=10$, (c) $n=10^2$, (d) $n=1$. The $\nmcera$ $\mera$ is simulated to be \eqref{eq:simulatedvaluemcera} in (a-c), and set to $\ewvar$ in (d). }
\label{fig:boundsRC}
\end{figure}

We conclude from Figure \ref{fig:boundsRC} that the slow rate given by estimating the ERA though the standard bound (BD-SD) propagates to the bound to the RC $\rc$, following the same $\BLi{\sqrt{1/(mn)}}$ trend observed in Section \ref{sec:simboundsera}, with analogous results for different values for $m$. Therefore, our novel bounds are essential to achieve faster rates of convergence for estimating the RC from the $\nmcera$. We further observe that, in the case $n=1$ (Figures \ref{fig:rcsub1} and \ref{fig:rcsub4}), our novel direct bound (EW-DB, Equation \ref{eq:nonevarianceawareboundexpl}) achieve even sharper guaranteed accuracy w.r.t. the ``full self-bounding'' approach (SB-SB), due to some of the constants being more favourable, and thanks to the sharp empirical estimator of the wimpy variance (Equation \ref{eq:wvarewvarexpl}), as we discussed in more details in Section \ref{sec:newboundsnone}. 

\subsection{Bounds to the SDs}
\label{sec:experimentsbooundsSDs}
Next, we consider our newly introduced bounds to the SDs (SB, from Theorems \ref{thm:continequpper} and \ref{thm:contineqlower}) to assess their behaviour w.r.t. the variance-dependent bound (VD), obtained through Bousquet's inequality (Theorem \ref{thm:sdbousquetbound}). 
To do so, we consider the setting of binary functions $f : \X \rightarrow \{ 0 , 1 \}$, of interest, for example, for evaluating the performance of classification models with $0$-$1$ loss. 
This setting simplifies the evaluation of $\tau = \sup_f \{ Var(f) \}$ to $\tau = \sup_f \{ \E[f] (1-\E[f]) \}$, required by Bousquet's inequality. 
We focus on bounding $\supdevpos$, using Bousquet's inequality \eqref{eq:sdvarbound} and our novel result \eqref{eq:upperfirststatementexpl}. We compute both values varying $\E[\supdevpos]$ and $\maxexpf$ over $[1/m,1/2]^2$ (so that, in such case, $\tau = \maxexpf(1-\maxexpf)$), and compute the ratio between the upper bounds to $\E[\supdevpos]$. We report in Figure contour plots (with levels $\in \{ 0.95 , 0.98 , 1 , 1.02 , 1.05 , 1.1 , 1.15 \}$) for such measurements, fixing $\delta=0.05$ and $m \in \{10^3 , 10^6 \}$. 


\begin{figure}
\centering
\begin{subfigure}{.49\textwidth}
  \centering
  \includegraphics[width=.95\linewidth]{./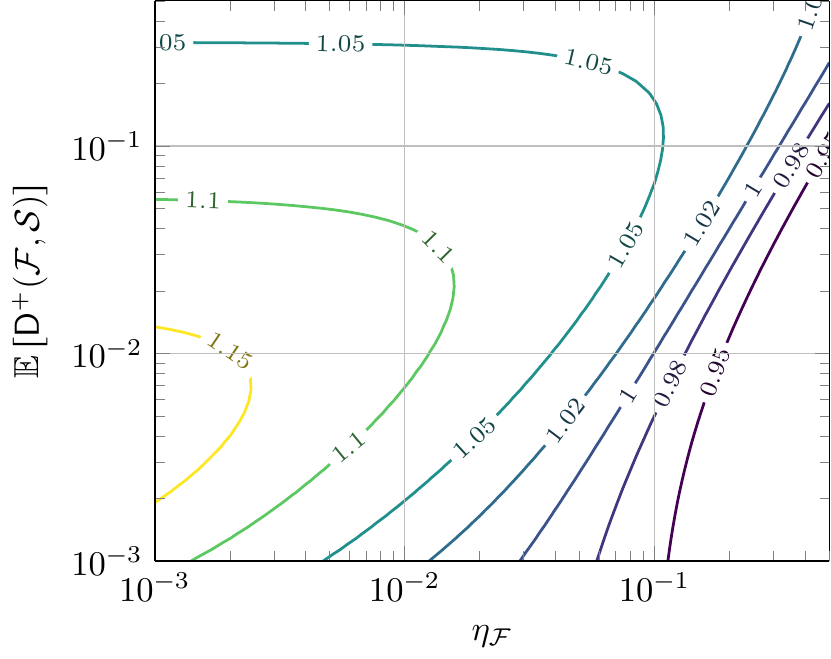}
  \caption{$m=10^3$.}
  \label{fig:sdsub1}
\end{subfigure}%
\begin{subfigure}{.49\textwidth}
  \centering
  \includegraphics[width=.95\linewidth]{./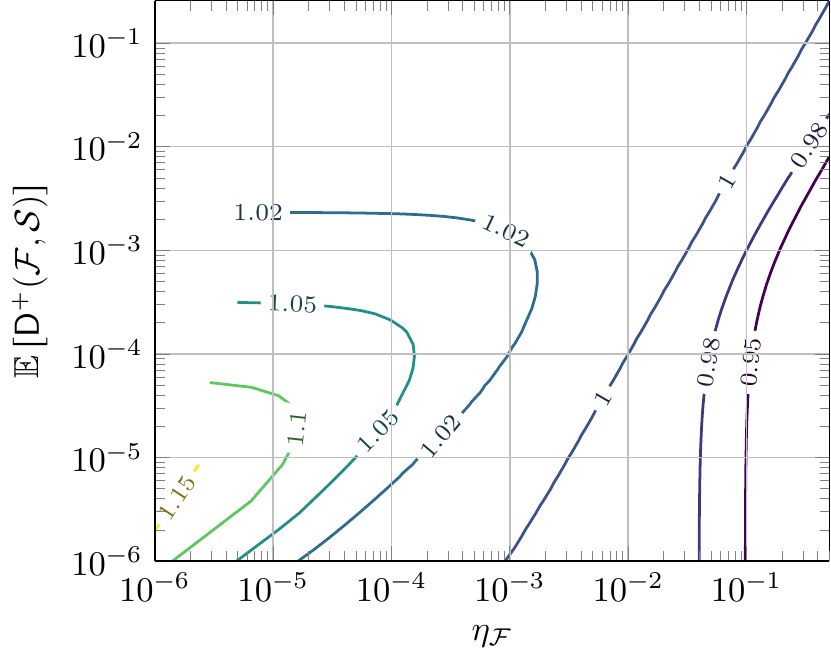}
  \caption{$m=10^6$.}
  \label{fig:sdsub2}
\end{subfigure}
\caption{Comparison between the upper bounds on the SD $\supdevpos$ using the variance-dependent bound of Bousquet (VD, Equation~\ref{eq:sdvarbound}) and our novel bound, analyzed with self-bounding functions (SB, Equation~\ref{eq:explicitboundabs}), for values of $\E[\supdevpos]$ and $\maxexpf$ over $[1/m,1/2]^2$. The plot shows levels of constant values of the ratio between VD and SB (VD is larger than SB for ratio $>1$). The number of samples is $m=10^3$ in (a), and $10^6$ in (b). }
\label{fig:boundsSD}
\end{figure}

From Figure \ref{fig:boundsSD} we can conclude that, as intuitively guessed in Section \ref{sec:newboundssds}, there is a region of values of $\E[\supdevpos]$ and $\maxexpf$ in which our novel bound is tighter, that is the region close to where $\maxexpf \leq \E[\supdevpos] + \tau = \E[\supdevpos] + \maxexpf(1-\maxexpf)$. 
We may see that such difference is more pronounced with relatively average size samples (Figure~\ref{fig:sdsub1}), but still present for larger samples (Figure~\ref{fig:sdsub2}).

\section{Conclusions}
In this work we studied the self-bounding properties of the $\nmcera$, and show that they allow to derive novel sharper concentration bounds w.r.t. its expectation. Obtaining tight error rates on the $\nmcera$ is of central importance to obtain tight probabilistic upper bounds on the Rademacher Averages and, therefore, uniform deviation bounds to the maximum deviation between empirical means and their expectations of sets of functions.

While in this work we focused on deriving concentration results valid with high probability in finite samples, another interesting direction is to combine the self-bounding properties we proved with asymptotical concentration results, such as the Central Limit Theorem for martingales~\citep{hall2014martingale}. 
In fact, \citet{de2019rademacher} (in their Theorem 6) have shown how to apply this result to bound the Supremum Deviation from the $\nmcera$; as they discuss, in many applications asymptotic bounds may be preferred as they may be sharper than their finite-sample counterparts, in particular when the size of the analysed data is sufficiently large and the convergence to the normal distribution is reasonably accurate.
The self-bounding properties we proved in this work imply tighter bounds on the variance of the random processes modelled by such martingales (see Chapter 6.11 of~\citet{boucheron2013concentration}); therefore, an interesting question is whether our results could enable a sharper application of the Central Limit Theorem for martingales in such setting.

Then, we remark that extending Theorem~\ref{eq:nonevarianceawarebound} to directly bound the SDs and for $n>1$ should be possible, provided that a careful analysis of the maximum variance of a properly modified set of functions (analogous to the set $\G$ defined in the proof of Theorem~\ref{thm:onedrawvaraware}) is handled. \citet{de2019rademacher} follow this idea (in their Theorems~2 and~4) to directly bound the RC or SDs from the $\nmcera$; the resulting bounds are derived by controlling the covariances of the random variables involved in their martingales. We believe that combining such derivations with our application of Bousquet's inequality is an interesting direction to explore, as, according to our simulations (Section \ref{sec:simboundsrc}), such approach seems quite promising.

Finally, we conclude by observing that there is a gap between the guaranteed concentration of $(1,0)$-self-bounding functions and general (weakly) $(\alpha,\beta)$-self-bounding functions (i.e., see the bounds of Theorems \ref{thm:boucheronsbf} and \ref{thm:sbfbounds}), as the latter do not enjoy the same Bennet-type bounds of the former; the same holds between the two sides of Bousquet's inequality (see Section 12.5 of \citep{boucheron2013concentration}, and Theorem \ref{thm:bousquetbound} and Corollary \ref{thm:leftbousquet}). Filling these gaps is an interesting and important research question. 
We point out that, in case stronger results on the concentration of self-bounding functions may be obtained, they would immediately be applicable to obtain even stronger convergence bounds for $\nmcera$, thanks to its self-bounding properties we have shown in this work.

Another fundamental and extremely interesting future reseach direction is to consider the concentration of unbounded functions~\citep{kontorovich2014concentration,mendelson2014learning,cortes2019relative,grunwald2020fast}, of great interest in many applications.

\section{Acknowledgments}
We would like to thank Fabio Vandin for fruitful discussions and precious comments that improved this manuscript.

%
%
%
%

\newpage 
\appendix

\section{Missing proofs}
\subsection{Proof of Theorem \ref{thm:selfboundingmcrade}}
\selfboundingmcrade*
\begin{proof}
Denote the function $g_{j,i}(\vsigma)$, for $j \in [1,n]$ and $i \in [1,m]$, as
\begin{equation*}
g_{j,i}(\vsigma) \doteq \inf_{\vsigma_{j,i}^{\prime} \in \{ -1,1 \}} \left\lbrace \sum\limits_{\substack{v=1 \\ v\neq j}}^{n} \left[ \sup_{f
  \in \F}  \sum_{h = 1}^{m} \vsigma_{v,h}
  f(s_{h}) \right] + \sup_{f
  \in \F} \left\lbrace   \sum\limits_{\substack{h=1 \\ h\neq i}}^{m} \left( \vsigma_{j,h}
  f(s_{h}) \right) + \vsigma_{j,i}^{\prime} f(s_i) \right\rbrace \right\rbrace .
\end{equation*}
This function correspond to $g(\vsigma)$ where the element $\vsigma_{j,i}$ of coordinates $(i,j)$ of $\vsigma$ is replaced by $\vsigma^{\prime}_{j,i} \in \{-1,1 \}$; in addition, we take the infimum over $\vsigma^{\prime}_{j,i} \in \{ -1 , 1 \}$.
We remark that, even if $\vsigma$ is the argument of $g_{j,i}$ to simplify notation, $\vsigma_{j,i}$ never appears in the definition of $g_{j,i}(\vsigma)$, as required in the definition of self-bounding functions. 
To show that $g(\vsigma)$ is $(\alpha,\beta)$-self-bounding, according to the definition, we have to show that, for all $\vsigma \in \{-1,1\}^{n \times m}$, the inequalities
\begin{align*}
0 \leq g(\vsigma) - g_{j,i}(\vsigma) \leq 1 \enspace ,
\end{align*}
and
\begin{align*}
\sum_{j=1}^{n}\sum_{i=1}^{m}( g(\vsigma) - g_{j,i}(\vsigma) ) \leq \alpha g(\vsigma) + \beta \numberthis \label{eq:eqsumtoprove}
\end{align*}
all hold for some non-negative $\alpha$ and $\beta$.
First, $g(\vsigma) \geq g_{j,i}(\vsigma)$ follows from writing $g_{j,i}(\vsigma)$ as 
\begin{align*}
g_{j,i}(\vsigma) = \min \Bigg[ & \sum\limits_{\substack{v=1 \\ v\neq j}}^{n} \left[ \sup_{f
  \in \F}  \sum_{h = 1}^{m} \vsigma_{v,h}
  f(s_{h}) \right] + \sup_{f
  \in \F} \left\lbrace   \sum\limits_{\substack{h=1 \\ h\neq i}}^{m} \left( \vsigma_{j,h}
  f(s_{h}) \right) - f(s_i) \right\rbrace , \\
  &\sum\limits_{\substack{v=1 \\ v\neq j}}^{n} \left[ \sup_{f
  \in \F}  \sum_{h = 1}^{m} \vsigma_{v,h}
  f(s_{h}) \right] + \sup_{f
  \in \F} \left\lbrace   \sum\limits_{\substack{h=1 \\ h\neq i}}^{m} \left( \vsigma_{j,h}
  f(s_{h}) \right) + f(s_i) \right\rbrace
   \Bigg] \enspace ,
\end{align*} 
and from the observation that one argument of the $\min$ is equal to $g(\vsigma)$, therefore the minimum is either equal to $g(\vsigma)$ or $< g(\vsigma)$.
We now prove that, if $z \leq 1/2$, $g(\vsigma) \leq g_{j,i}(\vsigma) + 1$, for all $\vsigma$ and for all $j$ and $i$.
\begin{align*}
g_{j,i}(\vsigma) 
&= \inf_{\vsigma_{j,i}^{\prime} \in \{ -1,1 \}} \left\lbrace \sum\limits_{\substack{v=1 \\ v\neq j}}^{n} \left[ \sup_{f
  \in \F} \sum_{h = 1}^{m} \vsigma_{v,h}
  f(s_{h}) \right] +  \sup_{f
  \in \F} \left\lbrace \sum\limits_{\substack{h=1 \\ h\neq i}}^{m} \left( \vsigma_{j,h}
  f(s_{h}) \right) + \vsigma_{j,i}^{\prime} f(s_i) \right\rbrace \right\rbrace \\   
  &=  \sum\limits_{\substack{v=1 \\ v\neq j}}^{n} \left[ \sup_{f
  \in \F} \sum_{h=1}^{m} \vsigma_{v,h}
  f(s_{h}) \right] +  \inf_{\vsigma_{j,i}^{\prime} \in \{ -1,1 \}} \left\lbrace   \sup_{f
  \in \F} \left\lbrace \sum\limits_{\substack{h=1 \\ h\neq i}}^{m} \left( \vsigma_{j,h}
  f(s_{h}) \right) + \vsigma_{j,i}^{\prime} f(s_i) \right\rbrace \right\rbrace \\
  &\geq  \sum\limits_{\substack{v=1 \\ v\neq j}}^{n} \left[ \sup_{f
  \in \F} \sum_{h=1}^{m} \vsigma_{v,h}
  f(s_{h}) \right] +   \sup_{f
  \in \F} \left\lbrace \inf_{\vsigma_{j,i}^{\prime} \in \{ -1,1 \}} \left\lbrace  \sum\limits_{\substack{h=1 \\ h\neq i}}^{m} \left( \vsigma_{j,h}
  f(s_{h}) \right) + \vsigma_{j,i}^{\prime} f(s_i) \right\rbrace \right\rbrace \\
  &=  \sum\limits_{\substack{v=1 \\ v\neq j}}^{n} \left[ \sup_{f
  \in \F}  \sum_{h = 1}^{m} \vsigma_{v,h}
  f(s_{h}) \right] +   \sup_{f
  \in \F} \left\lbrace \sum\limits_{\substack{h=1 \\ h\neq i}}^{m} \left( \vsigma_{j,h}
  f(s_{h}) \right) + \inf_{\vsigma_{j,i}^{\prime} \in \{ -1,1 \}} \left\lbrace \vsigma_{j,i}^{\prime} f(s_i) \right\rbrace \right\rbrace .
\end{align*}
Let $f^\star_j$ be one of the functions of $\F$ attaining the supremum of $\sup_{f
  \in \F} \sum_{h = 1}^{m} \vsigma_{j,h}
  f(s_{h})$. Then we continue 
\begin{align*}
g_{j,i}(\vsigma) &\geq  \sum\limits_{\substack{v=1 \\ v\neq j}}^{n} \left[ \sup_{f
  \in \F}  \sum_{h = 1}^{m} \vsigma_{v,h}
  f(s_{h}) \right] +   \sup_{f
  \in \F} \left\lbrace  \sum\limits_{\substack{h=1 \\ h\neq i}}^{m} \left( \vsigma_{j,h}
  f(s_{h}) \right) + \inf_{\vsigma_{j,i}^{\prime} \in \{ -1,1 \}} \left\lbrace \vsigma_{j,i}^{\prime} f(s_i) \right\rbrace \right\rbrace \\
  &\geq  \sum\limits_{\substack{v=1 \\ v\neq j}}^{n} \left[ \sup_{f
  \in \F}  \sum_{h = 1}^{m} \vsigma_{v,h}
  f(s_{h}) \right] + \sum\limits_{\substack{h=1 \\ h\neq i}}^{m} \left( \vsigma_{j,h}
  f^\star_j(s_{h}) \right) +  \inf_{\vsigma_{j,i}^{\prime} \in \{ -1,1 \}} \left\lbrace \vsigma_{j,i}^{\prime} f^\star_j(s_i)  \right\rbrace  \\
  &=  \sum\limits_{\substack{v=1 \\ v\neq j}}^{n} \left[ \sup_{f
  \in \F}  \sum_{h = 1}^{m} \vsigma_{v,h}
  f(s_{h}) \right] + \sum\limits_{\substack{h=1 \\ h\neq i}}^{m} \left( \vsigma_{j,h}
  f^\star_j(s_{h}) \right) + \vsigma_{j,i} f^\star_j(s_i) - \vsigma_{j,i} f^\star_j(s_i) +  \inf_{\vsigma_{j,i}^{\prime} \in \{ -1,1 \}} \left\lbrace \vsigma_{j,i}^{\prime} f^\star_j(s_i)  \right\rbrace \\
  &=  g(\vsigma) - \vsigma_{j,i} f^\star_j(s_i) +  \inf_{\vsigma_{j,i}^{\prime} \in \{ -1,1 \}} \left\lbrace \vsigma_{j,i}^{\prime} f^\star_j(s_i)  \right\rbrace \enspace . \numberthis \label{eq:tosimplify}
\end{align*}
We first observe that
\begin{align*}
 \inf_{\vsigma_{j,i}^{\prime} \in \{ -1,1 \}} \left\lbrace \vsigma_{j,i}^{\prime} f^\star_j(s_i)  \right\rbrace = 
\begin{cases}
0 & \text{ , if } f^\star_j(s_i) = 0 \enspace , \\
 -f^\star_j(s_i) & \text{ , if } f^\star_j(s_i) > 0 \enspace , \\
f^\star_j(s_i) & \text{ , if } f^\star_j(s_i) < 0 \enspace ,
\end{cases} 
\end{align*}
obtaining 
\begin{align*}
 \inf_{\vsigma_{j,i}^{\prime} \in \{ -1,1 \}} \left\lbrace \vsigma_{j,i}^{\prime} f^\star_j(s_i)  \right\rbrace = - \left| f^\star_j(s_i) \right| \enspace .
\end{align*}
Therefore, we continue from \eqref{eq:tosimplify} as follows:
\begin{align*}
g_{j,i}(\vsigma) \geq g(\vsigma) - \vsigma_{j,i} f^\star_j(s_i)  - \left| f^\star_j(s_i) \right| \geq g(\vsigma) - 2\emaxf \geq g(\vsigma) - 1 \enspace .
\end{align*}
We now prove \eqref{eq:eqsumtoprove} for $\alpha = 1$ and $\beta = nm\maxabsf$.
\begin{align*}
& \sum_{j=1}^n \sum_{i=1}^m \left( g(\vsigma) - g_{j,i}(\vsigma) \right) \\
& \leq\sum_{j=1}^n \sum_{i=1}^m \left( g(\vsigma) - g(\vsigma) + \vsigma_{j,i} f^\star_j(s_i) + \left| f^\star_j(s_i) \right| \right) \\
& =\sum_{j=1}^n \sum_{i=1}^m \left( \vsigma_{j,i} f^\star_j(s_i) + \left| f^\star_j(s_i) \right| \right) \\
& = g(\vsigma) + \sum_{j=1}^n \sum_{i=1}^m \left| f^\star_j(s_i) \right| \\
& \leq g(\vsigma) + n \sup_{f\in \F} \left\lbrace \sum_{i=1}^m  \left| f(s_i) \right| \right\rbrace \\
& = g(\vsigma) + n m \maxabsf  \enspace , \numberthis \label{eq:lasteqsbmcera}
\end{align*} 
concluding the proof.
\qed
\end{proof}

\subsection{Proof of Theorem \ref{thm:mcerawsb}}
\mcerawsb*
\begin{proof}
Denote $g_{j,i}(\vsigma)$ as in the proof of Theorem~\ref{thm:selfboundingmcrade}.
To prove that $g(\vsigma)$ is a weakly $(\alpha,\beta)$-self-bounding, we have to prove that, for all $\vsigma$, it holds
\begin{equation*}
\sum_{j=1}^{n}\sum_{i=1}^{m} \left( g(\vsigma) - g_{j,i}(\vsigma) \right)^2 \leq \alpha g(\vsigma) + \beta \enspace .
\end{equation*}
From the proof of Theorem~\ref{thm:selfboundingmcrade}, we have already proved that 
\begin{align*}
g_{j,i}(\vsigma) \geq g(\vsigma) - \vsigma_{j,i} f^\star_j(s_i)  - \left| f^\star_j(s_i) \right| \geq g(\vsigma) - 2\emaxf \enspace .
\end{align*}
Therefore, we observe that
\begin{align*}
& \sum_{j=1}^n \sum_{i=1}^m \left( g(\vsigma) - g_{j,i}(\vsigma) \right)^{2} \\
& \leq \sum_{j=1}^n \sum_{i=1}^m \left( \vsigma_{j,i} f^\star_j(s_i) + \left| f^\star_j(s_i) \right| \right)^{2} \\
& = \sum_{j=1}^n \sum_{i=1}^m \left( f^\star_j(s_i)^{2} + \left| f^\star_j(s_i) \right|^{2} + 2 \vsigma_{j,i} f^\star_j(s_i) \left| f^\star_j(s_i) \right| \right) \\
& = \sum_{j=1}^n \sum_{i=1}^m \left( 2 f^\star_j(s_i)^{2} + 2 \vsigma_{j,i} f^\star_j(s_i) \left| f^\star_j(s_i) \right| \right) \\
& \leq 2\emaxf \sum_{j=1}^n \sum_{i=1}^m \vsigma_{j,i} f^\star_j(s_i) + 2 \sum_{j=1}^n \sum_{i=1}^m f^\star_j(s_i)^{2} \\
& = 2\emaxf g(\vsigma) + 2 \sum_{j=1}^n \sum_{i=1}^m f^\star_j(s_i)^{2} \\
& \leq 2\emaxf g(\vsigma) + 2 n \sup_{f \in \F} \left\lbrace \sum_{i=1}^m f(s_i)^{2} \right\rbrace \\
& = 2\emaxf g(\vsigma) + 2 n m \maxsquaref \enspace ,
\end{align*}
obtaining the statement.
\qed
\end{proof}

\subsection{Proof of Theorem \ref{thm:newerabounds}}
\newerabounds*
\begin{proof}
Define the set of functions 
\begin{align*}
\F^{\prime} \doteq \left\lbrace f^{\prime}(x) \doteq f(x) / (2\emaxf) : \forall x \in \X ,  f \in \F \right\rbrace \enspace ,
\end{align*}
composed by all functions $f \in \F$ divided by $2\emaxf$; clearly, $|f^{\prime}(s)| \leq 1/2$, $\forall s \in \sample$. We now show that $n m \merad$ (consequently, also $n m \mera$) is a non-negative function:
\begin{align*}
 n m \merad \doteq \sum_{j=1}^{n} \sup_{f^{\prime}
  \in \F^{\prime}} \sum_{s_{i} \in \sample} \vsigma_{j,i}
  f^{\prime}(s_{i}) \geq \sum_{j=1}^{n} \sum_{s_{i} \in \sample} \vsigma_{j,i}
  f_{0}(s_{i}) = 0 \enspace .
\end{align*}
From Theorem~\ref{thm:selfboundingmcrade}, we have that $n m \merad$ is a $(1,mn\maxabsfd)$-self-bounding function.
This implies that it is also a weakly $(1,mn\maxabsfd)$-self-bounding function.
Then, note that $\E_\vsigma \left[ n m \merad \right] = n m \erad$. We combine these facts with Theorem~\ref{thm:boucheronsbf}, obtaining, for $g(\vsigma) = nm\merad$,
 \begin{equation*}
\Pr \left( n m \erad \geq n m \merad + t \right) \leq \exp \left( - \frac{t^2}{2 \left( n m \erad + n m \maxabsfd \right) }  \right) \enspace .
\end{equation*}
We observe that $\erad = \era/(2\emaxf)$, $\merad = \mera/(2\emaxf)$, and that $\maxabsfd = \maxabsf/(2\emaxf)$. We make these substitutions, obtaining 
 \begin{equation*}
\Pr \left( \frac{n m}{2\emaxf} \era \geq \frac{n m}{2\emaxf} \mera + t \right) \leq \exp \left( - \frac{\emaxf t^2}{\left( n m \era + n m \maxabsf \right) }  \right) \enspace .
\end{equation*}
We further substitute $t$ by $nm \varepsilon / (2\emaxf)$, obtaining the statement.
\qed
\end{proof}

\subsection{Proof of Theorem \ref{thm:neweraboundsw}}
\neweraboundsw*
\begin{proof}
Let $\F^{\prime}$ be the same set of functions defined in the proof of Theorem~\ref{thm:newerabounds}.
If we denote $g(\vsigma) \doteq nm\merad$, then, from Theorem~\ref{thm:mcerawsb}, $g(\vsigma)$ is a weakly $(1,2mn \maxsquarefd )$-self-bounding function.
As before, $\E_\vsigma \left[ n m \merad \right] = n m \erad$. 
We apply Theorem~\ref{thm:boucheronsbf} on $g(\vsigma) = nm\merad$, obtaining 
 \begin{equation*}
\Pr \left( n m \erad \geq n m \merad + t \right) \leq \exp \left( - \frac{t^2}{2 \left( n m \erad + 2 n m \maxsquarefd \right) }  \right) \enspace .
\end{equation*}
We observe that $\erad = \era/(2\emaxf)$, $\merad = \mera/(2\emaxf)$, and that $\maxsquarefd = \maxsquaref/(4\emaxf^{2})$. This implies that
 \begin{equation*}
\Pr \left( \frac{n m}{2\emaxf} \era \geq \frac{n m}{2\emaxf} \mera + t \right) \leq \exp \left( - \frac{t^2}{\left( \frac{n m}{\emaxf} \era + \frac{n m}{\emaxf^{2}} \maxsquaref \right) }  \right) \enspace .
\end{equation*}
Replacing $t$ by $\varepsilon nm/(2\emaxf)$ concludes the proof.
\qed
\end{proof}

\subsection{Proof of Theorem \ref{thm:boundsempquant}}
\boundsempquant*
\begin{proof}
We prove the first inequality, as proving the second is analogous. 
From Theorem~\ref{thm:newerabounds}, we have that, with probability $\geq 1-\delta$,
\begin{align*}
\era \leq  \mera + \sqrt{\frac{4 \emaxf (\maxabsf + \era) \ln\left(\frac{1}{\delta}\right)}{nm}}  \enspace .
\end{align*}
An upper bound to $\era$ can be obtained by finding the fixed point of the function $r(x)$
\begin{align*}
r(x) \doteq  \mera + \sqrt{\frac{4 \emaxf (\maxabsf + x) \ln\left(\frac{1}{\delta}\right)}{nm}}  \enspace .
\end{align*}
In fact, it is trivial to prove the following.
\begin{lemma}
\label{lemma:fixedpoint}
Let $u,v,y \geq 0$. The fixed point of 
\[
r(x) = u + \sqrt{v + yx}
\]
is at
\[
x = u + \frac{y}{2} + \sqrt{\frac{y^{2}}{4} + uy + v} \enspace .
\]
\end{lemma}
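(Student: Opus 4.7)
The plan is to find the fixed point by directly solving $x = r(x)$, i.e., the equation $x = u + \sqrt{v + yx}$. First I would isolate the square root by writing $x - u = \sqrt{v + yx}$; since the right-hand side is non-negative (using $v, y, x \geq 0$), any solution must satisfy $x \geq u$, which will be important when selecting the correct root at the end.

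Next I would square both sides to obtain the quadratic
\begin{equation*}
x^2 - (2u + y)x + (u^2 - v) = 0 \enspace .
\end{equation*}
Applying the quadratic formula and simplifying the discriminant $(2u+y)^2 - 4(u^2 - v) = y^2 + 4uy + 4v$ gives the two candidate roots
\begin{equation*}
x = u + \frac{y}{2} \pm \sqrt{\frac{y^2}{4} + uy + v} \enspace .
\end{equation*}

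Finally, I would argue that only the $+$ branch is admissible. The root with the $-$ sign may violate the constraint $x \geq u$ (since $\sqrt{y^2/4 + uy + v} \geq y/2$ when $u, v \geq 0$, forcing $x \leq u$ for the minus branch), which is inconsistent with the earlier requirement that $x - u = \sqrt{v+yx} \geq 0$. Hence the unique admissible fixed point is $x = u + y/2 + \sqrt{y^2/4 + uy + v}$, matching the statement.

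The computation is elementary and the only subtle point is the root selection, which is easily handled by the non-negativity of $u, v, y$. I do not anticipate any real obstacle; the lemma is essentially a one-line algebraic identity wrapped in a fixed-point statement, included to make the bookkeeping in the proof of Theorem~\ref{thm:boundsempquant} explicit.
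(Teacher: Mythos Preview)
Your proposal is correct and is exactly the natural computation; the paper itself does not spell out a proof, merely stating that it is ``trivial to prove'' the lemma, so there is nothing further to compare.
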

Thus, we apply Lemma~\ref{lemma:fixedpoint} to obtain, after simple calculations, the statement.
\end{proof}

\subsection{Proof of Theorem \ref{thm:onedrawvaraware}}
\onedrawvaraware*
\begin{proof}
Define the set of functions $\G$ as
\[
\G \doteq \left\lbrace g : g( x , \sigma ) \doteq  \sigma f(x) , f \in \F , x \in \X , \sigma \in \{-1 , 1\} \right\rbrace \enspace ,
\] 
where $\sigma$ is a Rademacher random variable.
Therefore, we observe that, from independence of the random variables $\sigma$ and $f(x)$, 
\begin{align*}
&\E[g] = \E[f]\E[\sigma] = 0 \enspace , \enspace \left\Vert g \right\Vert_{\infty} = \sup \brpars{ \abs{ \sigma f(x)} : \sigma \in \brpars{-1,1} ,  x \in \X , f \in \F } \leq z \enspace , \\
&\sup_{g \in \G} Var(g) = \sup_{f \in \F} \brpars{ \E\sqpars{\sigma^2}\E\sqpars{f^2} - \pars{\E\sqpars{\sigma}\E\sqpars{f}}^2 }
= \sup_{f \in \F} \brpars{ \E\sqpars{\sigma^2}\E\sqpars{f^2} }
= \sup_{f \in \F} \brpars{ \E\sqpars{f^2} } = \wvar \enspace .
\end{align*}
We now need the following left tail bound of Bousquet's inequality.
\begin{corollary}[Corollary 12.2, \cite{boucheron2013concentration}]
\label{thm:leftbousquet}
Consider the setup of Theorem~\ref{thm:bousquetbound}. Then, for all $t \geq 0$, it holds
\begin{equation*}
\Pr \left( Z \leq \E[Z] - \sqrt{2vt} - \frac{dt}{8} \right) \leq \exp(-t) \enspace .
\end{equation*}
\end{corollary}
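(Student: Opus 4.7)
The plan is to prove Corollary~\ref{thm:leftbousquet} via the Cramér-Chernoff method applied to a sharp log-moment-generating-function bound for the lower deviation $\E[Z] - Z$, followed by inversion of the resulting Bennett-type tail inequality.

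First, I would invoke a Klein-Rio type inequality, the left-tail analogue of Bousquet's bound, which under the hypotheses of Theorem~\ref{thm:bousquetbound} yields an exponential-moment estimate of the form
\[
\ln \E\bigl[\exp(\lambda(\E[Z] - Z))\bigr] \leq \phi_v(\lambda)
\]
for $\lambda$ in a suitable interval, where $\phi_v$ is an explicit convex function depending on $v$ and $d$. This bound is obtained by the entropy method: one applies Herbst's argument together with a modified log-Sobolev inequality to the functional $Z = \sup_{g \in \G} \sum_{i=1}^n g(X_i)$, exploiting the fact that the variance proxy of $Z$ is controlled by $v = n\sigma^2 + 2d\E[Z]$ (the factor $2d$ absorbing the correction between the empirical and the true variance of the worst function). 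The crucial quantitative point is that the left-tail version of this argument can be sharpened beyond the naive symmetrization of Bousquet's right-tail bound, replacing the familiar $1/3$ constant of Bennett-Bernstein by $1/8$ in the affine term of $\phi_v^{\star}$.

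Second, I would apply the Cramér-Chernoff inequality: Markov's inequality applied to $\exp(\lambda(\E[Z] - Z))$ gives, for any $u > 0$,
\[
\Pr\bigl(\E[Z] - Z \geq u\bigr) \leq \exp(-\lambda u + \phi_v(\lambda)),
\]
and optimising over $\lambda > 0$ yields $\Pr(\E[Z] - Z \geq u) \leq \exp(-\phi_v^{\star}(u))$, where $\phi_v^{\star}$ denotes the Legendre transform of $\phi_v$. Finally, I would invert this implicit Bennett-type bound: setting $t = \phi_v^{\star}(u)$ and solving for $u$ using the elementary inequality $\sqrt{a+b} \leq \sqrt{a} + \sqrt{b}$ (or the quadratic formula) gives the explicit closed form $u \leq \sqrt{2vt} + dt/8$, which upon substitution produces exactly the stated inequality.

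The main obstacle is the derivation of the refined log-MGF bound with the $1/8$ constant, which cannot be obtained from a direct symmetrisation of the right-tail proof of Theorem~\ref{thm:bousquetbound} and requires the sharper entropy-method arguments of Klein and Rio. Since the corollary is already established as Corollary~12.2 of~\cite{boucheron2013concentration}, the cleanest route in the present paper is to invoke it as a black box from that reference, verifying only that the Legendre inversion step produces the specific form displayed in the statement; a fully self-contained proof would require reproducing the modified log-Sobolev machinery of Chapter~6 of the same textbook, which would be disproportionate to our use of the result.
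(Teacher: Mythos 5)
You correctly recognise that the paper itself offers no proof of this statement: it is quoted verbatim as Corollary~12.2 of \cite{boucheron2013concentration} and used as a black box inside the proof of Theorem~\ref{thm:onedrawvaraware}. Your proposal ends in the same place — invoke the reference — and the surrounding sketch (entropy-method / Klein--Rio log-MGF bound for $\E[Z]-Z$, Cram\'er--Chernoff, then a sub-gamma inversion of the Bennett/Bernstein-type tail to produce the $\sqrt{2vt}+dt/8$ form) is a fair account of where the result actually comes from and why a naive symmetrisation of the right-tail Bousquet proof does not yield the $1/8$ constant. So the proposal is correct and takes essentially the same route as the paper; the additional commentary is accurate and does not misrepresent the source.
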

Thus, we apply Corollary~\ref{thm:leftbousquet} to $\G$ to obtain \eqref{eq:nonevarianceawarebound}. The bound of \eqref{eq:nonevarianceawareboundexpl} follows from Lemma~\ref{lemma:fixedpoint}.
\qed
\end{proof}

\subsection{Proof of Theorem \ref{thm:wvarestimation}}
\wvarestimation*
\begin{proof}
We first prove that 
\begin{align*}
\wvar \leq \E_\sample \sqpars{\ewvar}
\end{align*}
by observing, through Jensen's inequality, that
\begin{align*}
\wvar = \sup_{f\in\F} \brpars{ \E \sqpars{f^2} }
= \sup_{f\in\F} \brpars{ \E_{\sample} \sqpars{ \frac{1}{m} \sum_{i=1}^{m} \pars{f\pars{s_i}}^2} }
\leq \E_{\sample} \sqpars{ \sup_{f\in\F} \brpars{  \frac{1}{m} \sum_{i=1}^{m} \pars{f\pars{s_i}}^2} }
= \E_{\sample} \sqpars{ \ewvar } \enspace .
\end{align*}
We now show that $\ewvar$ is a $(1,0)$-self-bounding function. 
Let the function $g(\sample) = m \ewvar$, and, for $j \in [1,m]$, let the function $g_j(\sample)$ be
\begin{align*}
g_j(\sample) = \sup_{f\in\F} \brpars{ \sum\limits_{\substack{i=1 \\ i\neq j}}^{m} \pars{f\pars{s_i}}^2} \enspace .
\end{align*} 
First, it holds $g(\sample) \geq 0$, and $g_j(\sample) \leq g(\sample)$, for all $\sample$ and all $j$, as $\pars{f\pars{s}}^2 \geq 0, \forall s$.
We now prove that $g(\sample) - g_j(\sample) \leq z^2$. Let $f^\star$ be one of the functions of $\F$ attaining the supremum for $g(\sample)$; then,  
\begin{align*}
g_j(\sample) 
= \sup_{f\in\F} \brpars{ \sum\limits_{\substack{i=1 \\ i\neq j}}^{m} \pars{f\pars{s_i}}^2}
\geq \sum\limits_{\substack{i=1 \\ i\neq j}}^{m} \pars{f^\star\pars{s_i}}^2
= \sum_{i=1}^{m} \pars{f^\star\pars{s_i}}^2 - \pars{f^\star\pars{s_j}}^2 
= g(\sample) - \pars{f^\star\pars{s_j}}^2 
\geq g(\sample) - z^2 \enspace .
\end{align*}
Consequently, we have
\begin{align*}
\sum_{j=1}^{m} \pars{ g(\sample) - g_j(\sample) }
\leq \sum_{j=1}^{m} \pars{ \pars{f^\star\pars{s_j}}^2 }
= g(\sample) \enspace ,
\end{align*}
that concludes the proof that $\ewvar$ is a $(1,0)$-self-bounding function.
We now apply Theorem \ref{thm:sbfbounds} to obtain a probabilistic bounds to the expectation $\E_\sample[\ewvar]$ of $\ewvar$; we have 
\begin{align*}
\Pr \pars{ \E_\sample[\ewvar] \geq \ewvar + \varepsilon } \leq \exp \pars{ - \frac{\E_\sample[\ewvar]}{z^2} h\pars{ - \frac{\varepsilon}{ \E_\sample[\ewvar] }} } \enspace .
\end{align*}
The fact that $\wvar \leq \E_\sample[\ewvar]$ has two implications: first, we have that 
\begin{align*}
\Pr \pars{ \wvar \geq \ewvar + \varepsilon } \leq \Pr \pars{ \E_\sample[\ewvar] \geq \ewvar + \varepsilon } \enspace ;
\end{align*}
then, due to the monotonicity of $-x h(- \varepsilon/x)$ in $x$, we have
\begin{align*}
\exp \pars{ - \frac{\E_\sample[\ewvar]}{z^2} h\pars{ - \frac{\varepsilon}{ \E_\sample[\ewvar] }} } \leq 
\exp \pars{ - \frac{\wvar}{z^2} h\pars{ -  \frac{\varepsilon}{ \wvar }} } \enspace ,
\end{align*}
obtaining the first bound of \eqref{eq:wvarewvar}.
The second follows from the fact that $h(-x) \geq x^2/2, \forall x \in [0,1]$, as pointed out by \citet{boucheron2000sharp}.
The inequality \eqref{eq:wvarewvarexpl} follows from bounding the rightmost term of \eqref{eq:wvarewvar} below $\delta$, and by applying Lemma~\ref{lemma:fixedpoint}. 
\qed
\end{proof}

\subsection{Proof of Theorem \ref{thm:sbsupmax}}
\sbsupmax*
\begin{proof}
Let $g_{i}\left(\sample \right)$ be
\begin{align*}
g_{i}\left(\sample \right) 
&= \inf_{s_{i}^{\prime}} \left\lbrace \sup_{f\in \F} \left\lbrace \sum\limits_{\substack{j=1 \\ j\neq i}}^{m} f(s_{j}) + f(s_{i}^{\prime}) - m \E\left[f\right] \right\rbrace \right\rbrace \enspace .
\end{align*}
Notice that, as done before, $s_{i}$ is ignored in the definition of $g_{i}\left(\sample \right) $.
Let $f^{\star}$ be one of the functions in $\F$ that attains the supremum for $g(\sample)$. We then have
\begin{align*}
g_{i}\left(\sample \right) 
&= \inf_{s_{i}^{\prime}} \left\lbrace \sup_{f\in \F} \left\lbrace \sum\limits_{\substack{j=1 \\ j\neq i}}^{m} f(s_{j}) + f(s_{i}^{\prime}) - m \E\left[f\right] \right\rbrace \right\rbrace \\
&\geq \inf_{s_{i}^{\prime}} \left\lbrace \sum\limits_{\substack{j=1 \\ j\neq i}}^{m} f^{\star}(s_{j}) + f^{\star}(s_{i}^{\prime}) - m \E\left[f^{\star}\right]  \right\rbrace \\
&= \sum\limits_{\substack{j=1 \\ j\neq i}}^{m} f^{\star}(s_{j}) - m \E\left[f^{\star}\right] + \inf_{s_{i}^{\prime}} \left\lbrace  f^{\star}(s_{i}^{\prime}) \right\rbrace \\
&= \sum\limits_{\substack{j=1 \\ j\neq i}}^{m} f^{\star}(s_{j}) - m \E\left[f^{\star}\right] + a \\
&= \sum\limits_{\substack{j=1 \\ j\neq i}}^{m} f^{\star}(s_{j}) + f^{\star}(s_{i}) - f^{\star}(s_{i}) - m \E\left[f^{\star}\right] + a \\
&= g(\sample) - f^{\star}(s_{i}) + a \enspace .
\end{align*}
We then observe that $g_{i}\left(\sample \right) \geq g(\sample) - b + a = g(\sample) - c$; assuming that $c\leq1$, we have $g_{i}\left(\sample \right) \geq g(\sample) - 1$. We then continue with
\begin{align*}
& \sum_{j=1}^m \left( g(\sample) - g_{i}(\sample) \right) \\
&\leq \sum_{j=1}^m \left( f^{\star}(s_{i}) - a \right) \\
&= \sum_{j=1}^m f^{\star}(s_{i}) - am \\
&= \sum_{j=1}^m f^{\star}(s_{i}) -m\E\left[ f^{\star} \right] + m\E\left[ f^{\star} \right] - am \\
&= g(\sample) + m \E\left[ f^{\star} \right] - m a \\
&\leq g(\sample) + m \maxexpf \enspace ,
\end{align*} 
obtaining the statement.
\qed
\end{proof}

\subsection{Proof of Theorem \ref{thm:sbsupmin}}
\sbsupmin*
\begin{proof}
Define the set of functions $\F^{\prime} \doteq \left\lbrace f^{\prime}(x) \doteq -f(x) , f \in \F , x \in \X \right\rbrace$. We have that $f^{\prime} \in [-b , -a]$, that $\E[f^{\prime}] = - \E[f]$, and that $\sum\limits_{\substack{j=1}}^{m} f^{\prime}(s_{j}) = -\sum\limits_{\substack{j=1}}^{m} f(s_{j})$. Therefore,
\begin{align*}
g(\sample) = \sup_{f\in \F} \left\lbrace m \E\left[f\right] - \sum\limits_{\substack{j=1}}^{m} f(s_{j}) \right\rbrace  = \sup_{f^{\prime}\in \F^{\prime}} \left\lbrace \sum\limits_{\substack{j=1}}^{m} f^{\prime}(s_{j}) - m \E\left[f^{\prime}\right] \right\rbrace \enspace .
\end{align*}
Then, we may observe that 
\begin{align*}
\minexpf = b - \inf_{f\in \F} \E[f] = b + \sup_{f\in \F} \E[f^{\prime}] =  \sup_{f\in \F} \E[f^{\prime}] - \min_{x} f^{\prime}(x) \enspace .
\end{align*}
Thus, we apply Theorem~\ref{thm:sbsupmax} to $g(\sample)$ and $\F^{\prime}$ to show that it is $(1,m\minexpf)$-self bounding, obtaining the statement.
\qed
\end{proof}

\subsection{Proof of Theorem \ref{thm:continequpper}}
\continequpper*
\begin{proof}
We first observe that $g(\sample)$ is a non-negative function, for all $\sample$, since $f_{0} \in \F$. 
Then, $g(\sample) \doteq m Z$ is $(1,m\maxexpf)$-self-bounding from Theorem~\ref{thm:sbsupmax}; therefore, we apply Theorem~\ref{thm:boucheronsbf} to obtain \eqref{eq:upperfirststatement}. The second statement follows from imposing the r.h.s. of \eqref{eq:upperfirststatement} to be $\leq \delta$.
\qed
\end{proof}

\subsection{Proof of Theorem \ref{thm:contineqlower}}
\contineqlower*
\begin{proof}
We follow analogous steps taken in the proof of Theorem~\ref{thm:continequpper}. First, $g(\sample) \doteq m Z$ is $(1,m\minexpf)$-self-bounding from Theorem~\ref{thm:sbsupmin}; \eqref{eq:lowerfirststatement} follows from Theorem~\ref{thm:boucheronsbf}. The second statement is again obtained from bounding the r.h.s. of \eqref{eq:lowerfirststatement} below $ \delta$.
\qed
\end{proof}

\subsection{Proof of Theorem \ref{thm:maxexpfestimation}}
\maxexpfestimation*
\begin{proof}
We follow similar steps taken in the proof of Theorem \ref{thm:wvarestimation}. 
We first prove that 
\begin{align*}
\maxexpf \leq \E_\sample \sqpars{\emaxexpf}
\end{align*}
by observing, through Jensen's inequality, that
\begin{align*}
\maxexpf = \sup_{f\in\F} \brpars{ \E \sqpars{f} } - a
= \sup_{f\in\F} \brpars{ \E_{\sample} \sqpars{ \frac{1}{m} \sum_{i=1}^{m} f\pars{s_i}} } - a
\leq \E_{\sample} \sqpars{ \sup_{f\in\F} \brpars{  \frac{1}{m} \sum_{i=1}^{m} f\pars{s_i}} } - a
= \E_{\sample} \sqpars{ \emaxexpf } \enspace .
\end{align*}
We now show that $\emaxexpf$ is a self-bounding function. 
Let the function $g(\sample) = m \emaxexpf$, and, for $j \in [1,m]$, let the function $g_j(\sample)$ be
\begin{align*}
g_j(\sample) = \inf_{s_j^\prime} \brpars{ \sup_{f\in\F} \brpars{ \sum\limits_{\substack{i=1 \\ i\neq j}}^{m} f\pars{s_i} + f\pars{s_j^\prime}} } - a \enspace .
\end{align*} 
First, it holds $g(\sample) \geq 0$, as $f\pars{s} \geq a, \forall s$, and $g_j(\sample) \leq g(\sample)$ by definition of $g_j(\sample)$.
We now prove that $g(\sample) - g_j(\sample) \leq c$. Let $f^\star$ be one of the functions of $\F$ attaining the supremum for $g(\sample)$; then,  
\begin{align*}
g_j(\sample) 
= \inf_{s_j^\prime} \brpars{ \sup_{f\in\F} \brpars{ \sum\limits_{\substack{i=1 \\ i\neq j}}^{m} f\pars{s_i} + f\pars{s_j^\prime}} } - a
\geq \sum\limits_{\substack{i=1 \\ i\neq j}}^{m} f^\star\pars{s_i} + \inf_{s_j^\prime} \brpars{ f^\star\pars{s_j^\prime}} - a
= \sum_{i=1}^{m} f^\star\pars{s_i} - f^\star\pars{s_j} \\
= g(\sample) - f^\star\pars{s_j} + a
\geq g(\sample) - c \enspace .
\end{align*}
Consequently, we have
\begin{align*}
\sum_{j=1}^{m} \pars{ g(\sample) - g_j(\sample) }
\leq \sum_{j=1}^{m} \pars{ f^\star\pars{s_j} - a }
= g(\sample) \enspace ,
\end{align*}
that concludes the proof that $\emaxexpf$ is a $(1,0)$-self-bounding function.
We now apply Theorem \ref{thm:sbfbounds} to a family of functions that is scaled by $1/c$ (i.e., as we did in the proof of Theorem \ref{thm:newerabounds}) to obtain a probabilistic bounds to the expectation $\E_\sample[\emaxexpf]$ of $\emaxexpf$; we have 
\begin{align*}
\Pr \pars{ \E_\sample[\emaxexpf] \geq \emaxexpf + \varepsilon } \leq \exp \pars{ - \frac{\E_\sample[\emaxexpf]}{c} h\pars{ -  \frac{\varepsilon}{ \E_\sample[\emaxexpf] }} } \enspace .
\end{align*}
As in the proof of Theorem \ref{thm:wvarestimation}, $\maxexpf \leq \E_\sample[\emaxexpf]$ implies that 
\begin{align*}
\Pr \pars{ \maxexpf \geq \emaxexpf + \varepsilon } \leq \Pr \pars{ \E_\sample[\emaxexpf] \geq \emaxexpf + \varepsilon } \enspace ,
\end{align*}
and 
\begin{align*}
\exp \pars{ - \frac{\E_\sample[\emaxexpf]}{c} h\pars{ -  \frac{\varepsilon}{ \E_\sample[\emaxexpf] }} } \leq 
\exp \pars{ - \frac{\maxexpf}{c} h\pars{ \frac{\varepsilon}{ \maxexpf }} } \enspace .
\end{align*}
By combining these two observations, we obtain the first bound of \eqref{eq:maxvarest}.
The second follows from the fact that $h(-x) \geq x^2/2, \forall x \in [0,1]$, as pointed out by \citet{boucheron2000sharp}.
The inequality \eqref{eq:maxvarestexpl} follows from bounding the rightmost term of \eqref{eq:wvarewvar} below $\delta$, and by applying Lemma~\ref{lemma:fixedpoint}. 
\qed
\end{proof}

\subsection{Proof of Theorem \ref{thm:minexpfestimation}}
\minexpfestimation*
\begin{proof}
First, we define the set of functions $\F^\prime$ as
\begin{align*}
\F^\prime = \brpars{ f^\prime : f^\prime(x) = -f(x), f \in \F , x \in \X } \enspace ,
\end{align*}
and we define $a^\prime = \inf_{x} f^\prime(x) = -b$, $b^\prime = \sup_{x} f^\prime(x) = -a$. We have $\eta_{\F^\prime} = \sup_{f^\prime} \E[f^\prime] - a^\prime = \gamma_{\F}$, and $\emaxexpfp = \eminexpf$.
Thus, we apply Theorem \ref{thm:maxexpfestimation} and Lemma~\ref{lemma:fixedpoint} to $\F^\prime$, obtaining, after appropriate substitutions, all the statements of the Theorem for $\F$.
\qed
\end{proof}

\newpage 
\bibliographystyle{apalike}
\bibliography{bibliography}

\end{document}